\newif\ifconf
\newif\ifproofs
\newif\ifmatexp
\newtheorem{theorem}{Theorem}
\newtheorem{lemma}{Lemma}
\newtheorem{prop}[theorem]{Proposition}
\DeclareMathOperator{\tr}{tr}
\DeclareMathOperator{\sign}{sign}
\DeclareMathOperator*{\argmin}{arg\,min}
\DeclarePairedDelimiter{\abs}{\lvert}{\rvert}
\DeclarePairedDelimiter{\norm}{\lVert}{\rVert}
\newcommand\given[1][]{\:#1\vert\:}
\newcommand{\EE}[1]{\mathbb{E}\left[#1\right]}
\newcommand{\cG}{\mathcal{G}}
\newcommand{\cV}{\mathcal{V}}
\newcommand{\cE}{\mathcal{E}}
\newcommand{\cP}{\mathcal{P}}
\newcommand{\cZ}{\mathcal{Z}}
\newcommand{\cX}{\mathcal{X}}
\newcommand{\cA}{\mathcal{A}}
\newcommand{\bX}{\mathbf{X}}
\newcommand{\ones}{\boldsymbol{1}}
\title{DAGs with No Fears: A Closer Look at Continuous Optimization for Learning Bayesian Networks}
\author{%
  Dennis Wei\\
  IBM Research\\
  \texttt{dwei@us.ibm.com} \\
  \And
  Tian Gao\\
  IBM Research\\
  \texttt{tgao@us.ibm.com} \\
  \And
  Yue Yu\\
  Lehigh University\\
  \texttt{yuy214@lehigh.edu} \\
}
\begin{document}

\maketitle

\begin{abstract}
  This paper re-examines a continuous optimization framework dubbed NOTEARS for learning Bayesian networks. We first generalize existing algebraic characterizations of acyclicity to a class of matrix polynomials. Next, focusing on a one-parameter-per-edge setting, it is shown that the Karush-Kuhn-Tucker (KKT) optimality conditions for the NOTEARS formulation cannot be satisfied except in a trivial case, which explains a behavior of the associated algorithm. We then derive the KKT conditions for an equivalent reformulation, show that they are indeed necessary, and relate them to explicit constraints that certain edges be absent from the graph. If the score function is convex, these KKT conditions are also sufficient for local minimality despite the non-convexity of the constraint. Informed by the KKT conditions, a local search post-processing algorithm is proposed and shown to substantially and universally improve the structural Hamming distance of all tested algorithms, typically by a factor of 2 or more. Some combinations with local search are both more accurate and more efficient than the original NOTEARS.
\end{abstract}

\section{Introduction}
\label{sec:intro}

Bayesian networks are directed probabilistic graphical models used to model joint probability distributions of data in many applications \cite{ott2004finding,glymour1999computation}. Automatic discovery of their directed acyclic graph (DAG) structure is important to research areas from causal inference to biology. However, DAG structure learning is in general an NP-hard problem \cite{chickering12}. Many learning algorithms have been proposed to circumvent exhaustive search in the discrete space of DAGs, including those for discrete variables \cite{chickering02,acid2005learning, silander06,  jaakkola2010, cussens11, Yuan13learning, gao2018parallel} and continuous variables \cite{buhlmann2014cam,MMPCcor}.  

Recently, \citet{zheng2018dags} proposed a \emph{continuous} optimization formulation, referred to as NOTEARS, in which acyclicity of the graph is enforced by a trace of matrix exponential constraint on a weighted adjacency matrix. Several works have since successfully extended the formulation to nonlinear and nonparametric models \cite{yu2019dag,lachapelle2020gradient,Kalainathan2018,zheng2020learning}.

This paper takes further steps toward fulfilling the promise of \cite{zheng2018dags} in opening the door to 
continuous optimization techniques for score-based structure learning. We contribute in particular to theoretical understanding of this framework, leading to significant algorithmic improvements. 

First, in Section~\ref{sec:chars}, the matrix exponential constraint of \cite{zheng2018dags} and the matrix polynomial constraint of \cite{yu2019dag}
are generalized to a class of matrix polynomials with positive coefficients whose traces characterize acyclicity. We also provide a characterization involving the gradient of functions in this class, which is not only essential to proving later results but also has an intuitive graphical interpretation.

In Section~\ref{sec:theory:sq}, we revisit the NOTEARS formulation of \cite{zheng2018dags} in which a weighted adjacency matrix is obtained by element-wise squaring of the parameter matrix. It is shown that the Karush-Kuhn-Tucker (KKT) optimality conditions for this constrained optimization cannot be satisfied except in a trivial case. This negative result is somewhat surprising given the empirical success of the augmented Lagrangian algorithm of \cite{zheng2018dags}, and we use the result to explain why the algorithm does not converge to an exactly acyclic solution even when the penalty parameters are very high.

In Section~\ref{sec:theory:abs}, we consider an equivalent reformulation in which the adjacency matrix is given by 
the absolute value of the parameter matrix, motivated in part by the failure to satisfy KKT conditions in Section~\ref{sec:theory:sq}, and 
in part by the connection between the $\ell_1$ norm and sparsity. We show that the KKT conditions for this reformulation are indeed necessary conditions of optimality, i.e.~they are satisfied by all local minima, although even here common constraint qualification methods turn out to fail. If the score function is convex, then the KKT conditions are also sufficient for local minimality, despite the non-convexity of the constraint. We then relate the KKT conditions to the optimality conditions for score optimization subject to explicit edge absence constraints. The KKT conditions can thus be understood through edge absences: together these must be sufficient to ensure acyclicity, but each absence must also be necessary in preventing the completion of 
a cycle. 

The theoretical development of Section~\ref{sec:theory:abs} naturally suggests two algorithms: an augmented Lagrangian algorithm as in \cite{zheng2018dags} with an absolute value adjacency matrix instead of quadratic, and a local search algorithm, KKTS, informed by the KKT conditions and proven to satisfy them. KKTS (a) adds edge absence constraints to break cycles, (b) removes constraints that are unnecessary, and (c) swaps constraints (reverses edges) to combat non-convexity.
We find in Section~\ref{sec:expt} that neither of these two algorithms yields state-of-the-art accuracy by itself. However, 
when combined with other algorithms, KKTS substantially reduces structural Hamming distance (SHD) with respect to the true graph, typically by a factor of at least 2. Moreover, this improvement is consistent across dimensions and base algorithms. In the case of NOTEARS, new state-of-the-art accuracy is obtained, while other combinations can outperform NOTEARS and take less time.

\paragraph{More on related work} Bayesian network structure learning has long been an active research area. Constraint- and score-based methods utilize independence tests and graph scores respectively to learn the DAG structure. Optimization methods such as greedy search \cite{chickering02}, dynamic programming \cite{koivisto2004exact}, branch and bound \cite{campos2009}, A* search \cite{Yuan13learning,xiang2013A*}, local-to-global search 
\cite{gao2017local} as well as approximation methods \cite{scanagatta2015learning} have all been proposed.  As mentioned, this paper is most closely related to  
the continuous framework of \cite{zheng2018dags} and subsequent works \cite{yu2019dag,zheng2020learning}. Regression-based methods for DAG learning, without the matrix exponential constraint, have also been carefully studied  \cite{L1MB,buhlmann2014cam,aragam2015concave,gu2019penalized}.

\section{Characterizations of acyclicity}
\label{sec:chars}

In this first section, we provide algebraic characterizations of acyclicity for a directed graph in terms of its adjacency matrix. For a directed graph $\cG = (\cV, \cE)$ with vertices $\cV = \{1,\dots,d\}$ and directed edges $(i,j) \in \cE$, a non-negative matrix $A$ is a (weighted) adjacency matrix for $\cG$ if $A_{ij} > 0$ for $(i,j) \in \cE$ and $A_{ij} = 0$ otherwise. 

We consider a class of functions $h(A)$ corresponding to matrix polynomials of degree $d$ with positive coefficients,
\[
P(A) = c_0 I + c_1 A + \dots + c_d A^d = \sum_{p=0}^d c_p A^p, \qquad c_p > 0, \quad p = 1,\dots,d,
\]
from which we define 
\begin{equation}\label{eqn:h}
h(A) = \tr(P(A)) - c_0 d = \sum_{p=1}^d c_p \tr(A^p).
\end{equation}
This class includes the function $h(A) = \tr\bigl((I + A/d)^d\bigr) - d$ from \cite{yu2019dag}, which corresponds to $c_p = \binom{d}{p} / d^p$, and the trace of matrix exponential from \cite{zheng2018dags},
\begin{equation}\label{eqn:hExp}
h(A) = \tr(e^A) - d = \sum_{p=1}^{\infty} \frac{\tr(A^p)}{p!}.
\end{equation}
Although \eqref{eqn:hExp} appears to be an infinite power series, it can be rewritten as a finite series with no powers higher than $d$ using the Cayley-Hamilton theorem \cite{horn_johnson_2012}, which equates $A^d$ to a linear combination of $I, A, \dots, A^{d-1}$, and similarly for all higher powers of $A$.

Any function $h(A)$ of the form 
in \eqref{eqn:h} can characterize acyclicity, as stated below.
We defer all proofs to Appendix~\ref{sec:proofs}.
\begin{theorem}\label{thm:h}
A directed graph $\cG$ is acyclic if and only if its (weighted) adjacency matrix satisfies $h(A) = 0$ for any $h$ defined by \eqref{eqn:h}.
\end{theorem}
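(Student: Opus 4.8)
The plan is to connect each trace $\tr(A^p)$ to closed walks in $\cG$ and then exploit two facts simultaneously: the entrywise non-negativity of $A$ and the strict positivity of the coefficients $c_p$. First I would recall the standard combinatorial interpretation of matrix powers: the $(i,i)$ entry of $A^p$ equals the sum, over all directed walks $i \to i_1 \to \cdots \to i_{p-1} \to i$ of length $p$, of the product $A_{i i_1} A_{i_1 i_2} \cdots A_{i_{p-1} i}$ of edge weights. Because $A$ is non-negative, every such term is non-negative, so $\tr(A^p) = \sum_i (A^p)_{ii} \geq 0$; moreover $\tr(A^p) > 0$ precisely when $\cG$ contains at least one closed walk of length $p$ all of whose edges are present, making the weight product strictly positive.

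For the forward direction, I would argue that if $\cG$ is acyclic then it admits no closed walk of any length $p \geq 1$, since any closed walk must repeat a vertex and therefore contain a cycle. Hence $(A^p)_{ii} = 0$ for every $i$ and every $p \geq 1$, giving $\tr(A^p) = 0$ for all $p$, and thus $h(A) = \sum_{p=1}^d c_p \tr(A^p) = 0$.

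For the converse, suppose $h(A) = 0$. Since each $c_p > 0$ and each $\tr(A^p) \geq 0$ by the observation above, the vanishing of the sum forces $\tr(A^p) = 0$ for every $p = 1,\dots,d$. I would then assume toward a contradiction that $\cG$ contains a cycle. Any cycle in a graph on $d$ vertices contains a \emph{simple} directed cycle, which visits distinct vertices and hence has some length $k$ with $1 \leq k \leq d$. This simple cycle is a closed walk of length $k$ with all edges present, so $\tr(A^k) > 0$, contradicting $\tr(A^k) = 0$. Therefore $\cG$ must be acyclic.

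The step to get right — and the reason the polynomial has degree exactly $d$ — is the observation in the converse that the existence of any cycle forces a simple cycle of length at most $d$. This is what makes it sufficient to control $\tr(A^p)$ only for $p \leq d$, and it is precisely where both the degree-$d$ truncation and the strict positivity of every coefficient $c_1,\dots,c_d$ are essential: positivity rules out any cancellation among the non-negative trace terms, so that $h(A) = 0$ genuinely certifies the vanishing of each individual $\tr(A^p)$ rather than merely of their weighted sum.
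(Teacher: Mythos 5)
Your proof is correct, but in the converse direction it takes a genuinely different route from the paper's. Both arguments begin identically: non-negativity of $A$ together with strict positivity of the coefficients $c_p$ forces $\tr(A^p) = 0$ for $p = 1,\dots,d$, and both forward directions rest on the standard interpretation of $(A^p)_{ii}$ as the total weight of closed length-$p$ walks. From the vanishing traces, however, the paper proves the stronger statement that $A$ is nilpotent: it extends $\tr(A^p) = 0$ to \emph{all} $p \in \mathbb{N}$ by induction via the Cayley--Hamilton theorem (each $A^p$ with $p > d$ is a linear combination of lower powers), and then invokes Lemma~\ref{lem:nilpotent} (acyclicity if and only if nilpotency). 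You bypass nilpotency and Cayley--Hamilton entirely: if $\cG$ were not acyclic it would contain a simple directed cycle, whose length $k$ satisfies $k \leq d$ because its vertices are distinct, and this already forces $\tr(A^k) > 0$ for some $k \in \{1,\dots,d\}$, contradicting the vanishing of the traces up to degree $d$. Your argument is more elementary and self-contained, and it makes transparent exactly why controlling powers only up to $d$ suffices --- the point you rightly flag at the end. What the paper's detour buys is the nilpotency characterization itself, which the authors highlight as being of independent interest for alternative ways of enforcing acyclicity, and familiarity with the Cayley--Hamilton truncation device, which the paper also uses to reduce the infinite matrix-exponential series \eqref{eqn:hExp} to a finite one. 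Both proofs are sound; yours is the shorter path to Theorem~\ref{thm:h} alone, while the paper's yields the auxiliary lemma along the way.
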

The proof of Theorem~\ref{thm:h} is facilitated by Lemma~\ref{lem:nilpotent} below. We recall that a matrix $B$ is said to be \emph{nilpotent} if $B^p = 0$ for some power $p \in \mathbb{N}$ 
(and consequently all higher powers). Equivalent characterizations are that all eigenvalues of $B$ are zero, and most usefully here, that $\tr(B^p) = 0$ for all $p \in \mathbb{N}$. We call attention to the lemma as there may be independent interest in alternative ways of enforcing nilpotency.
\begin{lemma}\label{lem:nilpotent}
A directed graph $\cG$ is acyclic if and only if its (weighted) adjacency matrix $A$ is nilpotent.
\end{lemma}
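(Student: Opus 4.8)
The plan is to prove both directions by relating powers of the non-negative adjacency matrix $A$ to directed walks in $\cG$, and then to invoke the stated equivalence that $A$ is nilpotent if and only if $\tr(A^p) = 0$ for all $p \in \mathbb{N}$.

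First I would establish the combinatorial interpretation of $A^p$. Because $A$ is non-negative with $A_{ij} > 0$ exactly when $(i,j) \in \cE$, the $(i,j)$ entry of $A^p$ expands as $\sum A_{i k_1} A_{k_1 k_2} \cdots A_{k_{p-1} j}$ summed over all intermediate vertices; since every summand is non-negative, $(A^p)_{ij} > 0$ if and only if at least one summand is strictly positive, i.e.~there is a directed walk $i \to k_1 \to \dots \to j$ of length $p$ in $\cG$. No cancellation can occur precisely because $A \ge 0$, and this is where non-negativity is essential. In particular the diagonal entry $(A^p)_{ii}$ is positive if and only if there is a closed directed walk of length $p$ through vertex $i$, so $\tr(A^p) = \sum_i (A^p)_{ii} > 0$ if and only if $\cG$ contains some closed walk of length $p$.

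For the reverse direction ($A$ nilpotent $\implies$ acyclic) I would argue the contrapositive: if $\cG$ contains a cycle $i_1 \to i_2 \to \dots \to i_k \to i_1$, then the product $A_{i_1 i_2} A_{i_2 i_3} \cdots A_{i_k i_1}$ is a strictly positive summand of $(A^k)_{i_1 i_1}$, so $\tr(A^k) > 0$ and $A$ is not nilpotent. For the forward direction (acyclic $\implies$ nilpotent), any closed walk of positive length returns to its start and hence contains a directed cycle; so in an acyclic graph there are no closed walks of any length, giving $\tr(A^p) = 0$ for all $p \ge 1$, whence $A$ is nilpotent by the stated trace criterion. Equivalently, one may topologically order the vertices of an acyclic $\cG$ so that $A$ becomes strictly triangular, which is manifestly nilpotent with $A^d = 0$.

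I do not expect a serious obstacle, as this is essentially the standard walk-counting identity for adjacency matrices. The only point requiring care is the passage from closed walks to genuine cycles: one must check that a closed walk of positive length, though possibly non-simple, always contains a simple cycle, which follows from a pigeonhole argument on the first repeated vertex. Everything else reduces to the non-negativity of $A$ ruling out cancellation among the non-negative terms of $(A^p)_{ij}$.
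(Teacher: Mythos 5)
Your proposal is correct and follows essentially the same route as the paper: interpreting $(A^p)_{ij}$ as the sum of non-negative weights of length-$p$ directed walks, so $\tr(A^p)$ vanishes for all $p$ exactly when $\cG$ has no directed circuits, and invoking the trace characterization of nilpotency. Your explicit handling of the passage from closed walks to simple cycles (and the optional topological-ordering remark) only makes more careful what the paper states tersely.
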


The gradient of $h(A)$ in \eqref{eqn:h} is a matrix-valued function given by 
\begin{equation}\label{eqn:gradh}
    \nabla h(A) = \sum_{p=1}^d p c_p \left(A^{p-1}\right)^T.
\end{equation}
We make the following elementary observation for later reference.
\begin{lemma}\label{lem:gradhNonNeg}
For non-negative matrices $A$, $\nabla h(A)$ is non-negative and $h(A)$ is therefore a non-decreasing function in the sense that $h(A) \geq h(B)$ if $A - B \geq 0$.
\end{lemma}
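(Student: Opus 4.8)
The plan is to establish the two assertions in order, obtaining the monotonicity of $h$ as a direct consequence of the non-negativity of its gradient, exactly as the word ``therefore'' in the statement suggests. First I would show $\nabla h(A) \geq 0$ entrywise whenever $A \geq 0$. The key elementary fact is that the set of entrywise non-negative matrices is closed under multiplication: if $M, N \geq 0$ then $(MN)_{ij} = \sum_k M_{ik} N_{kj} \geq 0$, being a sum of products of non-negative numbers. Applying this inductively gives $A^{p-1} \geq 0$ for every $p \geq 1$ (with $A^0 = I \geq 0$), and taking transposes preserves non-negativity. Since each coefficient satisfies $p c_p > 0$ by the assumption $c_p > 0$, the formula \eqref{eqn:gradh} exhibits $\nabla h(A)$ as a sum of entrywise non-negative matrices, so $\nabla h(A) \geq 0$.

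For the monotonicity claim, I would suppose $A - B \geq 0$ with $A, B$ non-negative and consider the segment $A(t) = (1-t)B + t A$ for $t \in [0,1]$. Each $A(t)$ is a convex combination of the non-negative matrices $B$ and $A$, hence $A(t) \geq 0$. By the fundamental theorem of calculus and the chain rule,
\[
h(A) - h(B) = \int_0^1 \frac{d}{dt} h(A(t)) \, dt = \int_0^1 \sum_{i,j} \left(\nabla h(A(t))\right)_{ij} (A - B)_{ij} \, dt .
\]
By the first part $\nabla h(A(t)) \geq 0$ entrywise, and $A - B \geq 0$ entrywise, so the integrand is a sum of non-negative terms at every $t$; the integral is therefore non-negative, giving $h(A) \geq h(B)$.

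I do not anticipate a genuine obstacle, as the result is elementary; the two points to get right are bookkeeping. The Frobenius-inner-product form of the directional derivative, $\frac{d}{dt} h(A(t)) = \sum_{i,j} (\nabla h(A(t)))_{ij}(A-B)_{ij}$, must be consistent with the transpose convention in \eqref{eqn:gradh}, which it is precisely because $\partial \tr(A^p)/\partial A_{ij} = p (A^{p-1})_{ji}$. Second, the segment argument needs $B$ itself (not merely $A - B$) to be non-negative so that $A(t)$ remains in the non-negative orthant where the first part applies; this is covered by the standing assumption that we consider non-negative matrices. A fully elementary alternative bypasses the gradient entirely: from $A \geq B \geq 0$ one shows $A^p \geq B^p$ entrywise by induction (using closure under multiplication again), whence $\tr(A^p) \geq \tr(B^p)$ and $h(A) = \sum_p c_p \tr(A^p) \geq \sum_p c_p \tr(B^p) = h(B)$ since $c_p > 0$.
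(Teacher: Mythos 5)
Your proof is correct. Note that the paper itself never proves this lemma: it is introduced as an ``elementary observation,'' and Appendix~\ref{sec:proofs} contains no corresponding subsection, so there is no argument of the authors' to compare against --- your write-up simply supplies the omitted details. Both of your routes are sound: the gradient formula \eqref{eqn:gradh} is a positive combination of transposed non-negative powers, and your integral argument along the segment $A(t) = (1-t)B + tA$ correctly uses the standing hypothesis $B \geq 0$ (which is genuinely needed --- with $B$ allowed negative entries, e.g.~$B = -M$ for large $M \geq 0$ and $A = 0$, one has $A - B \geq 0$ yet $h(B) > h(A)$ since even powers contribute positively). Your purely entrywise alternative, $A \geq B \geq 0 \Rightarrow A^p \geq B^p \Rightarrow \tr(A^p) \geq \tr(B^p)$, is arguably the cleanest and is likely what the authors had in mind in calling the observation elementary.
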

Off-diagonal elements $(\nabla h(A))_{ij}$ have an intuitive interpretation 
in terms of \emph{directed walks} from $j$ to $i$, i.e.~a sequence of edges $(j, i_1), (i_1, i_2), \dots, (i_{l-1}, i) \in \cE$. If there is a directed walk from $j$ to $i$, then there is also a \emph{directed path}, i.e.~a directed walk in which all vertices $j, i_1, \dots, i_{l-1}, i$ are distinct \cite{bondy_murty2008}.
\begin{lemma}\label{lem:gradh_ij}
For any $h(A)$ defined by \eqref{eqn:h} and $i \neq j$, $(\nabla h(A))_{ij} > 0$ if and only if there exists a directed walk from $j$ to $i$ in $\cG$.
\end{lemma}

The gradient $\nabla h(A)$ can also be used to characterize acyclicity, which will prove useful in the sequel. 
\begin{lemma}\label{lem:Agradh}
A directed graph $\cG$ is acyclic if and only if the Hadamard product $A \circ \nabla h(A) = 0$ for any $h$ defined by \eqref{eqn:h}.
\end{lemma}

With the help of Lemma~\ref{lem:gradh_ij}, we can give a simple graphical interpretation of Lemma~\ref{lem:Agradh}: If a directed graph is acyclic, then for every pair $(i,j)$, we must either not have an edge from $i$ to $j$, i.e.~$A_{ij} = 0$, or not have a return path from $j$ to $i$, i.e.~$(\nabla h(A))_{ij} = 0$.

\section{Analysis of continuous acyclicity-constrained optimization}
\label{sec:theory}

In the remainder of the paper, we address the problem of learning a Bayesian network (a probabilistic directed graphical model) for the joint distribution of a $d$-dimensional random vector $X$, given a data matrix of $n$ samples $\bX \in \mathbb{R}^{n\times d}$. We assume that the Bayesian network is parametrized by a matrix $W \in \mathbb{R}^{d\times d}$ 
such that the
sparsity pattern of $W$ corresponds to the adjacency pattern of the graph: $W_{ij} \neq 0$ if and only if $(i,j) \in \cE$. In other words, each edge is associated with a single parameter $W_{ij}$. The most straightforward instance of this setting is a linear structural equation model (SEM) given by $X_j = W_{\cdot j}^T X + z_j$, where $W_{\cdot j}$ is the $j$th column of $W$ and $z_j$ is random noise. More general models such as generalized linear models $\EE{X_j \given X} = g\left(W_{\cdot j}^T X\right)$ are also included. While we experiment only with continuous variables in Section~\ref{sec:expt}, it is straightforward to accommodate binary variables as well: in a generalized linear structural equation, a single parameter $W_{ij}$ can account for the effect of a binary input variable $X_i$, while a suitable link function $g$ (e.g.~logistic) can be used for a binary output $X_j$.

This section analyzes the continuous optimization problem of minimizing a score function $F(W)$ subject to the acyclicity constraint $h(A) = 0$ for any $h$ defined by \eqref{eqn:h} (thanks to Theorem~\ref{thm:h}). For simplicity, it is assumed in this section that $F(W)$ is continuously differentiable, although it is not hard to extend the analysis to account for an $\ell_1$ penalty as in \eqref{eqn:scoreL1}. We consider two ways of defining a weighted adjacency matrix $A$ from $W$. Section~\ref{sec:theory:sq} re-examines the quadratic case $A = W \circ W$ proposed in \cite{zheng2018dags} and sheds light on their augmented Lagrangian algorithm. We then propose and  
study the absolute value case $A = \abs{W}$ in Section~\ref{sec:theory:abs}.

\subsection{Quadratic adjacency matrix}
\label{sec:theory:sq}

With $A = W \circ W$ as the element-wise square of $W$, the optimization problem is  
\begin{equation}\label{eqn:probSq}
    \min_W \quad F(W) \quad \text{s.t.} \quad h(W \circ W) \leq 0.
\end{equation}
The constraint $h(W \circ W) \leq 0$ is equivalent to $h(W \circ W) = 0$ because $h(A) \geq 0$ for non-negative $A$, as seen from \eqref{eqn:h}. The matrix exponential case of \eqref{eqn:probSq} with $h(A)$ as in \eqref{eqn:hExp} was proposed in \cite{zheng2018dags}.

Applying Lemma~\ref{lem:Agradh} yields the following consequence.
\begin{lemma}\label{lem:zeroGrad}
Let $W$ be a feasible solution to problem \eqref{eqn:probSq}. Then $\nabla_W (h(W \circ W)) = 0$.
\end{lemma}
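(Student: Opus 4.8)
The plan is to reduce the claim to the acyclicity characterization of Lemma~\ref{lem:Agradh} via a direct chain-rule computation. First I would compute $\nabla_W(h(W \circ W))$ explicitly. Writing $A = W \circ W$ so that $A_{ij} = W_{ij}^2$, each entry $A_{ij}$ depends only on the single parameter $W_{ij}$, so the chain rule gives $\partial h(A)/\partial W_{ij} = (\nabla h(A))_{ij}\,(\partial A_{ij}/\partial W_{ij}) = 2 W_{ij}\,(\nabla h(A))_{ij}$. In matrix form this is the Hadamard product
\[
\nabla_W(h(W \circ W)) = 2\, W \circ \nabla h(W \circ W).
\]
It therefore suffices to show that every entry $W_{ij}\,(\nabla h(A))_{ij}$ vanishes.

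Next I would invoke feasibility. Since $h(A) \geq 0$ for non-negative $A$ (as observed just after \eqref{eqn:probSq}), the constraint $h(W \circ W) \leq 0$ forces $h(A) = 0$, so by Theorem~\ref{thm:h} the graph with adjacency matrix $A$ is acyclic. Lemma~\ref{lem:Agradh} then yields $A \circ \nabla h(A) = 0$, i.e.\ $W_{ij}^2\,(\nabla h(A))_{ij} = 0$ for every pair $(i,j)$.

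The last step is to pass from the squared relation $W_{ij}^2\,(\nabla h(A))_{ij} = 0$ to the linear one $W_{ij}\,(\nabla h(A))_{ij} = 0$, which is where the only (mild) care is needed. For each $(i,j)$, if $W_{ij} = 0$ the linear product is trivially zero, while if $W_{ij} \neq 0$ then $W_{ij}^2 \neq 0$, forcing $(\nabla h(A))_{ij} = 0$ and hence again $W_{ij}\,(\nabla h(A))_{ij} = 0$. Equivalently, squaring a scalar does not change whether it is zero, so $W$ and $W \circ W$ have the same support and $W \circ \nabla h(A)$ vanishes wherever $(W\circ W)\circ \nabla h(A)$ does. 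Combining this with the matrix identity from the first step gives $\nabla_W(h(W \circ W)) = 0$. The main obstacle is precisely this squared-to-linear passage: the Hadamard identity $A\circ\nabla h(A)=0$ only controls $W_{ij}^2\,(\nabla h(A))_{ij}$, and one must note that the zero pattern of $W$ coincides with that of $W\circ W$ in order to recover the factor $W_{ij}$ rather than $W_{ij}^2$.
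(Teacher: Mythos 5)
Your proposal is correct and follows essentially the same route as the paper: the chain-rule identity $\nabla_W(h(W\circ W)) = 2\,W \circ \nabla h(W\circ W)$, feasibility forcing $h(W\circ W)=0$, Lemma~\ref{lem:Agradh} giving $(W\circ W)\circ \nabla h(W\circ W)=0$, and the observation that $W$ and $W\circ W$ share the same zero pattern. Your write-up is slightly more explicit than the paper's (e.g.\ citing Theorem~\ref{thm:h} and the squared-to-linear passage in detail), but there is no substantive difference.
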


The vanishing gradient in Lemma~\ref{lem:zeroGrad} has theoretical and practical implications. First, the Karush-Kuhn-Tucker (KKT) conditions of optimality \cite{bertsekas1999} for problem \eqref{eqn:probSq}, namely
\begin{equation}\label{eqn:KKTsq}
\nabla F(W) + \lambda \nabla_W (h(W \circ W)) = 0
\end{equation}
with Lagrange multiplier $\lambda \geq 0$, are not satisfied for any feasible solution, let alone a local minimum, except in a trivial case.
\begin{prop}\label{prop:noKKTsq}
Let $W$ be a feasible solution to problem \eqref{eqn:probSq}. Then unless $W$ is an unconstrained stationary point of $F(W)$, i.e.~$\nabla F(W) = 0$, the KKT condition \eqref{eqn:KKTsq} cannot hold.
\end{prop}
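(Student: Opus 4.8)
The plan is to recognize that this proposition is an almost immediate corollary of Lemma~\ref{lem:zeroGrad}, so the substantive work has already been carried out upstream. The KKT stationarity condition \eqref{eqn:KKTsq} balances two terms, the objective gradient $\nabla F(W)$ and the scaled constraint gradient $\lambda \nabla_W(h(W \circ W))$. The crucial observation is that Lemma~\ref{lem:zeroGrad} guarantees the constraint gradient $\nabla_W(h(W \circ W))$ vanishes identically at \emph{every} feasible $W$, not merely at candidate optima.

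First I would invoke Lemma~\ref{lem:zeroGrad} to substitute $\nabla_W(h(W \circ W)) = 0$ into \eqref{eqn:KKTsq}. Since this gradient is zero, the product $\lambda \nabla_W(h(W \circ W))$ equals zero for every admissible multiplier $\lambda \geq 0$ simultaneously; the multiplier is powerless to contribute anything to the stationarity relation. Hence \eqref{eqn:KKTsq} collapses to the single equation $\nabla F(W) = 0$. This establishes the claim in contrapositive form: a feasible $W$ satisfying \eqref{eqn:KKTsq} must have $\nabla F(W) = 0$, and equivalently, whenever $\nabla F(W) \neq 0$ the KKT condition cannot hold.

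I expect there to be no genuine obstacle in this final step, since the entire content of the argument was relegated to Lemma~\ref{lem:zeroGrad}, which itself rests on the acyclicity characterization in Lemma~\ref{lem:Agradh}: feasibility means the graph is acyclic, so $A \circ \nabla h(A) = 0$, and the chain rule then forces the $W$-gradient of $h(W \circ W)$ to vanish. The only point requiring care is to state explicitly that the conclusion holds uniformly over all $\lambda \geq 0$, so that one cannot rescue the KKT condition by a clever choice of multiplier. Once that is noted, the proposition follows in a line or two.
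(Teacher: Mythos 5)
Your proposal is correct and takes exactly the paper's route: the paper dispatches Proposition~\ref{prop:noKKTsq} in one line as an immediate consequence of Lemma~\ref{lem:zeroGrad}, which is precisely the substitution $\nabla_W(h(W \circ W)) = 0$ into \eqref{eqn:KKTsq} that you carry out. Your added remark that the conclusion holds uniformly over all $\lambda \geq 0$ is a fair (if implicit in the paper) point of care, but does not constitute a different argument.
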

In particular if $F(W)$ is convex, the condition $\nabla F(W) = 0$ holds only for unconstrained minimizers of $F(W)$, so if these solutions are already acyclic, there is nothing more to be done.

On the practical side, Lemma~\ref{lem:zeroGrad} sheds light on the augmented Lagrangian algorithm proposed in \cite{zheng2018dags}. The augmented Lagrangian corresponding to \eqref{eqn:probSq} with penalty parameters $\alpha$ and $\rho$ is 
\begin{equation}\label{eqn:ALsq}
F(W) + \alpha h(W \circ W) + \frac{\rho}{2} h(W \circ W)^2,
\end{equation}
with gradient 
\[
\nabla F(W) + (\alpha + \rho h(W \circ W)) \nabla_W (h(W \circ W)).
\]
\begin{prop}\label{prop:ALsq}
Let $W$ be a feasible solution to problem \eqref{eqn:probSq}. Then unless $W$ is an unconstrained stationary point of $F(W)$, 
i.e.~$\nabla F(W) = 0$, 
$W$ cannot be a stationary point of the augmented Lagrangian \eqref{eqn:ALsq}.
\end{prop}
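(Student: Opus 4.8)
The plan is to work directly from the closed-form gradient of the augmented Lagrangian displayed immediately before the statement, and to collapse the stationarity condition using Lemma~\ref{lem:zeroGrad}. Since the augmented Lagrangian \eqref{eqn:ALsq} differs from the constrained problem only through the penalty terms, and those penalty terms enter the gradient solely through $\nabla_W(h(W \circ W))$, the whole argument hinges on the vanishing gradient established in that lemma. In fact this proposition is a near-twin of Proposition~\ref{prop:noKKTsq}, the only difference being the scalar coefficient multiplying $\nabla_W(h(W \circ W))$.

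The key observation is that feasibility of $W$ already annihilates the entire penalty contribution to the gradient. Concretely, because $W$ is feasible for \eqref{eqn:probSq}, Lemma~\ref{lem:zeroGrad} gives $\nabla_W(h(W \circ W)) = 0$. The gradient of \eqref{eqn:ALsq} is
\[
\nabla F(W) + \bigl(\alpha + \rho\, h(W \circ W)\bigr)\, \nabla_W(h(W \circ W)),
\]
whose second term is merely a scalar multiple of this vanishing matrix. Hence the second term equals $0$ \emph{regardless} of the values of $\alpha$, $\rho$, and $h(W \circ W)$, and the gradient of the augmented Lagrangian reduces to $\nabla F(W)$. Setting this to zero — the defining condition of a stationary point — forces $\nabla F(W) = 0$. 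Taking the contrapositive yields the claim: if $\nabla F(W) \neq 0$, then $W$ cannot be a stationary point of \eqref{eqn:ALsq}.

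There is essentially no obstacle beyond Lemma~\ref{lem:zeroGrad}, which does the substantive work; what remains is a one-line substitution. The point worth stressing is that the argument is completely insensitive to the scalar coefficient $\alpha + \rho\, h(W \circ W)$, so no choice of penalty parameters can turn a feasible, non-$F$-stationary $W$ into a stationary point of the augmented Lagrangian. This is precisely the fact needed to explain, on the practical side, why the augmented Lagrangian iterates of \cite{zheng2018dags} are repelled from exactly feasible (acyclic) points even as $\alpha$ and $\rho$ are driven to be very large.
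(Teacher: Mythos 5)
Your proof is correct and follows exactly the paper's route: the paper likewise treats Proposition~\ref{prop:ALsq} as an immediate consequence of Lemma~\ref{lem:zeroGrad}, since feasibility makes $\nabla_W(h(W \circ W)) = 0$ and hence the gradient of the augmented Lagrangian collapses to $\nabla F(W)$ regardless of $\alpha$ and $\rho$.
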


Proposition~\ref{prop:ALsq} explains the following observed behavior of the augmented Lagrangian algorithm, namely that it does not converge to an exactly (or within machine precision) feasible solution of \eqref{eqn:probSq} even when the penalty parameters $\alpha$, $\rho$ are very high ($\rho \sim 10^{16}$). The reason is that a minimizer of the augmented Lagrangian \eqref{eqn:ALsq} cannot be a feasible solution to \eqref{eqn:probSq} except in the trivial case discussed above. However, when $\alpha$ and $\rho$ are very large, minimizers of \eqref{eqn:ALsq} do tend to have gradients $\nabla_W (h(W \circ W)) \approx 0$, and accordingly $h(W \circ W) \approx 0$ by continuity. Thus as $\alpha$ and $\rho$ increase, the augmented Lagrangian algorithm yields solutions that are closer and closer to being feasible.

\subsection{Absolute value adjacency matrix}
\label{sec:theory:abs}

As an alternative, we 
consider defining adjacency matrix $A$ as the absolute value of $W$, $A = \abs{W}$, leading to the following constrained optimization:
\begin{equation}\label{eqn:probOrig}
\min_W \quad F(W) \quad \text{s.t.} \quad h(\abs{W}) \leq 0.
\end{equation}
Formulation~\eqref{eqn:probOrig} is motivated in part by the failure to satisfy KKT conditions in Section~\ref{sec:theory:sq} and in part by the connection between the absolute value function/$\ell_1$ norm and sparsity, which is needed for acyclicity. While it will be seen that \eqref{eqn:probOrig} has different theoretical and numerical properties from \eqref{eqn:probSq}, the two formulations are equivalent in a sense because acyclicity depends only on the sparsity pattern of $W$, which is clearly the same regardless of whether $\abs{W}$ or $W \circ W$ is used.

\subsubsection{An equivalent smooth optimization}

Problem~\eqref{eqn:probOrig} is not a smooth optimization because of the absolute value function. To avoid any issues with continuous differentiability, we make use of the following alternative formulation, which we show in Appendix~\ref{sec:proofs:equiv} to be equivalent to \eqref{eqn:probOrig}:
\begin{equation}\label{eqn:prob+-}
\min_{W^+, W^-} \; F\left(W^+ - W^-\right) \quad \text{s.t.} \quad h\left(W^+ + W^-\right) \leq 0, \quad W^+, W^- \geq 0.
\end{equation}
Given any solution $(W^+, W^-)$ to \eqref{eqn:prob+-}, a solution to \eqref{eqn:probOrig} is obtained simply as $W = W^+ - W^-$.

\subsubsection{KKT conditions and constraint qualification}
\label{sec:theory:abs:KKT}

We proceed to analyze the KKT conditions for the smooth reformulation \eqref{eqn:prob+-}, which are as follows: 
\begin{subequations}\label{eqn:KKT}
\begin{align}
\pm \nabla F\left(W^+ - W^-\right) + \lambda \nabla h\left(W^+ + W^-\right) &= M^\pm \geq 0\label{eqn:KKTgrad}\\
W^\pm \circ M^\pm &= 0,\label{eqn:KKTcomplementary}
\end{align}
\end{subequations}
in addition to the feasibility conditions in \eqref{eqn:prob+-}. The $\pm$ versions of \eqref{eqn:KKTgrad} result from taking gradients with respect to $W^+$ and $W^-$ respectively, where $\lambda \geq 0$ is a Lagrange multiplier. $M^+$, $M^-$ are non-negative matrices of Lagrange multipliers corresponding to the non-negativity constraints in \eqref{eqn:prob+-}, with complementary slackness conditions \eqref{eqn:KKTcomplementary}.

As in Section~\ref{sec:theory:sq}, we must consider whether the KKT conditions are \emph{necessary} conditions of optimality, i.e.~whether a local minimum must satisfy them. Theorem~\ref{thm:necessary} gives an affirmative answer; however, it turns out that common \emph{constraint qualifications} used to establish necessity do not hold. 
To begin, we recall that a feasible solution to an inequality-constrained problem such as \eqref{eqn:prob+-} is said to be \emph{regular} if the gradients of the active (i.e.~tight) constraints are linearly independent. If a local minimum is regular, then the KKT conditions necessarily hold. 
\begin{prop}\label{prop:regular}
A feasible solution $(W^+, W^-)$ to problem \eqref{eqn:prob+-} cannot be regular.
\end{prop}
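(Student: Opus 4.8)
The plan is to show that at any feasible point, the gradients of the active constraints fail linear independence by exhibiting an explicit nontrivial linear combination that vanishes. The active constraints of \eqref{eqn:prob+-} are the acyclicity constraint $h(W^+ + W^-) \leq 0$ (which is always active since it holds with equality for feasible points) together with the non-negativity constraints $W^\pm_{ij} \geq 0$ that are tight, i.e. those indices where $W^+_{ij} = 0$ or $W^-_{ij} = 0$. So I would first set up the gradient of each active constraint as a vector (or matrix) in the $(W^+, W^-)$ product space: the gradient of the non-negativity constraint $-W^\pm_{ij} \leq 0$ is simply the negative standard basis matrix $-E_{ij}$ in the appropriate $\pm$ block, while the gradient of the acyclicity constraint is the pair $\bigl(\nabla h(W^+ + W^-),\, \nabla h(W^+ + W^-)\bigr)$, the same matrix appearing in both the $W^+$ and $W^-$ blocks since $h$ depends on the sum.

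The key observation, which I would establish next, is that the gradient of the acyclicity constraint is itself a nonnegative combination of the gradients of active non-negativity constraints, making the active set linearly dependent. By Lemma~\ref{lem:Agradh}, feasibility (acyclicity) forces $A \circ \nabla h(A) = 0$ with $A = W^+ + W^-$, so wherever $(\nabla h(A))_{ij} > 0$ we must have $A_{ij} = 0$, hence both $W^+_{ij} = 0$ and $W^-_{ij} = 0$. This means every nonzero entry of $\nabla h(A)$ occurs at an index where the non-negativity constraints in \emph{both} blocks are active. I can therefore write $\nabla h(A)$ as a sum $\sum_{(i,j)} (\nabla h(A))_{ij} E_{ij}$ over precisely those indices where the corresponding $E_{ij}$ are gradients of active constraints, with strictly positive coefficients $(\nabla h(A))_{ij}$.

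Assembling these pieces yields the desired dependency: the gradient of the acyclicity constraint (appearing identically in both $\pm$ blocks) equals a linear combination of the gradients of the active non-negativity constraints in both blocks. Concretely, taking the acyclicity gradient with coefficient one and each active $-E_{ij}$ constraint-gradient with coefficient $(\nabla h(A))_{ij}$ in its respective block produces a vanishing combination with not all coefficients zero, so regularity fails. The one case to dispatch is when $\nabla h(A) = 0$ identically, i.e. when there are no positive entries at all; but then the acyclicity constraint gradient is the zero vector, and a zero gradient among the active constraints trivially destroys linear independence. The main obstacle is mostly bookkeeping: correctly tracking that $h$ depends on the sum $W^+ + W^-$ so its gradient populates both blocks simultaneously, and confirming via Lemma~\ref{lem:Agradh} that the support of $\nabla h(A)$ lies inside the active set of \emph{both} non-negativity blocks rather than just one. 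Once that support containment is pinned down, the linear dependence is immediate.
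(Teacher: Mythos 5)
Your proposal is correct and takes essentially the same route as the paper's proof: both invoke Lemma~\ref{lem:Agradh} to show that the support of $\nabla h(W^+ + W^-)$ lies inside the set of indices where the non-negativity constraints are active in \emph{both} blocks, and then express the gradient of the (always active) constraint $h(W^+ + W^-) \leq 0$ as a non-negative combination of those active constraint gradients, destroying linear independence. Your explicit handling of the case $\nabla h(A) = 0$ is in fact vacuous---since $(\nabla h(A))_{ii} \geq c_1 > 0$ for every $i$, the gradient is never zero and the nontrivial dependency always exists---but including it does no harm.
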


Beyond regularity, we refer to the hierarchy of constraint qualifications presented in \cite{bertsekas1999} and show that feasible solutions to \eqref{eqn:prob+-} do not satisfy a weaker constraint qualification called \emph{quasinormality}.
\begin{prop}\label{prop:quasinormal}
A feasible solution $(W^+, W^-)$ to problem \eqref{eqn:prob+-} cannot be quasinormal.
\end{prop}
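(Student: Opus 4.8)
The plan is to show that \emph{every} feasible $(W^+, W^-)$ fails quasinormality by exhibiting a multiplier vector together with an approaching sequence along which the multiplier-weighted combination of the constraint functions stays strictly positive. I would reuse the very multipliers that already witness the failure of regularity in Proposition~\ref{prop:regular}: set $\lambda = 1$ for the constraint $h(W^+ + W^-) \le 0$ and $M^\pm = \nabla h(W^+ + W^-)$ for the nonnegativity constraints $W^\pm \ge 0$. Writing $A = W^+ + W^-$, these reproduce exactly the homogeneous linear dependence of active constraint gradients established in Proposition~\ref{prop:regular}; they are nonnegative by Lemma~\ref{lem:gradhNonNeg}; they satisfy complementary slackness $W^\pm \circ M^\pm = 0$, since $(W^\pm)_{ij} > 0$ forces $A_{ij} > 0$ and hence $(\nabla h(A))_{ij} = 0$ by Lemma~\ref{lem:Agradh}; and they are not all zero because $\lambda = 1$. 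What then remains is purely the construction of the approaching sequence.

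For the sequence I would take $W^+_k = W^+ + t_k I$, $W^-_k = W^-$ with $t_k \downarrow 0$, so that $A_k := W^+_k + W^-_k = A + t_k I$ perturbs the adjacency matrix along the identity. Writing $\langle B, C\rangle := \sum_{ij} B_{ij} C_{ij}$ for the entrywise inner product, the quantity whose positivity must be certified is
\[
\lambda\, h(A_k) - \langle M^+, W^+_k\rangle - \langle M^-, W^-_k\rangle = h(A_k) - \langle \nabla h(A), A_k\rangle .
\]
The first simplification is that $\langle \nabla h(A), A\rangle = 0$, which is just the scalar form of $A \circ \nabla h(A) = 0$ from Lemma~\ref{lem:Agradh}, so the inner product collapses to $\langle \nabla h(A), A_k\rangle = t_k\, \tr(\nabla h(A))$.

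The crux, and the step I expect to carry the real weight, is evaluating $h(A + t_k I)$ and $\tr(\nabla h(A))$ by exploiting that a feasible $A$ is nilpotent (Lemma~\ref{lem:nilpotent}), i.e.\ $\tr(A^m) = 0$ for every $m \ge 1$. Because $A$ and $I$ commute, the binomial theorem gives $\tr\bigl((A + t_k I)^p\bigr) = \sum_{m=0}^p \binom{p}{m} t_k^{p-m}\, \tr(A^m) = d\, t_k^p$, so that $h(A + t_k I) = d\sum_{p=1}^d c_p t_k^p$; the same nilpotency yields $\tr(\nabla h(A)) = \sum_{p=1}^d p\, c_p\, \tr(A^{p-1}) = c_1 d$. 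Substituting, the certified quantity becomes
\[
d\sum_{p=1}^d c_p t_k^p - c_1 d\, t_k = d\sum_{p=2}^d c_p t_k^p > 0
\]
for all small $t_k > 0$, since $c_2 > 0$. This produces the multiplier--sequence pair that quasinormality forbids, completing the argument. The conceptual obstacle hidden in this calculation is the exact cancellation of the linear term: perturbing $A$ into the positive orthant does raise $h$, but only at a first-order rate that the weighted nonnegativity terms annihilate, so a naive direction shows nothing. The resolution is to perturb along the identity --- aligned with the robust diagonal part $c_1 I$ of $\nabla h(A)$ --- and read off the second-order surplus $d\sum_{p\ge 2} c_p t_k^p$, whose positivity nilpotency guarantees.
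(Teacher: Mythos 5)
Your computations are all correct (and the identity-direction perturbation is elegant), but the certificate you produce at the end is for the wrong constraint qualification. In the reference this proposition leans on \cite[Sec.~3.3.5]{bertsekas1999}, quasinormality is violated by a multiplier vector and a sequence along which \emph{each individual} constraint carrying a positive multiplier is strictly violated: $\mu_j g_j(x^k) > 0$ for every $j$ with $\mu_j > 0$. The aggregated quantity you certify, $\lambda\, h(A_k) - \langle M^+, W^+_k\rangle - \langle M^-, W^-_k\rangle > 0$, is instead the witness condition for the failure of \emph{pseudonormality}, which sits strictly above quasinormality in the hierarchy (pseudonormality implies quasinormality), so establishing its failure is a weaker conclusion than the proposition claims. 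And the gap is not merely formal: your own multipliers $M^\pm = \nabla h(A)$ have strictly positive diagonal entries $(\nabla h(A))_{ii} \geq c_1 > 0$ (since $(A^0)_{ii} = 1$), so the per-constraint requirement would force $W^\pm_{k,ii} < 0$ along the sequence, whereas your sequence $W^+_k = W^+ + t_k I$, $W^-_k = W^-$ keeps every nonnegativity constraint satisfied --- indeed strictly slack on the diagonal. Under the per-constraint definition, no nonnegativity constraint is violated at all, so condition (iv) fails for your witness.

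The repair is exactly the framing the paper adopts: fold the nonnegativity constraints into the abstract convex set $\cX = \mathbb{R}_+^{d\times d} \times \mathbb{R}_+^{d\times d}$, leaving $h(W^+ + W^-) \leq 0$ as the \emph{single} functional constraint with multiplier $\mu = 1$. Then the multiplier condition becomes a variational inequality over $\cX$, which your two facts deliver verbatim: $\nabla h(A) \geq 0$ (Lemma~\ref{lem:gradhNonNeg}) together with the complementarity $\langle \nabla h(A), A \rangle = 0$ (Lemma~\ref{lem:Agradh}) say precisely that $\langle \nabla h(A), W'^\pm - W^\pm \rangle \geq 0$ for all $W'^\pm \geq 0$. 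With only one functional constraint, the violation condition reduces to finding a sequence \emph{in} $\cX$ with $h > 0$, and here your sequence works perfectly: $W^+ + t_k I \in \cX$ and your nilpotency-plus-binomial computation $h(A + t_k I) = d \sum_{p=1}^d c_p t_k^p > 0$ is an exact, quantitative alternative to the paper's witness (which densifies all zero entries by $\epsilon$ to create a complete graph). Note also that in this correct framing the cancellation of the linear term --- which you present as the crux --- plays no role; it is an artifact of the pseudonormality bookkeeping. So the mathematical ingredients you assembled do prove the proposition, but only after relocating the nonnegativity constraints into the set $\cX$; as written, the proof establishes non-pseudonormality, not non-quasinormality.
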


In spite of these negative results, Appendix~\ref{sec:proofs:abs} provides a direct proof that KKT conditions \eqref{eqn:KKT} are satisfied at a local minimum of \eqref{eqn:prob+-}. The proof uses the following lemma, which we highlight because of its graphical interpretation in terms of directed paths not being created/destroyed by the addition/removal of certain edges.
\begin{lemma}\label{lem:simplePath}
For a non-negative matrix $A$, if $(\nabla h(A))_{ij} > 0$, changing the values of $A_{kj}$ for any $k$ cannot make $(\nabla h(A))_{ij} = 0$. Similarly if $(\nabla h(A))_{ij} = 0$, changing the values of $A_{kj}$ for any $k$ cannot make $(\nabla h(A))_{ij} > 0$.
\end{lemma}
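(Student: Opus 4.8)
The plan is to translate everything into the graphical language of Lemma~\ref{lem:gradh_ij}, which for $i \neq j$ equates $(\nabla h(A))_{ij} > 0$ with the existence of a directed walk from $j$ to $i$ in $\cG$. First I would dispose of the diagonal case $i = j$: the $p = 1$ term in \eqref{eqn:gradh} contributes $c_1 (A^0)^T = c_1 I$, so $(\nabla h(A))_{ii} \geq c_1 > 0$ for every non-negative $A$. Hence for $i = j$ the first claim holds trivially and the second is vacuous, and the lemma reduces to the off-diagonal case.

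For $i \neq j$, the key observation is that the entries $A_{kj}$ being varied are exactly the entries of column $j$ of $A$, which correspond to edges $(k,j) \in \cE$ \emph{entering} vertex $j$. I would then invoke the standard fact, recalled just before Lemma~\ref{lem:gradh_ij}, that any directed walk from $j$ to $i$ reduces to a directed \emph{path} from $j$ to $i$ in which all vertices are distinct. Such a path starts at $j$ and never returns to it, so it uses no edge entering $j$; equivalently, it involves no positive entry of column $j$ of $A$.

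With this in hand both statements follow symmetrically, applying Lemma~\ref{lem:gradh_ij} to both the original matrix and the modified (still non-negative) matrix. For the first, suppose $(\nabla h(A))_{ij} > 0$, so there is a directed walk, hence a directed path, from $j$ to $i$; since this path avoids column $j$, its edges are untouched by any change to the entries $A_{kj}$, so a walk from $j$ to $i$ persists and $(\nabla h(A))_{ij}$ remains positive. For the second, suppose $(\nabla h(A))_{ij} = 0$ but that some change to the entries $A_{kj}$ produced $(\nabla h(A))_{ij} > 0$; then after the change there is a directed path from $j$ to $i$, which again avoids column $j$ and therefore \emph{already existed before} the change, contradicting $(\nabla h(A))_{ij} = 0$.

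The argument is essentially complete once the column-$j$ observation is made, and there is no analytic difficulty. The only point requiring care is that ``changing the values of $A_{kj}$'' covers both inserting and deleting edges into $j$, so I would phrase the path-avoidance step so that it is manifestly insensitive to the direction of the change rather than treating additions and removals separately. I expect this small bookkeeping, together with a clean citation of the walk-to-path reduction, to be the only (minor) obstacle.
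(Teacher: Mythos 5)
Your proof is correct and takes essentially the same route as the paper's: invoke Lemma~\ref{lem:gradh_ij} to translate $(\nabla h(A))_{ij} > 0$ into the existence of a directed walk from $j$ to $i$, reduce the walk to a directed path, and observe that a directed path starting at $j$ visits distinct vertices and hence contains no edge $(k,j)$, so changes to column $j$ can neither destroy such a path (first claim) nor create one that did not already exist (second claim). Your explicit disposal of the diagonal case $i = j$ via the $c_1 I$ term in \eqref{eqn:gradh} is a small completeness bonus, since the paper's appeal to Lemma~\ref{lem:gradh_ij} formally covers only $i \neq j$.
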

\begin{theorem}\label{thm:necessary}
Let $(W^+, W^-)$ be a local minimum of problem \eqref{eqn:prob+-}. Then there exist a Lagrange multiplier $\lambda \geq 0$ and matrices $M^+ \geq 0$, $M^- \geq 0$ satisfying the KKT conditions in \eqref{eqn:KKT}.
\end{theorem}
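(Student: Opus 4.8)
The plan is to give a direct proof by perturbation analysis, since Propositions~\ref{prop:regular} and~\ref{prop:quasinormal} rule out the usual route through a constraint qualification. Write $W = W^+ - W^-$, $A = W^+ + W^-$, $G = \nabla F(W)$, and $H = \nabla h(A)$; by Lemma~\ref{lem:gradhNonNeg} we have $H \geq 0$. The gradient conditions~\eqref{eqn:KKTgrad} then read entrywise as $M^+_{ij} = G_{ij} + \lambda H_{ij}$ and $M^-_{ij} = -G_{ij} + \lambda H_{ij}$, so requiring $M^\pm \geq 0$ is the same as requiring $\lambda H_{ij} \geq \abs{G_{ij}}$ for every $(i,j)$, while complementary slackness~\eqref{eqn:KKTcomplementary} demands $M^+_{ij} = 0$ wherever $W^+_{ij} > 0$ and $M^-_{ij} = 0$ wherever $W^-_{ij} > 0$. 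The strategy is first to extract from local minimality a set of conditions $G_{ij} = 0$ on the appropriate entries, and then to construct $\lambda$, $M^+$, $M^-$ explicitly and check the remaining inequalities.

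The crucial observation, used repeatedly, is that feasibility of~\eqref{eqn:prob+-} depends only on the \emph{support} of $A$: by Theorem~\ref{thm:h}, $h(A) = 0$ precisely when the graph encoded by $A$ is acyclic, which is a property of which entries of $A$ are nonzero, not their magnitudes. Two consequences drive the argument. First, rescaling a present edge, $A_{ij} > 0$, by a small amount leaves the support and hence feasibility unchanged. Second, activating an absent edge, $A_{ij} = 0$, preserves acyclicity if and only if there is no directed path from $j$ to $i$, i.e.\ (by Lemma~\ref{lem:gradh_ij}) if and only if $H_{ij} = 0$; moreover since $A_{ij}$ sits in column $j$, Lemma~\ref{lem:simplePath} guarantees that turning it on does not change the value of $H_{ij}$, so together with Lemma~\ref{lem:Agradh} the perturbed matrix still satisfies $A \circ \nabla h(A) = 0$ and remains feasible.

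With this I would carry out an entrywise case analysis at a local minimum $(W^+, W^-)$. If $A_{ij} > 0$ then edge $(i,j)$ is present, so acyclicity forbids a path $j \to i$ and Lemma~\ref{lem:gradh_ij} gives $H_{ij} = 0$; since the edge weight may be perturbed up or down while staying feasible, the corresponding partial derivative of $F$ vanishes, yielding $G_{ij} = 0$. If $A_{ij} = 0$ and $H_{ij} = 0$, then by the second observation we may raise $W^+_{ij}$ or $W^-_{ij}$ slightly off zero in either direction without leaving the feasible set; comparing the two admissible one-sided perturbations forces both $G_{ij} \geq 0$ and $-G_{ij} \geq 0$, again giving $G_{ij} = 0$. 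The only entries escaping a gradient condition are those with $A_{ij} = 0$ and $H_{ij} > 0$ (including the diagonal, where $H_{ii} = c_1 > 0$), precisely the directions in which activating the edge would close a cycle and are therefore infeasible.

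It then remains to assemble the multipliers. Because there are finitely many entries, I would set $\lambda = \max\bigl(0, \max_{(i,j):\, H_{ij} > 0} \abs{G_{ij}}/H_{ij}\bigr)$ and define $M^\pm = \pm G + \lambda H$. For entries with $H_{ij} > 0$ the choice of $\lambda$ gives $\lambda H_{ij} \geq \abs{G_{ij}}$, so $M^\pm_{ij} \geq 0$; for all other entries the case analysis gives $G_{ij} = H_{ij} = 0$, whence $M^\pm_{ij} = 0$. Complementary slackness follows because $W^+_{ij} > 0$ (or $W^-_{ij} > 0$) forces $A_{ij} > 0$, hence $H_{ij} = 0$ and $G_{ij} = 0$, hence $M^\pm_{ij} = 0$. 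I expect the main obstacle to be exactly the feasible-direction analysis of the preceding paragraph: with no constraint qualification available, the burden is to characterize precisely which infinitesimal edge activations keep the graph acyclic, and the graph-theoretic Lemmas~\ref{lem:gradh_ij}, \ref{lem:simplePath}, and~\ref{lem:Agradh} are what make this characterization clean enough that the multipliers can be written down by hand.
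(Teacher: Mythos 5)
Your proof is correct and follows essentially the same route as the paper's: an entrywise case split on the sign of $(\nabla h(A))_{ij}$, with Lemmas~\ref{lem:gradh_ij}, \ref{lem:simplePath}, and \ref{lem:Agradh} supplying exactly the feasible perturbations that force $(\nabla F(W^+ - W^-))_{ij} = 0$ when $(\nabla h(A))_{ij} = 0$, and a sufficiently large $\lambda$ with $M^\pm$ taken as the resulting slack where $(\nabla h(A))_{ij} > 0$ (where feasibility plus Lemma~\ref{lem:Agradh} already gives $W^+_{ij} = W^-_{ij} = 0$). The only cosmetic difference is that you subdivide the $(\nabla h(A))_{ij} = 0$ case according to whether $A_{ij} > 0$, which the paper handles in one stroke by noting the only potential obstacle to feasibility is the introduction of a new edge.
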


\subsubsection{Relationships with explicit edge absence constraints}
\label{sec:theory:abs:zeros}

We now discuss relationships between the KKT conditions \eqref{eqn:KKT} and the optimality conditions for score optimization problems with explicit edge absence constraints, which correspond to zero-value constraints on the matrix $W$. Given a set $\cZ$ of such constraints, we consider the problem
\begin{equation}\label{eqn:probZeros}
    \min_{W} \; F(W) \quad \text{s.t.} \quad W_{ij} = 0, \quad (i,j) \in \cZ
\end{equation}
and denote by $W^*(\cZ)$ an optimal solution. The necessary conditions of optimality for \eqref{eqn:probZeros} are
\begin{subequations}\label{eqn:optCondZeros}
\begin{align}
    (\nabla F(W))_{ij} &= 0, \quad (i,j) \notin \cZ,\\
    W_{ij} &= 0, \quad (i,j) \in \cZ.
\end{align}
\end{subequations}

In one direction, given a KKT point $(W^+, W^-)$, we define the set 
\begin{equation}\label{eqn:setP}
\cP := \{(i,j): (\nabla h(W^+ + W^-))_{ij} > 0\},
\end{equation}
i.e.~the set of $(i,j)$ with directed walks from $j$ to $i$, according to Lemma~\ref{lem:gradh_ij}. 
\begin{lemma}\label{lem:KKTzeros}
If $(W^+, W^-)$ satisfies the KKT conditions in \eqref{eqn:KKT}, then $W^* = W^+ - W^-$ satisfies the optimality conditions in \eqref{eqn:optCondZeros} for $\cZ = \cP$. If in addition $F(W)$ is convex, then $W^*$ is a minimizer of \eqref{eqn:probZeros} for $\cZ = \cP$.
\end{lemma}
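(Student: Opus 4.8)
The plan is to verify the two optimality conditions in \eqref{eqn:optCondZeros} coordinate-by-coordinate, splitting into the cases $(i,j) \notin \cP$ and $(i,j) \in \cP$, and then to upgrade stationarity to global minimality in the convex case. Throughout I write $A = W^+ + W^-$ and $W^* = W^+ - W^-$. The one preliminary observation I would record is that feasibility of $(W^+,W^-)$ forces $h(A) = 0$: the constraint $h(A) \le 0$ together with $h(A) \ge 0$ for non-negative $A$ (from \eqref{eqn:h}) gives equality, so by Theorem~\ref{thm:h} the graph with adjacency matrix $A$ is acyclic, and Lemma~\ref{lem:Agradh} then yields $A \circ \nabla h(A) = 0$.

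For the free coordinates $(i,j) \notin \cP$, I would extract a two-sided bound on $\nabla F(W^*)$ directly from the stationarity equations \eqref{eqn:KKTgrad}. Reading off that $M^+ = \nabla F(W^*) + \lambda\nabla h(A) \ge 0$ and $M^- = -\nabla F(W^*) + \lambda\nabla h(A) \ge 0$ gives $-\lambda(\nabla h(A))_{ij} \le (\nabla F(W^*))_{ij} \le \lambda(\nabla h(A))_{ij}$ for every $(i,j)$. By the definition \eqref{eqn:setP} of $\cP$, $(\nabla h(A))_{ij} = 0$ whenever $(i,j)\notin\cP$, so the bound collapses to $(\nabla F(W^*))_{ij} = 0$, which is exactly the first condition of \eqref{eqn:optCondZeros}. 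Notably this step uses only the non-negativity of the multiplier matrices $M^\pm$, not the complementary slackness \eqref{eqn:KKTcomplementary}.

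For the constrained coordinates $(i,j) \in \cP$, I would obtain $W^*_{ij} = 0$ from feasibility rather than from the stationarity conditions. By definition $(\nabla h(A))_{ij} > 0$ for such $(i,j)$, while the preliminary observation gives $(A \circ \nabla h(A))_{ij} = A_{ij}(\nabla h(A))_{ij} = 0$; hence $A_{ij} = 0$. Since $A = W^+ + W^-$ with $W^+, W^- \ge 0$, this forces $W^+_{ij} = W^-_{ij} = 0$ and therefore $W^*_{ij} = 0$, the second condition of \eqref{eqn:optCondZeros}. I expect this linking step to be the crux of the argument: the point is that $W^*_{ij}$ must vanish on $\cP$ not because of any multiplier balance but because a nonzero entry there would, combined with the directed walk from $j$ to $i$ guaranteed by $(i,j)\in\cP$ (Lemma~\ref{lem:gradh_ij}), close a directed cycle and contradict acyclicity. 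Together with the previous paragraph, this shows $W^*$ satisfies \eqref{eqn:optCondZeros} for $\cZ = \cP$.

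Finally, for convex $F$, problem \eqref{eqn:probZeros} with $\cZ = \cP$ minimizes a convex function over the linear subspace $\{W : W_{ij} = 0,\ (i,j)\in\cP\}$. I would argue that the conditions just verified say precisely that $W^*$ is feasible and that the components of $\nabla F(W^*)$ along every free coordinate vanish, i.e.\ the projection of $\nabla F(W^*)$ onto the feasible subspace is zero. For a differentiable convex function this first-order stationarity over a convex set is sufficient for global optimality, so $W^*$ is a minimizer of \eqref{eqn:probZeros}. This part is routine once the optimality conditions are in hand; the only care needed is to note that the feasible set is a subspace, so stationarity must hold against directions of both signs, which is why vanishing (rather than merely one-signed) partial derivatives on the free coordinates is the correct condition.
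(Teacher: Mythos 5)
Your proof is correct and follows essentially the same route as the paper's: the identical case split on $(\nabla h(A))_{ij} = 0$ versus $(\nabla h(A))_{ij} > 0$, using feasibility together with Lemma~\ref{lem:Agradh} to force $W^*_{ij} = 0$ on $\cP$, and invoking convexity to upgrade the stationarity conditions \eqref{eqn:optCondZeros} to global optimality over the subspace $\{W : W_{ij} = 0,\ (i,j)\in\cP\}$. The one point where you are more careful than the paper is the $(i,j)\notin\cP$ case: the paper cites the proof of Theorem~\ref{thm:necessary}, where $(\nabla F(W^*))_{ij} = 0$ was deduced from local minimality, whereas under the present hypothesis (a KKT point) one needs exactly your direct two-sided bound $-\lambda(\nabla h(A))_{ij} \le (\nabla F(W^*))_{ij} \le \lambda(\nabla h(A))_{ij}$ from $M^\pm \ge 0$, so your self-contained derivation fills in the step the paper leaves implicit.
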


Under the assumption that $F$ is convex, we can use Lemma~\ref{lem:KKTzeros} to show that the KKT conditions \eqref{eqn:KKT} are \emph{sufficient} for local minimality in \eqref{eqn:probOrig}, despite the constraint $h(\abs{W}) \leq 0$ not being convex.
\begin{theorem}\label{thm:KKTsuff}
Assume that $F(W)$ is convex. Then if $(W^+, W^-)$ satisfies the KKT conditions in \eqref{eqn:KKT}, $W^* = W^+ - W^-$ is a local minimum for problem \eqref{eqn:probOrig}.
\end{theorem}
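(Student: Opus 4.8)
The plan is to prove local minimality of $W^* = W^+ - W^-$ for problem \eqref{eqn:probOrig} by leveraging Lemma~\ref{lem:KKTzeros}, which already tells us that under convexity of $F$, the point $W^*$ is a \emph{global} minimizer of the edge-constrained problem \eqref{eqn:probZeros} with $\cZ = \cP$, where $\cP = \{(i,j) : (\nabla h(W^+ + W^-))_{ij} > 0\}$. The strategy is to show that a small feasible perturbation of $W^*$ in problem \eqref{eqn:probOrig} cannot decrease $F$, by arguing that any such perturbation is, in a neighborhood, effectively constrained to respect the zero pattern on $\cP$. So first I would fix $W^*$, set $A^* = \abs{W^*} = W^+ + W^-$, and note that $W^*$ is feasible for \eqref{eqn:probOrig} since $h(A^*) = 0$. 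Then I would consider an arbitrary feasible $W$ with $\norm{W - W^*}$ small and aim to show $F(W) \geq F(W^*)$.

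The crux is to establish that for small feasible perturbations, $W_{ij} = 0$ must hold for all $(i,j) \in \cP$; this would let me invoke the global optimality of $W^*$ over \eqref{eqn:probZeros} to conclude $F(W) \geq F(W^*)$. To see why the zero pattern on $\cP$ is forced, I would use the graphical characterization from Lemma~\ref{lem:gradh_ij}: each $(i,j) \in \cP$ corresponds to a directed walk (hence a directed path) from $j$ to $i$ in the graph of $A^*$. Because $A^*$ is nilpotent (Lemma~\ref{lem:nilpotent}, as $W^*$ is feasible), we have $A^*_{ij} = 0$ for every $(i,j) \in \cP$ by Lemma~\ref{lem:Agradh}; adding a nonzero $W_{ij}$ with $(i,j) \in \cP$ would close a cycle with the existing path from $j$ to $i$. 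The key stability claim, which I expect follows from Lemma~\ref{lem:simplePath}, is that the underlying directed path from $j$ to $i$ persists under the perturbation: its supporting edges $A^*_{kl} > 0$ remain strictly positive for $W$ close enough to $W^*$, so the path itself is unbroken and any nonzero $W_{ij}$ on $\cP$ would produce a cycle, violating $h(\abs{W}) = 0$. Hence feasibility forces $W_{ij} = 0$ on $\cP$.

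Making this perturbation argument rigorous is the main obstacle. The delicate point is that $\cP$ is defined by the \emph{sign pattern} of $\nabla h(A^*)$, and while the strictly positive entries of $A^*$ are stable under small perturbations, the edges \emph{outside} the support of $W^*$ can become nonzero under perturbation, potentially creating new walks and enlarging the ``return-path'' set. I would handle this by choosing the neighborhood small enough that (i) all edges in the support of $W^*$ stay strictly nonzero, so every directed path underlying a pair in $\cP$ survives, and (ii) appeal to Lemma~\ref{lem:simplePath} to argue that introducing nonzero entries in a single column $j$ cannot destroy an existing return path from $j$ to $i$ that does not pass through that column in a conflicting way — the lemma's content being precisely that the return-path indicator $(\nabla h(A))_{ij} > 0$ is insensitive to changes in the $j$th column $A_{\cdot j}$. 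Thus for each $(i,j) \in \cP$, a nonzero $W_{ij}$ together with the surviving path from $j$ to $i$ forms a cycle, contradicting $h(\abs{W}) = 0$; this pins $W$ to the zero pattern on $\cP$ locally.

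Finally, with $W$ restricted to satisfy $W_{ij} = 0$ for $(i,j) \in \cP$ in a neighborhood of $W^*$, I would conclude by convexity: since Lemma~\ref{lem:KKTzeros} gives that $W^*$ globally minimizes $F$ subject to exactly these zero constraints, every such nearby feasible $W$ satisfies $F(W) \geq F(W^*)$. This establishes that $W^*$ is a local minimum of \eqref{eqn:probOrig} and completes the proof. I would emphasize that convexity enters only through Lemma~\ref{lem:KKTzeros} to upgrade the stationarity conditions \eqref{eqn:optCondZeros} to genuine optimality over the edge-constrained problem, while the nonconvexity of the acyclicity constraint $h(\abs{W}) \leq 0$ is absorbed entirely into the combinatorial/graphical argument that locally confines feasible perturbations to the zero pattern on $\cP$.
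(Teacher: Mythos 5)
Your proposal is correct and has the same skeleton as the paper's proof: confine nearby feasible points to the zero pattern $W_{ij} = 0$ on $\cP$, then invoke Lemma~\ref{lem:KKTzeros} with convexity to get $F(W) \geq F(W^*)$. The one place you diverge is the mechanism for the local confinement step. The paper argues analytically: $\nabla h(A)$ is a continuous function of $A$, so for $\epsilon$ small enough $(\nabla h(A))_{ij} > 0$ persists on $\cP$, and then Lemma~\ref{lem:Agradh} applied at the \emph{perturbed} point $A = \abs{W}$ directly forces $A_{ij} = 0$ there. You argue combinatorially: shrink the neighborhood so the supporting edges of $W^*$ stay nonzero, so each directed path from $j$ to $i$ underlying a pair $(i,j) \in \cP$ survives, and a nonzero $W_{ij}$ would close a cycle, contradicting $h(\abs{W}) = 0$. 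Both mechanisms are valid and of comparable length; yours requires a minimum-magnitude threshold on the support of $W^*$ while the paper's continuity argument does not, but yours makes the cycle-creation intuition explicit. One remark: your step (ii) appealing to Lemma~\ref{lem:simplePath}, and the accompanying worry that perturbations outside the support of $W^*$ might ``enlarge the return-path set,'' are unnecessary. Within your neighborhood no edge of $A^*$ is removed, and adding edges can only create walks, never destroy them, so the paths underlying $\cP$ survive by your point (i) alone; an enlarged positivity set of $\nabla h$ is harmless since the argument only needs zeros on $\cP$ itself, not on its complement. (Lemma~\ref{lem:simplePath} also concerns changes restricted to a single column $A_{\cdot j}$, whereas your perturbations touch all entries, so it would not even apply as stated.) Dropping step (ii) leaves a clean, correct proof.
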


In the opposite direction of Lemma~\ref{lem:KKTzeros}, 
we focus on the case in which a minimizer $W^*(\cZ)$ of \eqref{eqn:probZeros} is feasible, i.e.~$h(A^*(\cZ)) = 0$ for $A^*(\cZ) = \abs{W^*(\cZ)}$. Then by Lemma~\ref{lem:Agradh}, we must have $(W^*(\cZ))_{ij} = 0$ wherever $\bigl(\nabla h(A^*(\cZ))\bigr)_{ij} > 0$. If $\cZ$ does not include such a pair $(i,j)$, we may add $(i,j)$ to $\cZ$ while preserving the optimality of the existing solution $W^*(\cZ)$ with respect to \eqref{eqn:probZeros} (since it already satisfies the new constraint $W_{ij} = 0$). Hence for feasible $W^*(\cZ)$, we adopt the convention that all $(i,j)$ with $(W^*(\cZ))_{ij} = 0$ and $\bigl(\nabla h(A^*(\cZ))\bigr)_{ij} > 0$ are included in $\cZ$. 

We call $\cZ$ \emph{irreducible} if it contains \emph{only} pairs $(i,j)$ for which $\bigl(\nabla h(A^*(\cZ))\bigr)_{ij} > 0$. 
\begin{theorem}\label{thm:zerosKKT}
If a minimizer $W^*(\cZ)$ of \eqref{eqn:probZeros} is feasible and $\cZ$ is irreducible, then $W^+ = (W^*(\cZ))_+$, $W^- = (W^*(\cZ))_-$ satisfy the KKT conditions in \eqref{eqn:KKT}.
\end{theorem}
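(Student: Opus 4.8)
The plan is to construct the Lagrange multipliers explicitly and verify the KKT conditions entrywise, exploiting the fact that feasibility and irreducibility together pin down exactly which entries are constrained. Write $W^* = W^*(\cZ)$ and $A^* = \abs{W^*}$, and set $W^+ = (W^*)_+$, $W^- = (W^*)_-$, so that $W^+ + W^- = A^*$, $W^+ - W^- = W^*$, and $W^\pm \geq 0$. Feasibility of $(W^+, W^-)$ for \eqref{eqn:prob+-} is immediate since $h(W^+ + W^-) = h(A^*) = 0$ by hypothesis. I will then \emph{define} the multiplier matrices directly from \eqref{eqn:KKTgrad}, namely $M^\pm := \pm \nabla F(W^*) + \lambda \nabla h(A^*)$ for a scalar $\lambda \geq 0$ to be chosen, so that the gradient equation \eqref{eqn:KKTgrad} holds by construction. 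It then remains to choose $\lambda$ so that $M^+, M^- \geq 0$ and the complementary slackness \eqref{eqn:KKTcomplementary} holds.

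The verification splits into two cases according to the value of $(\nabla h(A^*))_{ij}$, and the crux is that the two cases are handled by two different hypotheses. On entries with $(\nabla h(A^*))_{ij} = 0$: irreducibility says $\cZ$ contains only pairs where $\nabla h$ is positive, so such $(i,j) \notin \cZ$, and hence the optimality conditions \eqref{eqn:optCondZeros} give $(\nabla F(W^*))_{ij} = 0$; both $M^+_{ij}$ and $M^-_{ij}$ therefore vanish, so non-negativity and complementary slackness hold trivially, regardless of $\lambda$. On entries with $(\nabla h(A^*))_{ij} > 0$: feasibility together with Lemma~\ref{lem:Agradh} gives $A^* \circ \nabla h(A^*) = 0$, forcing $A^*_{ij} = 0$ and hence $W^+_{ij} = W^-_{ij} = 0$; complementary slackness \eqref{eqn:KKTcomplementary} is then automatic, and non-negativity reduces to the scalar inequalities $\lambda (\nabla h(A^*))_{ij} \geq \abs{(\nabla F(W^*))_{ij}}$.

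To close the argument I choose $\lambda$ to be the maximum of $\abs{(\nabla F(W^*))_{ij}} / (\nabla h(A^*))_{ij}$ over the finitely many pairs with $(\nabla h(A^*))_{ij} > 0$ (this set is nonempty, as $(\nabla h(A^*))_{ii} \geq c_1 > 0$ on the diagonal by \eqref{eqn:gradh}). This $\lambda$ is finite and non-negative and makes every inequality of the second case hold simultaneously, which together with the first case verifies all of \eqref{eqn:KKT}.

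The step I expect to carry the real content is recognizing the complementary roles of the two hypotheses. Irreducibility is precisely what is needed on the $(\nabla h)_{ij} = 0$ entries, where no multiplier slack is available to absorb a nonzero $(\nabla F)_{ij}$: without it one could have $(i,j) \in \cZ$ with $\nabla h$ vanishing there, and then $M^+_{ij} = (\nabla F(W^*))_{ij}$ and $M^-_{ij} = -(\nabla F(W^*))_{ij}$ could not both be non-negative unless $\nabla F$ happened to vanish. Dually, feasibility (through Lemma~\ref{lem:Agradh}) is what makes complementary slackness free on the $(\nabla h)_{ij} > 0$ entries, so that the only real burden there is a uniform lower bound on $\lambda$. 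Everything else is a routine entrywise check and the single global choice of $\lambda$.
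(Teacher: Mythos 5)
Your proof is correct and follows essentially the same route as the paper's: irreducibility together with the optimality conditions \eqref{eqn:optCondZeros} forces $(\nabla F(W^*))_{ij} = 0$ wherever $(\nabla h(A^*))_{ij} = 0$, feasibility with Lemma~\ref{lem:Agradh} forces $W^+_{ij} = W^-_{ij} = 0$ wherever $(\nabla h(A^*))_{ij} > 0$, and the same choice of $\lambda$ as in the proof of Theorem~\ref{thm:necessary} makes the slacks $M^\pm$ non-negative. The only (minor, and slightly tidier) difference is organizational: by splitting cases on the sign of $(\nabla h(A^*))_{ij}$ rather than on membership in $\cZ$, you avoid invoking the paper's convention that all $(i,j)$ with $(W^*(\cZ))_{ij} = 0$ and $\bigl(\nabla h(A^*(\cZ))\bigr)_{ij} > 0$ be included in $\cZ$, which the paper's proof uses to handle exactly that subcase.
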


If $W^*(\cZ)$ is feasible but $\cZ$ is not irreducible, then the following result guarantees that $\cZ$ may be reduced to an irreducible set without losing feasibility. We assume that $F(W)$ is separable (decomposable) as 
the following sum:
\[
F(W) = \sum_{j=1}^d F_{j}\bigl(W_{\cdot j}\bigr).
\]
\begin{lemma}\label{lem:reduceZ}
Assume that the score function $F(W)$ is separable. Suppose that $W^*(\cZ)$ in \eqref{eqn:probZeros} is feasible and $\cZ_0(j) = \{(i_1,j), \dots, (i_J,j)\} \subseteq \cZ$ is a subset for which $\bigl(\nabla h(A^*(\cZ)) \bigr)_{ij} = 0$, $(i,j) \in \cZ_0(j)$. Then $W^*(\cZ \backslash \cZ_0(j))$ is also feasible.
\end{lemma}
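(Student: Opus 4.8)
The plan is to exploit the separability of $F$ to confine the effect of deleting $\cZ_0(j)$ to the single column $j$, and then to argue graphically that no cycle can appear because every edge the re-optimized column can place into node $j$ provably lacks a return path. First I would use $F(W)=\sum_{j'}F_{j'}(W_{\cdot j'})$ together with the fact that the zero-value constraints in \eqref{eqn:probZeros} also decouple by column to observe that \eqref{eqn:probZeros} splits into $d$ independent per-column problems. Since every pair in $\cZ_0(j)$ lies in column $j$, passing from $\cZ$ to $\cZ\backslash\cZ_0(j)$ modifies only the column-$j$ subproblem. Writing $A=A^*(\cZ)$ and $A'=A^*(\cZ\backslash\cZ_0(j))$, this means $A$ and $A'$ agree in every column except possibly column $j$.

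Next I would pin down which positions the new column $j$ may occupy. The positions left free in column $j$ under $\cZ\backslash\cZ_0(j)$ are the originally free ones together with $i_1,\dots,i_J$. For the latter, the hypothesis gives $(\nabla h(A))_{i_k j}=0$. For the former, the convention adopted for feasible $W^*(\cZ)$ (that every $(i,j)$ with $(\nabla h(A))_{ij}>0$ is placed in $\cZ$), combined with Lemma~\ref{lem:Agradh}, yields $(\nabla h(A))_{ij}=0$ for all $(i,j)\notin\cZ$. Hence every position available to the re-optimized column $j$ satisfies $(\nabla h(A))_{ij}=0$, so by Lemma~\ref{lem:gradh_ij} there is no directed walk from $j$ to $i$ in the old graph. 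In particular, any edge $(i^*,j)$ present in $A'$ has $(\nabla h(A))_{i^* j}=0$.

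The core step is then a short cycle argument. Suppose the new graph $\cG'$ (with adjacency $A'$) had a directed cycle. Because $A'$ and $A$ coincide off column $j$ and $\cG$ is acyclic, this cycle must traverse an edge into $j$, say $(i^*,j)$; being simple it visits $j$ exactly once, so the rest of the cycle is a directed path from $j$ to $i^*$ that never returns to $j$ and therefore uses only edges lying in columns other than $j$. Those edges are identical in $A$ and $A'$, so the same path exists in the old graph, giving a directed walk from $j$ to $i^*$ in $\cG$. By Lemma~\ref{lem:gradh_ij} this forces $(\nabla h(A))_{i^* j}>0$, contradicting the previous paragraph. Thus $\cG'$ is acyclic, i.e.\ $h(A')=0$ by Theorem~\ref{thm:h}, and $W^*(\cZ\backslash\cZ_0(j))$ is feasible.

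I expect the main obstacle to be the realization that re-optimizing column $j$ can activate edges at positions that were free but \emph{unused} in $W^*(\cZ)$, not merely at the freshly released positions $i_1,\dots,i_J$. The crux is that the convention guarantees all free positions already lack a return path, so the cycle argument applies uniformly to both kinds of positions. One could instead invoke Lemma~\ref{lem:simplePath} to see that the column-$j$ entries of $\nabla h$ are insensitive to changes in column $j$ of $A$, but verifying the off-column entries of $A'\circ\nabla h(A')$ would still require the same structural observation, so I would keep the direct cycle argument as it is self-contained.
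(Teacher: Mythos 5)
Your proof is correct and takes essentially the same route as the paper's: separability confines the change to the column-$j$ subproblem, the convention on $\cZ$ together with feasibility (Lemma~\ref{lem:Agradh}) shows that every position available to the re-optimized column satisfies $\bigl(\nabla h(A^*(\cZ))\bigr)_{ij} = 0$, and acyclicity is then preserved by a graphical argument. The only difference is cosmetic: where the paper invokes Lemma~\ref{lem:simplePath}, you inline its content as a self-contained cycle-decomposition argument, an interchangeability you yourself note.
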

Since the removal of a constraint $(i,j) \in \cZ$ for which $\bigl(\nabla h(A^*(\cZ)) \bigr)_{ij} = 0$ does not affect feasibility, we call such a constraint \emph{unnecessary} as a somewhat colloquial shorthand.

Lemma~\ref{lem:reduceZ} removes a set of pairs $(i,j) \in \cZ_0(j)$ from $\cZ$ for which $\bigl(\nabla h(A^*(\cZ)) \bigr)_{ij} = 0$ while maintaining feasibility. The resulting set is then checked again for irreducibility. Since each application of Lemma~\ref{lem:reduceZ} removes at least one pair from $\cZ$, the re-optimization \eqref{eqn:reduceZ} has to be performed at most $\abs{\cZ}$ times to ensure a irreducible set.

The development in this subsection suggests the meta-algorithm in Algorithm~\ref{alg:KKTS}, which we refer to as KKT-informed local search. An instantiation is described in Section~\ref{sec:algs:KKTS}.
\begin{algorithm}
\caption{KKT-informed local search (KKTS)}
\label{alg:KKTS}
\begin{algorithmic}[1]
\Require Initial set $\cZ$ of edge absence constraints. Solve \eqref{eqn:probZeros}.
\While{$W^*(\cZ)$ infeasible}
    \State Select edge(s) in cycle ($(W^*(\cZ))_{ij} \neq 0$, $\bigl(\nabla h(A^*(\cZ)) \bigr)_{ij} > 0$). Add to $\cZ$. Re-solve \eqref{eqn:probZeros}.\label{alg:KKTS:remove}
\EndWhile
\While{$\cZ$ reducible}  
    \State Remove one or more unnecessary constraints $(i,j) \in \cZ$ for which $\bigl(\nabla h(A^*(\cZ)) \bigr)_{ij} = 0$ 
    (see Lemma~\ref{lem:reduceZ}). Re-solve \eqref{eqn:probZeros}.\label{alg:KKTS:restore}
\EndWhile
\end{algorithmic}
\end{algorithm}
\begin{theorem}\label{thm:algZC}
If $F(W)$ is separable, KKT-informed local search 
yields a solution satisfying 
the KKT conditions \eqref{eqn:KKT}.
\end{theorem}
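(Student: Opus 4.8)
The plan is to show that KKTS terminates and that at termination its two governing properties---feasibility of $W^*(\cZ)$ and irreducibility of $\cZ$---hold simultaneously, so that Theorem~\ref{thm:zerosKKT} applies directly to yield the KKT conditions. I would organize the argument around the two \texttt{while} loops of Algorithm~\ref{alg:KKTS}, handling each loop's termination and the invariant it establishes in turn, and then invoking Theorem~\ref{thm:zerosKKT} at the end.

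For the first loop, I would argue that it terminates in a feasible solution. Whenever $W^*(\cZ)$ is infeasible, Lemma~\ref{lem:Agradh} guarantees a pair $(i,j)$ with $(A^*(\cZ))_{ij} > 0$ and $\bigl(\nabla h(A^*(\cZ))\bigr)_{ij} > 0$, i.e.~an edge lying on a cycle, so line~\ref{alg:KKTS:remove} always has an edge to select. Since $(W^*(\cZ))_{ij} \neq 0$, this pair is not already in $\cZ$, so each iteration strictly enlarges $\cZ$. Because $\cZ \subseteq \cV \times \cV$ is finite (and in the extreme $\cZ$ forces $W = 0$, which is acyclic), the loop must exit after finitely many steps, and it can only exit when the loop condition fails, i.e.~when $W^*(\cZ)$ is feasible.

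For the second loop, I would establish the invariant that feasibility is preserved while $\cZ$ is driven to irreducibility. Each iteration removes at least one unnecessary constraint, i.e.~a pair $(i,j) \in \cZ$ with $\bigl(\nabla h(A^*(\cZ))\bigr)_{ij} = 0$; by Lemma~\ref{lem:reduceZ} (which is exactly where separability of $F$ is used), re-solving after such a removal keeps $W^*$ feasible, so we never need to return to the first loop. Removing one constraint at a time suffices for the argument: each removal strictly shrinks $\cZ$, so the loop terminates after at most $\abs{\cZ}$ steps, and it exits precisely when no unnecessary constraint remains, i.e.~when $\cZ$ is irreducible. The main subtlety to handle carefully is that re-solving changes $A^*(\cZ)$ and hence $\nabla h(A^*(\cZ))$, so the unnecessary constraints must be re-evaluated against the current iterate at every step; Lemma~\ref{lem:simplePath} underpins Lemma~\ref{lem:reduceZ} by ensuring that altering a column of $A$ cannot resurrect a directed walk that would make a removed constraint necessary, which is what keeps feasibility intact under the column-by-column re-optimization.

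Finally, upon termination of both loops the iterate $W^*(\cZ)$ is feasible and $\cZ$ is irreducible, so Theorem~\ref{thm:zerosKKT} yields that $W^+ = (W^*(\cZ))_+$, $W^- = (W^*(\cZ))_-$ satisfy the KKT conditions in \eqref{eqn:KKT}, completing the proof. I expect the chief obstacle to be verifying that the second loop genuinely maintains feasibility despite the re-optimization altering $\nabla h$ across columns; this is precisely the content of Lemma~\ref{lem:reduceZ}, and the remaining work is the bookkeeping that guarantees the two invariants---feasibility and irreducibility---coexist at termination so that the hypotheses of Theorem~\ref{thm:zerosKKT} are met.
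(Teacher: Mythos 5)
Your proposal is correct and follows essentially the same route as the paper's proof: first loop terminates in a feasible $W^*(\cZ)$ since $\cZ$ can only grow (and the empty graph is acyclic), second loop preserves feasibility via Lemma~\ref{lem:reduceZ} while driving $\cZ$ to irreducibility, and Theorem~\ref{thm:zerosKKT} then gives the KKT conditions. In fact your write-up is more detailed than the paper's three-sentence argument, correctly pinpointing where separability enters (only through Lemma~\ref{lem:reduceZ}) and handling the termination bookkeeping that the paper leaves implicit.
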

When combined with Theorem~\ref{thm:KKTsuff} and a convex $F(W)$, Theorem~\ref{thm:algZC} guarantees that KKT-informed local search will result in local minima. However, due to the non-convex constraint, the quality of such local minima is highly dependent on the particular instantiation of the meta-algorithm. Section~\ref{sec:expt} shows for example that the choice of initialization plays a large role.

\section{Algorithms}
\label{sec:algs}

For the algorithms in this section, we let the score function $F(W)$ be the sum of a smooth loss function $\ell(W; \bX)$ with respect to the data $\bX$ and an $\ell_1$ penalty to promote overall sparsity, as in \cite{zheng2018dags}: 
\begin{equation}\label{eqn:scoreL1}
F(W) = \ell(W; \bX) + \tau \norm{W}_1.
\end{equation}

\subsection{Augmented Lagrangian with absolute value adjacency matrix}
\label{sec:algs:ALabs}

Formulation~\eqref{eqn:probOrig} naturally suggests an augmented Lagrangian algorithm as in \cite{zheng2018dags} but with $h(\abs{W})$ instead of $h(W \circ W)$. 
Using the $(W^+, W^-)$ representation as in \eqref{eqn:prob+-}, the augmented Lagrangian minimized in each iteration is 
\[
L(W^+, W^-, \alpha, \rho) = \ell\bigl(W^+ - W^-; \bX\bigr) + \tau \ones^T \bigl(W^+ + W^-\bigr) \ones + \alpha h\bigl(W^+ + W^-\bigr) + \frac{\rho}{2} h\bigl(W^+ + W^-\bigr)^2,
\]
subject to $W^+ \geq 0$ and $W^- \geq 0$, where $\ones$ is a vector of ones. The gradients 
with respect to $W^+, W^-$ 
are given by
\[
\nabla_{W^\pm} L(W^+, W^-, \alpha, \rho) = \pm \nabla\ell\bigl(W^+ - W^-; \bX\bigr) + \tau \ones \ones^T + \left(\alpha + \rho h\bigl(W^+ + W^-\bigr) \right) \nabla h\bigl(W^+ + W^-\bigr).
\]
We otherwise closely follow the algorithm in \cite{zheng2018dags}. 

\subsection{KKT-informed local search}
\label{sec:algs:KKTS}

We now describe an instantiation of the KKT-informed local search meta-algorithm in Algorithm~\ref{alg:KKTS}. 
This involves initializing the set $\cZ$ of edge absence constraints, selecting edges for removal (line~\ref{alg:KKTS:remove}), reducing unnecessary constraints (line~\ref{alg:KKTS:restore}), and re-solving \eqref{eqn:probZeros}. 
We also discuss an additional operation of reversing edges, which is not part of Algorithm~\ref{alg:KKTS} but helps in attaining better local minima. 

\paragraph{Initializing $\cZ$}
We allow any matrix $W$ to serve as an initial solution. To define the set $\cZ$, we set to zero elements in $W$ that are smaller than a threshold $\omega$ in absolute value. We then let $\cZ = \{(i,j): W_{ij} = 0\}$.

\paragraph{Selecting edges for removal (line~\ref{alg:KKTS:remove})} There are many possible ways of selecting edges to break cycles. Here we
consider an approach of minimizing the Lagrangian $F(W) + \alpha h(\abs{W})$ of \eqref{eqn:probOrig} subject to the existing constraints $W_{ij} = 0$ for $(i,j) \in \cZ$. 
The Lagrangian thus trades off minimizing the score function against reducing infeasibility. 
For $\alpha = 0$, the minimizer is the existing solution $W^*(\cZ)$, and as $\alpha$ increases, weights $W_{ij}$ will be set to zero to decrease the infeasibility penalty $h(\abs{W})$.

We implement a computationally simple version of the above idea. First, $h(A) = h(\abs{W})$ in the Lagrangian is linearized around $A^*(\cZ) = \abs{W^*(\cZ)}$ as 
\[
h(A) \approx h\bigl(A^*(\cZ)\bigr) + \left\langle \nabla h\bigl(A^*(\cZ)\bigr), A - A^*(\cZ) \right\rangle.
\]
After dropping constant terms and expanding the inner product, the constrained, linearized Lagrangian to be minimized is as follows: 
\begin{equation}\label{eqn:selectRemove}
    \min_W \;\; F(W) + \alpha \sum_{(i,j): i\neq j} \bigl(\nabla h(A^*(\cZ))\bigr)_{ij} \abs{W_{ij}} \quad \text{s.t.} \quad W_{ij} = 0, \quad (i,j) \in \cZ.
\end{equation}
Problem~\eqref{eqn:selectRemove} is a score minimization problem with a weighted $\ell_1$ penalty and 
zero-value constraints, i.e.~the corresponding parameters are simply absent. 
Furthermore, 
in the common case where 
$F(W)$ is separable column-wise, \eqref{eqn:selectRemove} is also separable.

Second, 
$\alpha$ is increased from zero only until a single existing edge $(i,j)$ (with $(W^*(\cZ))_{ij} \neq 0$) belonging to a cycle ($\bigl(\nabla h(A^*(\cZ))\bigr)_{ij} > 0$) is set to zero. This involves following the solution \emph{path} of \eqref{eqn:selectRemove} defined by $\alpha$ from $W^*(\cZ)$ at $\alpha = 0$ until the first additional edge is removed. 
If $\ell(W; \bX)$ in \eqref{eqn:scoreL1} is the least-squares loss, the solution path is piecewise linear and we have implemented a modified version of the LARS algorithm \cite{efron2004} to efficiently track the path. The modification accounts for the non-uniformity of the weights $\bigl(\nabla h(A^*(\cZ))\bigr)_{ij}$, some of which may be zero, in the $\ell_1$ penalty in \eqref{eqn:selectRemove}. It is described further in Appendix~\ref{sec:algsAdd:path}.

\paragraph{Reducing unnecessary constraints (line~\ref{alg:KKTS:restore})}
We also refer to this step as restoring edges (``restore'' because these edges were likely present in an earlier iteration when $W$ was denser), in analogy with the previous step which removes edges. When there are multiple unnecessary constraints, the order in which they 
are removed can matter because the removal of constraints and 
re-optimization of \eqref{eqn:probZeros}
can make previously unnecessary constraints necessary. Because of this, even though Lemma~\ref{lem:reduceZ} allows for multiple unnecessary constraints $(i_1,j), \dots, (i_J,j)$ to be removed at a time, we opt to do so more gradually, 
only one at a time. To decide among multiple unnecessary constraints $(i,j)$, we greedily choose one for which the absolute partial derivative of the loss function, $\abs{(\nabla \ell(W; \bX))_{ij}}$, is largest. Since $\abs{(\nabla \ell(W; \bX))_{ij}}$ is the marginal rate of decrease of the loss as the constraint $W_{ij} = 0$ is relaxed, this strategy gives the largest marginal rate of decrease. We note also that if $\abs{(\nabla \ell(W; \bX))_{ij}} \leq \tau$, relaxing the constraint $W_{ij} = 0$ does not change its value because $W_{ij} = 0$ already satisfies the optimality conditions for minimizing $F(W)$.

\paragraph{Reversing edges}
In addition to removing and restoring edges, 
we consider reversing edges, which involves two operations: adding $(i,j)$ to $\cZ$ to remove an existing edge $(W^*(\cZ))_{ij} \neq 0$, and removing $(j,i)$ from $\cZ$ (which must have been a necessary constraint if $W^*(\cZ)$ is feasible, to avoid a $2$-cycle) to introduce the opposite edge. 
In contrast to removing edges, which generally increases $F(W)$ but decreases $h(A)$, and restoring edges, which decreases $F(W)$ and is guaranteed by Lemma~\ref{lem:reduceZ} not to increase $h(A)$, reversing edges does not necessarily decrease $F(W)$ or $h(A)$. We therefore \emph{accept} an edge reversal only if it decreases one of $F(W)$, $h(A)$ relative to the original direction and does not increase the other, and otherwise \emph{reject} the reversal.

There are many possible variations in when to perform edge reversals within Algorithm~\ref{alg:KKTS}. In our implementation, we restrict reversals to the second while-loop and alternate between restoring one edge (reducing $\cZ$ by one) and attempting all possible reversals given the current state. When there are multiple reversal candidates, similar to restoring edges, we evaluate the loss partial derivatives $\abs{(\nabla \ell(W; \bX))_{ji}}$, this time associated with introducing the reverse edges $(j,i)$, and proceed in order of decreasing $\abs{(\nabla \ell(W; \bX))_{ji}}$. 

The edge reversal operation is made much more efficient by keeping a memory of previously attempted reversals that do not have to be attempted again for some time. When the reversal of edge $(i,j)$ is attempted, it is recorded in the memory, and if the reversal is accepted, reversal of $(j,i)$ is also added to the memory as it would revert to the previous inferior state. The memory for $(i,j)$ is cleared when either column $i$ or $j$ is updated since this may change the value of reversing $(i,j)$. 

\paragraph{Re-solving \eqref{eqn:probZeros} (lines \ref{alg:KKTS:remove}, \ref{alg:KKTS:restore})}
Removing, restoring, and reversing edges all involve re-solving \eqref{eqn:probZeros} after adding to $\cZ$, reducing $\cZ$, 
or both in the case of reversals. When $\ell(W; \bX)$ in \eqref{eqn:scoreL1} is the least-squares loss, these re-optimizations can be done efficiently using the LARS algorithm. In the case of adding $(i,j)$ to $\cZ$, an increasing penalty is imposed on $\abs{W_{ij}}$, while in the case of 
removing $(i,j)$ from $\cZ$, a penalty equivalent to the constraint $W_{ij} = 0$ is inferred and then decreased to zero. Further 
details are in Appendix~\ref{sec:algsAdd}.

\section{Experiments}
\label{sec:expt}

We compare the structure learning performance of the following base algorithms: NOTEARS \cite{zheng2018dags}, the FGS implementation \cite{ramsey2017million} of GES \cite{chickering02}, MMHC \cite{MMPCcor}, PC \cite{PCalgorithm}, augmented Lagrangian with absolute value adjacency matrix $A = \abs{W}$ (Section~\ref{sec:algs:ALabs}, abbreviated `Abs'), 
and KKT-informed local search (Section~\ref{sec:algs:KKTS}, KKTS) initialized with the unconstrained solution ($\cZ = \{(i,i)$, $i\in\cV\}$ just to avoid self-loops). 
We also experimented with CAM \cite{buhlmann2014cam} but defer those results to Appendix~\ref{sec:exptAdd:results} as we found them less competitive in the tested settings. In addition, we use each of the above base algorithms to initialize KKTS (denoted by appending `-KKTS' and excepting KKTS itself). Algorithm parameter settings are detailed in Appendix~\ref{sec:exptAdd:params}. Of note are the default termination tolerance on $h$, $\epsilon = 10^{-10}$, and the threshold on $W$, $\omega = 0.3$ following \cite{zheng2018dags}, applied after NOTEARS, Abs, and KKTS as well as to initialize $\cZ$ before KKTS.

\begin{figure}[t]
  \centering
  \includegraphics[width=\linewidth]{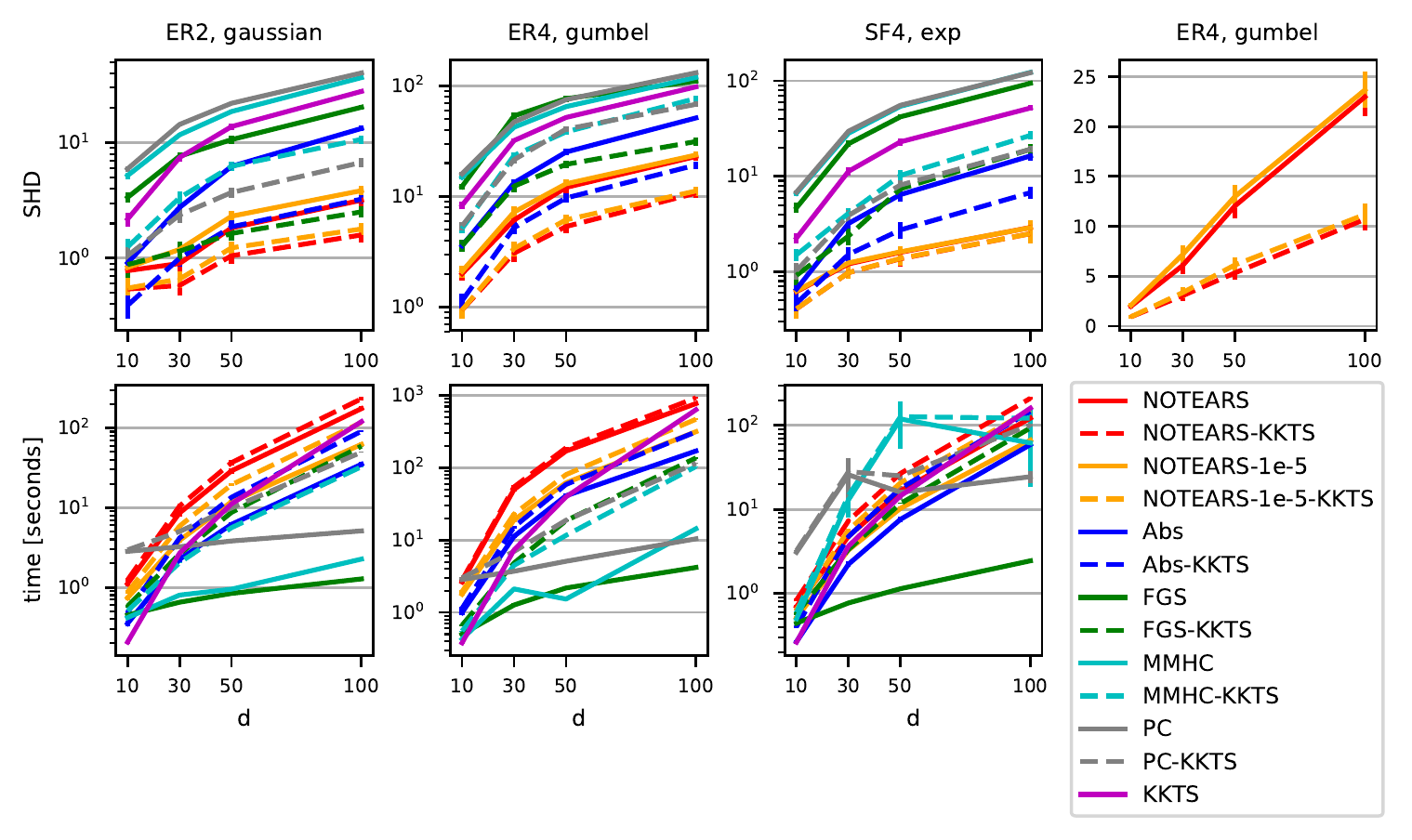}
  \caption{Structural Hamming distances (SHD) with respect to true graph and solution times for $n = 1000$. Error bars indicate standard errors over $100$ trials. Red lines overlap with orange in the SF4 SHD plot. The upper right panel focuses on combinations with NOTEARS using a linear vertical scale.}
  \label{fig:main}
\end{figure}

The experimental setup is similar to \cite{zheng2018dags}. In brief, random Erd\"{o}s-R\'{e}nyi or scale-free graphs are generated with $kd$ expected edges (denoted ER$k$ or SF$k$), and uniform random weights $W$ are assigned to the edges. Data $\bX \in \mathbb{R}^{n\times d}$ is then generated by taking $n$ i.i.d.~samples from the linear SEM $X = W^T X + z$, where $z$ is either Gaussian, Gumbel, or exponential noise. $100$ trials are performed for each graph type-noise type combination, which is an order of magnitude larger than in e.g.~\cite{zheng2018dags,yu2019dag} and reduces the standard errors of the estimated means.

Figure~\ref{fig:main} shows structural Hamming distances (SHD) with respect to the true graph and running times for three graph-noise combinations and $n = 1000$. Figure~\ref{fig:main_n_2d} shows the same metrics and combinations for the more challenging setting $n = 2d$, with largely similar patterns. Other graph-noise combinations, results in tabular form, and computing environment details are in Appendix~\ref{sec:exptAdd}. 

We focus first on the base algorithms (solid lines), of which NOTEARS is clearly the best in terms of SHD.\footnote{The SHDs for NOTEARS and FGS in Figure~\ref{fig:main} are much better than those reported in \cite{zheng2018dags}, by almost an order of magnitude in some cases. Part of the improvement is due to code updates for NOTEARS but the rest we cannot explain. We also show in Appendix~\ref{sec:exptAdd:meanSub} that subtracting the mean from $\bX$ improves the SHD by a noticeable factor for some noise types. All results in Figure~\ref{fig:main} are obtained with zero-mean $\bX$.} Abs is next and better than FGS, MMHC, and PC. We hypothesize that the smoothness of the quadratic adjacency $A = W \circ W$ used by NOTEARS is better able to overcome non-convexity than the non-smooth $A = \abs{W}$ of Abs, which tends to force parameters $W_{ij}$ to zero, perhaps too soon. 
The non-convexity is further reflected in the inferior performance of (pure) KKTS, which only takes local steps starting from the unconstrained solution.

\begin{figure}[t]
  \centering
  \includegraphics[width=\linewidth]{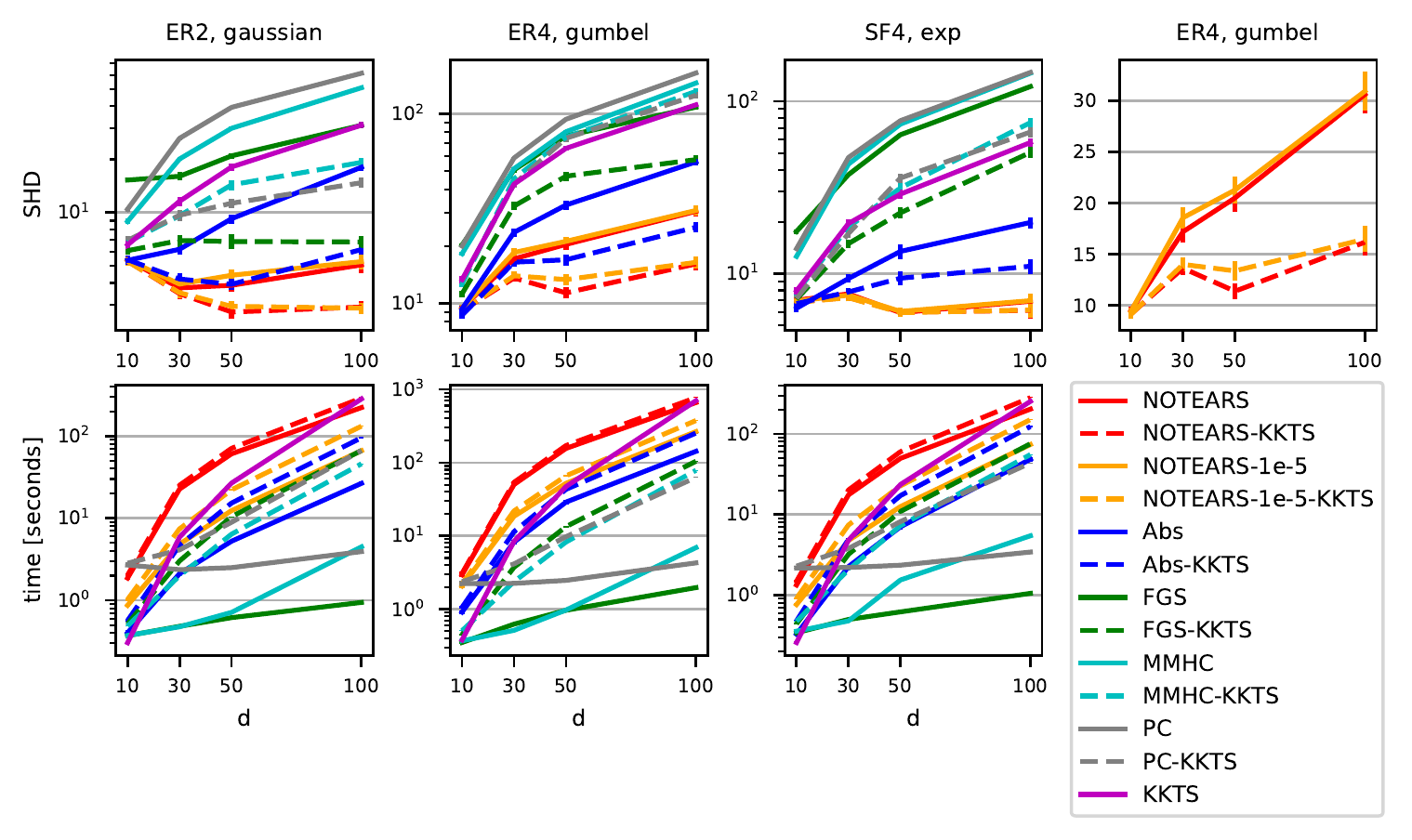}
  \caption{Structural Hamming distances (SHD) with respect to true graph and solution times for $n = 2d$. Red lines overlap with orange in the SF4 SHD plot. The upper right panel focuses on combinations with NOTEARS using a linear vertical scale.}
  \label{fig:main_n_2d}
\end{figure}

We now turn to the `-KKTS' combinations (dashed lines). 
It is seen that KKTS, and the theoretical understanding it embodies, improve the SHD of \emph{all} base algorithms (including CAM in Appendix~\ref{sec:exptAdd:results}). The improvement is by at least a factor of $2$, except when the SHD is already low (e.g.~NOTEARS on SF4), and moreover is consistent across dimensions $d$. An ablation study in Appendix~\ref{sec:exptAdd:ablation} shows that both reducing unnecessary constraints and reversing edges contribute to the improvement.

In the case of NOTEARS-KKTS, while Proposition~\ref{prop:ALsq} asserts that NOTEARS cannot yield an exactly feasible solution, let alone a KKT point, Figure~\ref{fig:main} confirms that it yields high-quality nearly feasible solutions. NOTEARS is therefore well-suited as an initialization for KKTS, and combining them apparently results in new state-of-the-art accuracy. Furthermore, in an attempt to achieve feasibility, NOTEARS uses more augmented Lagrangian iterations and very large penalty parameters $\alpha$ and $\rho$. The latter causes the augmented Lagrangian \eqref{eqn:ALsq} to be poorly conditioned and optimization solvers for it to take longer to converge. Thus, to reduce solution time as well as satisfy KKT conditions, we terminate NOTEARS early with a higher $h$ tolerance of $\epsilon = 10^{-5}$ before running KKTS. Figure~\ref{fig:main} shows that this results in nearly the same SHD improvement over NOTEARS while also taking considerably less time (except for SF4). Abs-KKTS similarly outperforms NOTEARS on ER graphs and takes even less time.

\section{Conclusion and future work}
\label{sec:concl}

We have re-examined a recently proposed continuous optimization framework for learning Bayesian networks. Our most important contributions are as follows: (1) better understanding of the NOTEARS formulation and algorithm of \cite{zheng2018dags}; (2) analysis and understanding of the KKT optimality conditions for an equivalent reformulation (for which they do indeed hold); (3) a local search algorithm informed by the KKT conditions that significantly and universally improves the accuracy of NOTEARS and other algorithms.

A clear next step is to generalize the theory and algorithms to the case in which each edge in the graph corresponds to multiple parameters. One motivation is to allow nonlinear models; a nonlinear extension of the absolute value case of Section~\ref{sec:theory:abs} could parallel the recent nonparametric extension \cite{zheng2020learning} 
for the quadratic case. Another reason for having multiple parameters is to accommodate non-binary categorical variables, which are typically encoded into multiple binary variables on the input side, or predicted using e.g.~multi-logit regression \cite{gu2019penalized} on the output side. Other future directions include improving the efficiency of algorithms for solving \eqref{eqn:probSq}, \eqref{eqn:probOrig} and exploring 
alternative acyclicity characterizations from Section~\ref{sec:chars}.

\section*{Broader Impact}

Bayesian networks are fundamentally about modeling the joint probability distribution of data, in a parsimonious and comprehensible manner. This work therefore contributes mostly to layer 0 (``foundational research'') in the ``Impact Stack'' of \cite{ashurst2020guide}, particularly with regard to the theoretical aspects. If one views Bayesian network structure learning as a ``ML technique'' rather than a ``foundational technique'', then the algorithmic contribution also falls into layer 1. We thus confine our discussion of broader impacts mostly to layers 0 and 1, i.e.~``tractable'' impacts according to \cite{ashurst2020guide}, as it is difficult and perhaps inappropriate to speculate further.

The predominant contribution of this work is to theoretical understanding of the optimization problem that is score-based structure learning, and specifically a 
continuous formulation thereof. This understanding has resulted in improvements in accuracy (as measured by structural Hamming distance), and we expect that further improvements will be made in future work. We also believe that this understanding may lead to advances in computational efficiency as well, beyond the simple measure of terminating the NOTEARS algorithm early when it has no hope of reaching feasibility, or observing that the absolute value version (Abs) converges more quickly. For example, new optimization algorithms may be proposed for problems \eqref{eqn:probSq} and/or \eqref{eqn:probOrig} that take better advantage of their properties.

As the accuracy and scalability of Bayesian network structure learning continue to increase, we hope that it becomes an even more commonly used technique for modeling data than it is now. We are particularly interested in its use as the first step in \emph{causal} structure discovery, which may then facilitate other causal inference tasks. We recognize however that errors in structure learning may compound into potentially more serious downstream errors. This is an issue calling for further study.

\section*{Acknowledgments}
Y. Yu is supported by the National Science Foundation under award DMS 1753031.

\bibliography{ref}
\bibliographystyle{plainnat}

\clearpage
\appendix

\section{Proofs}
\label{sec:proofs}

\subsection{Proofs for Section~\ref{sec:chars}}

\subsubsection{Proof of Lemma~\ref{lem:nilpotent}}
Given a weighted adjacency matrix $A$, we define the \emph{weight} of a directed walk from $i$ to $j$ to be the product $A_{i,i_1} A_{i_1,i_2} \dots A_{i_{l-1},j}$. It is well-known that $(A^p)_{ij}$ is the sum of the weights of all length-$p$ directed walks from $i$ to $j$ \cite{bondy_murty2008}. Therefore $\tr(A^p)$ is the sum of the weights of all length-$p$ directed circuits. If $\cG$ is acyclic, then all of these sums are zero, i.e.~$A$ is nilpotent according to the definition. The converse also holds.

\subsubsection{Proof of Theorem~\ref{thm:h}}
Using Lemma~\ref{lem:nilpotent}, we equivalently show that $A$ is nilpotent if and only if $h(A) = 0$. The ``only if'' direction is clearly true. 
    
If $h(A) = 0$, then because $c_p > 0$, $p = 1,\dots,d$, and $\tr(A^p) \geq 0$ due to the non-negativity of $A$, we must have $\tr(A^p) = 0$, $p = 1,\dots,d$. The extension to higher powers of $A$ can be shown by induction using the Cayley-Hamilton theorem. For the base case $d+1$, $A^{d+1}$ can be expressed as a linear combination of $A, \dots, A^{d}$, specifically by multiplying the characteristic polynomial of $A$ by another power of $A$. Therefore $\tr(A^{d+1}) = 0$. For the inductive step $p > d+1$, $A^p$ can similarly be expressed as a linear combination of $A^{p-d}, \dots, A^{p-1}$, the traces of which are all known to be zero. We conclude that $\tr(A^p) = 0$ for all $p \in \mathbb{N}$.

\subsubsection{Proof of Lemma~\ref{lem:gradh_ij}}
From the power series expression for $\nabla h(A)$,
\begin{equation}\label{eqn:gradh_ij}
(\nabla h(A))_{ij} = \sum_{p=1}^d p c_p \left(A^{p-1}\right)_{ji} = \sum_{p=1}^{d-1} (p+1) c_{p+1} \left(A^{p}\right)_{ji}
\end{equation}
for $i \neq j$. Thus if $(\nabla h(A))_{ij} > 0$, then $(A^p)_{ji} > 0$ for at least one $p$, i.e.~there exists a directed walk of length $p$ from $j$ to $i$.
    
Conversely, if there is a directed walk from $j$ to $i$, then there is also a directed path from $j$ to $i$. A directed path can have length at most $d-1$ since no vertices can be repeated. Therefore $(A^p)_{ji} > 0$ for at least one $p$ in $\{1,\dots,d-1\}$ and $(\nabla h(A))_{ij} > 0$ from \eqref{eqn:gradh_ij}.

\subsubsection{Proof of Lemma~\ref{lem:Agradh}}

Lemma~\ref{lem:Agradh} follows from Lemma~\ref{lem:Agradh2} below and rewriting $\tr\left((\nabla h(A))^T A\right)$ as the inner product 
\[
\tr\left((\nabla h(A))^T A\right) = \sum_{i,j} (\nabla h(A))_{ij} A_{ij}.
\]
Since $A$ is non-negative and $\nabla h(A)$ is also non-negative (Lemma~\ref{lem:gradhNonNeg}), $\tr\left((\nabla h(A))^T A\right) = 0$ if and only if $(\nabla h(A))_{ij} A_{ij} = 0$ for all $i,j$.
\begin{lemma}\label{lem:Agradh2}
A directed graph $\cG$ is acyclic if and only if $\tr\left((\nabla h(A))^T A\right) = 0$ for any $h$ defined by \eqref{eqn:h}.
\end{lemma}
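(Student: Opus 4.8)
The plan is to reduce the statement to Theorem~\ref{thm:h} by recognizing that $\tr\left((\nabla h(A))^T A\right)$ is itself a function of the form \eqref{eqn:h}, namely the trace of a matrix polynomial with strictly positive coefficients and no constant term.

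First I would start from the gradient formula \eqref{eqn:gradh}, $\nabla h(A) = \sum_{p=1}^d p c_p \left(A^{p-1}\right)^T$, and transpose it to obtain $(\nabla h(A))^T = \sum_{p=1}^d p c_p A^{p-1}$. Right-multiplying by $A$ simply advances each power by one, giving $(\nabla h(A))^T A = \sum_{p=1}^d p c_p A^p$. Taking the trace then yields
\[
\tr\left((\nabla h(A))^T A\right) = \sum_{p=1}^d p c_p \tr(A^p).
\]

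The key observation is that the coefficients $\tilde c_p := p c_p$ are strictly positive for $p = 1,\dots,d$, since $c_p > 0$ and $p \geq 1$. Hence the right-hand side is exactly $\tilde h(A)$ for the function $\tilde h$ in the class \eqref{eqn:h} determined by the coefficients $\tilde c_p$ (with any choice of $\tilde c_0 > 0$, which is immaterial because the constant term is subtracted off in the trace expression). Theorem~\ref{thm:h} then applies to $\tilde h$ verbatim: $\cG$ is acyclic if and only if $\tilde h(A) = \tr\left((\nabla h(A))^T A\right) = 0$, which is precisely the claim.

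I do not anticipate a genuine obstacle here; the only care needed is in the transpose bookkeeping and in checking that the new coefficients $p c_p$ remain in the admissible (positive) class so that Theorem~\ref{thm:h} applies. If one prefers not to invoke Theorem~\ref{thm:h} as a black box, the same conclusion follows directly from non-negativity: each $\tilde c_p > 0$ and $\tr(A^p) \geq 0$ for non-negative $A$, so the sum vanishes if and only if $\tr(A^p) = 0$ for all $p = 1,\dots,d$; this is nilpotency (extended to all higher powers via Cayley--Hamilton exactly as in the proof of Theorem~\ref{thm:h}), which by Lemma~\ref{lem:nilpotent} is equivalent to acyclicity.
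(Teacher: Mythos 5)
Your proof is correct and follows essentially the same route as the paper: both derive the identity $\tr\left((\nabla h(A))^T A\right) = \sum_{p=1}^d p\, c_p \tr(A^p)$ and reduce to Theorem~\ref{thm:h}, the paper via the observation that a strictly positive combination of non-negative traces vanishes iff every $\tr(A^p)$ does, and you via the equivalent (and slightly tidier) packaging that the coefficients $\tilde c_p = p\, c_p > 0$ make this expression itself a member of the class \eqref{eqn:h}. Your closing fallback argument via nilpotency and Lemma~\ref{lem:nilpotent} likewise mirrors the paper's proof of Theorem~\ref{thm:h}, so there is nothing to correct.
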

\begin{proof}
Again from the power series expression $\nabla h(A) = \sum_{p=1}^d p c_p \bigl(A^{p-1}\bigr)^T$,
\[
\tr\left((\nabla h(A))^T A\right) = \sum_{p=1}^d p c_p \tr\left(A^p\right).
\]
Similar to \eqref{eqn:h}, this is a strictly positive linear combination of non-negative traces $\tr(A^p)$, $p = 1,\dots,d$. Thus $\tr\left((\nabla h(A))^T A\right) = 0$ if and only if $\tr(A^p) = 0$ for $p = 1,\dots,d$. Similarly from \eqref{eqn:h}, $h(A) = 0$ if and only if $\tr(A^p) = 0$ for $p = 1,\dots,d$. Theorem~\ref{thm:h} completes the chain of equivalences.
\end{proof}

\subsection{Proofs for Section~\ref{sec:theory:sq}}

Propositions~\ref{prop:noKKTsq} and \ref{prop:ALsq} are immediate consequences of Lemma~\ref{lem:zeroGrad}.

\subsubsection{Proof of Lemma~\ref{lem:zeroGrad}}
By the chain rule,
\[
\nabla_W (h(W \circ W)) = \nabla h(W \circ W) \circ 2W,
\]
where $\nabla h(W \circ W)$ refers to 
\[
\nabla h(A) = \sum_{p=1}^d p c_p \left(A^{p-1}\right)^T
\]
evaluated at $A = W \circ W$. (The above gradient expression generalizes eq.~(8) in \cite{zheng2018dags}.) If $W$ is feasible, i.e.~$h(W \circ W) = 0$, then Lemma~\ref{lem:Agradh} with $A = W \circ W$ implies that $\nabla h(W \circ W) \circ W \circ W = 0$. Since the latter is true if and only if $\nabla h(W \circ W) \circ W = 0$, we have $\nabla_W (h(W \circ W)) = 0$.

\subsection{Equivalence of problems \eqref{eqn:probOrig} and \eqref{eqn:prob+-}}
\label{sec:proofs:equiv}

We map between solutions to \eqref{eqn:probOrig} and \eqref{eqn:prob+-} as follows: 
\begin{subequations}\label{eqn:W+-map}
\begin{align}
    W &\mapsto \left(W^+, W^-\right) = \left((W)_+, (W)_-\right),\label{eqn:WtoW+-}\\
    \left(W^+, W^-\right) &\mapsto W = W^+ - W^-,\label{eqn:W+-toW}
\end{align}
\end{subequations}
where 
\[
(W)_+ := \max\{W, 0\}, \quad (W)_- := -\min\{W, 0\},
\]
and the maximum and minimum are taken element-wise. $(W)_+$ and $(W)_-$ are therefore the positive and negative parts of $W$, motivating the $W^+$, $W^-$ notation. 

To establish the equivalence, we introduce the following intermediate formulation with the additional constraint $W^+ \circ W^- = 0$: 
\begin{equation}\label{eqn:prob+-Cons}
\begin{split}
\min_{W^+, W^-} \quad &F\left(W^+ - W^-\right)\\
\text{s.t.} \quad &h\left(W^+ + W^-\right) \leq 0, \quad W^+, W^- \geq 0, \quad W^+ \circ W^- = 0.
\end{split}
\end{equation}
The mappings in \eqref{eqn:W+-map} define a one-to-one correspondence between $\mathbb{R}^{d\times d}$ and non-negative pairs $(W^+, W^-)$ satisfying $W^+ \circ W^- = 0$. Thus we have the following.
\begin{lemma}\label{lem:prob+-Cons}
If $W$ is a feasible solution to problem \eqref{eqn:probOrig}, then applying mapping \eqref{eqn:WtoW+-} to $W$ yields a feasible solution to \eqref{eqn:prob+-Cons} with the same objective value. Conversely if $(W^{+}, W^{-})$ is a feasible solution to \eqref{eqn:prob+-Cons}, then $W = W^{+} - W^{-}$ is a feasible solution to \eqref{eqn:probOrig} with the same objective value.
\end{lemma}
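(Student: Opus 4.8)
The plan is to prove the two directions separately, in each case checking that the indicated map preserves both the objective value and feasibility. The entire argument reduces to elementary element-wise identities relating a real number to its positive and negative parts. Applied entry-by-entry, the facts I would use are that for any real $w$ one has $w = (w)_+ - (w)_-$, $\abs{w} = (w)_+ + (w)_-$, and $(w)_+ (w)_- = 0$, with $(w)_+, (w)_- \geq 0$; lifting these to matrices gives $W = (W)_+ - (W)_-$, $\abs{W} = (W)_+ + (W)_-$, and $(W)_+ \circ (W)_- = 0$.

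For the forward direction, I would take $W$ feasible for \eqref{eqn:probOrig} and set $(W^+, W^-) = ((W)_+, (W)_-)$ via \eqref{eqn:WtoW+-}. The objective is preserved immediately since $W^+ - W^- = W$, so $F(W^+ - W^-) = F(W)$. Non-negativity $W^+, W^- \geq 0$ and the complementarity $W^+ \circ W^- = 0$ hold by the very definition of the positive and negative parts. For the acyclicity constraint, I would observe that $W^+ + W^- = \abs{W}$, so $h(W^+ + W^-) = h(\abs{W}) \leq 0$, establishing feasibility in \eqref{eqn:prob+-Cons}.

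For the reverse direction, I would take $(W^+, W^-)$ feasible for \eqref{eqn:prob+-Cons} and set $W = W^+ - W^-$ via \eqref{eqn:W+-toW}; again $F(W) = F(W^+ - W^-)$ gives equal objective values. The one point requiring genuine care is re-establishing $\abs{W} = W^+ + W^-$: since $W^+, W^- \geq 0$ and $W^+ \circ W^- = 0$, at most one of $W^+_{ij}, W^-_{ij}$ is nonzero for each $(i,j)$, whence $\abs{W_{ij}} = \abs{W^+_{ij} - W^-_{ij}} = W^+_{ij} + W^-_{ij}$. Thus $h(\abs{W}) = h(W^+ + W^-) \leq 0$ and $W$ is feasible for \eqref{eqn:probOrig}.

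The computation is routine, and the only place where anything could go wrong is precisely this reverse-direction identity $\abs{W} = W^+ + W^-$, which fails in the absence of the complementarity constraint (one would only have the entry-wise inequality $\abs{W} = \abs{W^+ - W^-} \leq W^+ + W^-$). This is exactly why the intermediate formulation \eqref{eqn:prob+-Cons} is equipped with the extra constraint $W^+ \circ W^- = 0$, and it is the hinge on which the clean correspondence of both objective and feasibility turns.
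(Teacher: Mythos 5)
Your proof is correct and follows essentially the same route as the paper's: both directions hinge on the identity $W^+ + W^- = \abs{W}$, which the complementarity constraint $W^+ \circ W^- = 0$ guarantees, together with the trivially preserved objective $F(W^+ - W^-) = F(W)$. Your entry-wise verification of $\abs{W_{ij}} = W^+_{ij} + W^-_{ij}$ simply makes explicit what the paper's terser proof asserts directly.
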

\begin{proof}
Mapping \eqref{eqn:WtoW+-} satisfies the constraints $W^\pm \geq 0$ and $W^+ \circ W^- = 0$. Under this last condition, we also have $W^+ + W^- = \abs{W}$. These facts show that \eqref{eqn:W+-map} preserves feasibility in both directions. Since $(W)_+ - (W)_- = W$, \eqref{eqn:WtoW+-} preserves the objective value, and clearly \eqref{eqn:W+-toW} does as well.
\end{proof}

We now show that the additional constraint $W^+ \circ W^- = 0$ in \eqref{eqn:prob+-Cons} does not change the optimal value, i.e.~there is no advantage from dropping it.
\begin{lemma}\label{lem:W+W-=0}
If $(W^+, W^-)$ is a feasible solution to problem \eqref{eqn:prob+-} and $W^+ \circ W^- \neq 0$, then there exists a feasible solution $\left(W_0^+, W_0^-\right)$ with the same objective value and satisfying $W_0^+ \circ W_0^- = 0$.
\end{lemma}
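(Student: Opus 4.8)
The plan is to build $(W_0^+, W_0^-)$ directly from the given feasible pair by cancelling the overlapping positive parts. Concretely, I would set $W = W^+ - W^-$ and apply mapping \eqref{eqn:WtoW+-}, defining $W_0^+ = (W)_+$ and $W_0^- = (W)_-$ as the positive and negative parts of $W$. Intuitively, wherever both $W^+_{ij}$ and $W^-_{ij}$ are positive, this operation subtracts off the smaller of the two from both, leaving a pair whose difference is unchanged but which now has disjoint supports.

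First I would dispatch the easy conditions. By construction $W_0^+ - W_0^- = (W)_+ - (W)_- = W = W^+ - W^-$, so the objective $F(W_0^+ - W_0^-) = F(W^+ - W^-)$ is preserved. Non-negativity $W_0^+, W_0^- \geq 0$ is immediate from the definition of positive/negative parts, as is the complementarity $W_0^+ \circ W_0^- = 0$, since in each entry at most one of the two parts is nonzero.

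The crux is feasibility of the acyclicity constraint, namely $h(W_0^+ + W_0^-) \leq 0$. Here I would invoke the entry-wise bound $\abs{W_{ij}} = \abs{W^+_{ij} - W^-_{ij}} \leq W^+_{ij} + W^-_{ij}$, valid because $W^+_{ij}, W^-_{ij} \geq 0$. In matrix form this reads $W_0^+ + W_0^- = \abs{W} \leq W^+ + W^-$, so that $(W^+ + W^-) - (W_0^+ + W_0^-) \geq 0$. Applying the monotonicity of $h$ from Lemma~\ref{lem:gradhNonNeg} then yields $h(W_0^+ + W_0^-) \leq h(W^+ + W^-) \leq 0$, where the final inequality is the feasibility of the original pair. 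This completes the verification.

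The main obstacle — and really the entire content of the lemma — is this last feasibility step: one must be sure that collapsing the overlapping positive and negative parts does not reintroduce a cycle. The key insight is that cancellation can only \emph{shrink} the argument of $h$ entry-wise, since $\abs{a-b} \leq a+b$ for non-negative $a, b$, and because $h$ is non-decreasing on non-negative matrices, feasibility is not merely preserved but $h$ can only decrease. Everything else in the argument is routine bookkeeping.
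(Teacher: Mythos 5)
Your proof is correct and follows essentially the same route as the paper's: the paper's proof reduces $W^+_{ij}$ and $W^-_{ij}$ by the same amount wherever both are positive, which is exactly your construction $W_0^\pm = (W^+ - W^-)_\pm$, and both arguments close the feasibility gap by invoking the monotonicity of $h$ from Lemma~\ref{lem:gradhNonNeg}. Your write-up merely makes the entry-wise inequality $\abs{W^+_{ij} - W^-_{ij}} \leq W^+_{ij} + W^-_{ij}$ explicit where the paper leaves it implicit.
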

\begin{proof}
For any $(i,j)$ such that $W^+_{ij} W^-_{ij} > 0$, we can obtain another feasible solution by reducing each of $W^+_{ij}$, $W^-_{ij}$ by the same amount until $W^+_{ij} W^-_{ij} = 0$. Since the objective is a function of $W^+ - W^-$, its value is unchanged. At the same time, Lemma~\ref{lem:gradhNonNeg} ensures that $h(W^+ + W^-)$ cannot increase since it is a non-decreasing function, and thus the solution remains feasible.
\end{proof}
\noindent In particular, an optimal solution to \eqref{eqn:prob+-} not satisfying $W^+ \circ W^- = 0$ can be reduced to another optimal solution that does satisfy $W^+ \circ W^- = 0$. Hence it suffices to solve \eqref{eqn:prob+-Cons} in order to solve \eqref{eqn:prob+-}.

The combination of Lemmas~\ref{lem:prob+-Cons} and \ref{lem:W+W-=0} yields the following equivalence:
\begin{prop}
If $W^*$ is an optimal solution to problem \eqref{eqn:probOrig}, then applying mapping \eqref{eqn:WtoW+-} to $W^*$ yields an optimal solution $(W^{+*}, W^{-*})$ to \eqref{eqn:prob+-}. Conversely if $(W^{+*}, W^{-*})$ is an optimal solution to \eqref{eqn:prob+-}, then $W^* = W^{+*} - W^{-*}$ is an optimal solution to \eqref{eqn:probOrig}.
\end{prop}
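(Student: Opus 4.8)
The plan is to establish the equivalence by routing through the intermediate formulation \eqref{eqn:prob+-Cons} and showing that all three problems---\eqref{eqn:probOrig}, \eqref{eqn:prob+-Cons}, and \eqref{eqn:prob+-}---share a common optimal value, after which matching optimizers across the chain is routine. First I would use Lemma~\ref{lem:prob+-Cons}: the mappings in \eqref{eqn:W+-map} give an objective-preserving, feasibility-preserving correspondence between \eqref{eqn:probOrig} and \eqref{eqn:prob+-Cons}, so these two problems have equal optimal value and their optimizers correspond under the maps. Next, since the feasible set of \eqref{eqn:prob+-Cons} is that of \eqref{eqn:prob+-} intersected with the condition $W^+ \circ W^- = 0$, the optimal value of \eqref{eqn:prob+-} is no larger than that of \eqref{eqn:prob+-Cons}; Lemma~\ref{lem:W+W-=0} supplies the reverse inequality, since it turns any feasible point of \eqref{eqn:prob+-} into a feasible point of \eqref{eqn:prob+-Cons} with the same objective. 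Hence the three optimal values coincide; call the common value $v^*$.

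For the forward direction, suppose $W^*$ is optimal for \eqref{eqn:probOrig}, so $F(W^*) = v^*$. Applying Lemma~\ref{lem:prob+-Cons} to $\bigl((W^*)_+, (W^*)_-\bigr)$ yields a feasible point of \eqref{eqn:prob+-Cons}, which is a fortiori feasible for \eqref{eqn:prob+-}, with objective value $F(W^*) = v^*$. Since $v^*$ is the optimal value of \eqref{eqn:prob+-}, this point is optimal.

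The backward direction is where I expect the only real subtlety, because an optimal $(W^{+*}, W^{-*})$ for \eqref{eqn:prob+-} need not satisfy $W^+ \circ W^- = 0$, so the converse half of Lemma~\ref{lem:prob+-Cons} does not apply to it directly. The hard part is therefore to reach a point where the extra constraint holds \emph{without disturbing the difference} $W^+ - W^-$. The key observation is that the reduction in the proof of Lemma~\ref{lem:W+W-=0} decrements $W^+_{ij}$ and $W^-_{ij}$ by equal amounts, leaving $W^+ - W^-$ unchanged; thus the reduced solution $(W_0^+, W_0^-)$ satisfies $W_0^+ - W_0^- = W^{+*} - W^{-*}$, is feasible for \eqref{eqn:prob+-Cons}, and still has objective $v^*$. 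The converse half of Lemma~\ref{lem:prob+-Cons} then gives that $W^* := W_0^+ - W_0^- = W^{+*} - W^{-*}$ is feasible for \eqref{eqn:probOrig} with $F(W^*) = v^*$, hence optimal.

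As a sanity check on the backward step, I note that feasibility of $W^*$ for \eqref{eqn:probOrig} can also be seen directly, bypassing the reduction: the elementwise inequality $\abs{W^{+*} - W^{-*}} \leq W^{+*} + W^{-*}$ together with the monotonicity of $h$ from Lemma~\ref{lem:gradhNonNeg} gives $h\bigl(\abs{W^*}\bigr) \leq h\bigl(W^{+*} + W^{-*}\bigr) \leq 0$. This redundancy is reassuring but the reduction argument is the one that ties cleanly to the two cited lemmas.
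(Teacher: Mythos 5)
Your proof is correct and follows essentially the same route as the paper, which proves this proposition simply as ``the combination of Lemmas~\ref{lem:prob+-Cons} and \ref{lem:W+W-=0}'' via the intermediate problem \eqref{eqn:prob+-Cons}. The one subtlety you flag and resolve---that the reduction in Lemma~\ref{lem:W+W-=0} leaves the difference $W^+ - W^-$ unchanged, so the backward direction applies to the \emph{original} $W^{+*} - W^{-*}$---is exactly the detail the paper leaves implicit, and your handling of it is right.
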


\subsection{Proofs for Section~\ref{sec:theory:abs:KKT}}
\label{sec:proofs:abs}

\subsubsection{Proof of Proposition~\ref{prop:regular}}

To begin, we recall that a feasible solution to an inequality-constrained problem such as \eqref{eqn:prob+-} is said to be \emph{regular} if the gradients of the active (i.e.~tight) constraints are linearly independent \cite{bertsekas1999}. If a local minimum is regular, then the KKT conditions necessarily hold.

We first give expressions for the gradients of the constraints in \eqref{eqn:prob+-}. With $A = W^+ + W^-$, the gradient of $h(A)$ with respect to either $W^+$ or $W^-$ is given by $\nabla h(A)$ itself. Recalling that $W^\pm \geq 0$ is a collection of constraints $W^\pm_{ij} \geq 0$, the gradient of (say) constraint $W^+_{ij} \geq 0$ is a matrix $E^{ij}$ with entry $(i,j)$ equal to $1$ and $0$ elsewhere. A linear combination of these gradients with respect to $W^+$ (respectively $W^-$) can be represented as a matrix $M^+$ (respectively $M^-$). It will be seen shortly that we can take a non-negative linear combination of these gradients, so $M^+$, $M^-$ are non-negative and we reuse the symbol $M$ from \eqref{eqn:KKTgrad}.
 
If $(W^+, W^-)$ is feasible, then we must have $h(A) = 0$ so the constraint $h(A) \leq 0$ is active. Consider then the equation 
\begin{equation}\label{eqn:regular_1}
M^+ = M^- = \nabla h(A),
\end{equation}
which expresses the gradient of the constraint $h(A) \leq 0$ (with respect to $W^+$ or $W^-$) as a linear combination of gradients of the constraints $W^+_{ij} \geq 0$ or $W^-_{ij} \geq 0$. More specifically, $M^+$ and $M^-$ in \eqref{eqn:regular_1} are linear combinations only of those gradients $(i,j)$ for which $(\nabla h(A))_{ij} > 0$. By Lemma~\ref{lem:Agradh}, $h(A) = 0$ implies that
\[
\nabla h(A) \circ A = \nabla h(A) \circ \left(W^+ + W^-\right) = 0.
\]
In particular, if $(\nabla h(A))_{ij} > 0$, then $W^+_{ij} = W^-_{ij} = 0$, i.e.~these two constraints are active. Thus $M^+$, $M^-$ are linear combinations of active constraint gradients only, and \eqref{eqn:regular_1} equates these linear combinations to the gradient of active constraint $h(A) \leq 0$. We conclude that $(W^+, W^-)$ is not regular.

\subsubsection{Proof of Proposition~\ref{prop:quasinormal}}

Quasinormality is a weaker constraint qualification than regularity and is described in \cite[Sec.~3.3.5, p.~336]{bertsekas1999}.
We follow the framework therein. We let the convex set $\cX$ be $\mathbb{R}_+^{d\times d} \times \mathbb{R}_+^{d\times d}$, the set of pairs of non-negative matrices, to account for the constraints $W^\pm \geq 0$. Thus $h(A) \leq 0$ remains as a single inequality constraint, where again $A = W^+ - W^-$.

A feasible solution $(W^+, W^-)$ is \emph{not} quasinormal if it satisfies conditions (i)--(iv) in \cite[Sec.~3.3.5, p.~336]{bertsekas1999}. Translated to the current case of a single inequality constraint, these conditions are (i)
\begin{equation}\label{eqn:quasinormal_1}
\sum_{i,j} (\nabla h(A))_{ij} \left(W'^+_{ij} + W'^-_{ij} - W^+_{ij} - W^-_{ij}\right)  \geq 0 \quad \forall \left(W'^+, W'^-\right) \in \cX,
\end{equation}
and (iv) in every neighborhood around $(W^+, W^-)$ (e.g.~$\ell_2$ balls), there exists a $(W'^+, W'^-) \in \cX$ for which $h(W'^+ + W'^-) > 0$. Conditions (ii) and (iii) are easily satisfied by setting the single multiplier $\mu = 1$. 

To show that condition (i) \eqref{eqn:quasinormal_1} is satisfied, we consider the cases $(\nabla h(A))_{ij} > 0$ and $(\nabla h(A))_{ij} = 0$. In the former case, since $(W^+, W^-)$ is feasible, Lemma~\ref{lem:Agradh} requires that $A_{ij} = W^+_{ij} + W^-_{ij} = 0$. Hence the corresponding term in \eqref{eqn:quasinormal_1} becomes $(\nabla h(A))_{ij} \bigl(W'^+_{ij} + W'^-_{ij}\bigr)$ and is always non-negative. In the latter case $(\nabla h(A))_{ij} = 0$, the contribution to the sum is zero. Therefore \eqref{eqn:quasinormal_1} is satisfied.

Condition (iv) can be satisfied by choosing $W' = W'^+ - W'^-$ to be a fully dense matrix (corresponding to a complete graph) that is arbitrarily close to $W = W^+ - W^-$. Concretely, let $W'^- = W^-$, $W'^+_{ij} = \epsilon$ wherever $W^+_{ij} = W^-_{ij} = 0$, and $W'^+_{ij} = W^+_{ij}$ otherwise. Then $h(W'^+ + W'^-) > 0$ for all $\epsilon > 0$.

\subsubsection{Proof of Lemma~\ref{lem:simplePath}}
We provide a graphical proof by viewing $A$ as an adjacency matrix and $(\nabla h(A))_{ij} > 0$ as an indicator of a directed walk from node $j$ to $i$, the latter as ensured by Lemma~\ref{lem:gradh_ij}. If $(\nabla h(A))_{ij} > 0$, i.e.~there exists a directed walk from $j$ to $i$, then there also exists a directed path from $j$ to $i$. Since a directed path connects distinct vertices, it cannot contain an edge $(k,j)$. (Any directed walk from $j$ to $i$ that does have an edge $(k,j)$ must have a final subwalk from $j$ to $i$ that is a path.) Thus changing the values of $A_{kj}$, and specifically removing edges into $j$, cannot remove directed paths from $j$ to $i$ (and thereby set $(\nabla h(A))_{ij} = 0$). 

Similarly for the second statement, if $(\nabla h(A))_{ij} = 0$, then there is no directed walk from $j$ to $i$, including directed paths. Then changing the values of $A_{kj}$, and specifically adding edges into $j$, cannot create a directed walk from $j$ to $i$ because it would require a final subwalk from $j$ to $i$ that is a directed path, which was assumed not to exist. 

\subsubsection{Proof of Theorem~\ref{thm:necessary}}

By definition, $(W^+, W^-)$ is a feasible solution to \eqref{eqn:prob+-}. We prove that \eqref{eqn:KKTgrad} and \eqref{eqn:KKTcomplementary} can be satisfied. Again letting $A = W^+ + W^-$, we consider two cases for the entries of the constraint gradient $\nabla h(A)$.

\textbf{Case $(\nabla h(A))_{ij} = 0$}: In this case, the only way in which \eqref{eqn:KKTgrad} can be satisfied is if $(\nabla F(W^+ - W^-))_{ij} = 0$, and we show that this is indeed true. First we establish by a graphical argument that all $\bigl(\tilde{W}^+, \tilde{W}^-\bigr)$ of the form $\tilde{W}^+ = W^+ + w E^{ij}$ and $\tilde{W}^- = W^-$ are feasible solutions to \eqref{eqn:prob+-}, where $W^+_{ij} + w \geq 0$ to maintain non-negativity. The only potential obstacle is if $W^+_{ij} = W^-_{ij} = 0$ so that varying $w$ introduces an edge $(i,j)$. However, since $(\nabla h(A))_{ij} = 0$, there is no directed walk from $j$ to $i$, and Lemma~\ref{lem:simplePath} ensures that none can be created by varying $\tilde{W}^+_{ij}$. Therefore $\bigl(\tilde{W}^+, \tilde{W}^-\bigr)$ remains acyclic and feasible. The above argument can be repeated for $\tilde{W}^+ = W^+$ and $\tilde{W}^- = W^- + w E^{ij}$.

From the previous paragraph, we conclude that $W = W^+ - W^- + w E^{ij}$ is feasible for all $w \in \mathbb{R}$. Then if $(W^+, W^-)$ is a local minimum, we must have the partial derivative $(\nabla F(W^+ - W^-))_{ij} = 0$. Otherwise, entry $(i,j)$ could be increased or decreased ($w > 0$ or $w < 0$) to reduce the cost while remaining feasible.

Given that $(\nabla h(A))_{ij} = (\nabla F(W^+ - W^-))_{ij} = 0$, we take $M^+_{ij} = M^-_{ij} = 0$ to satisfy component $(i,j)$ of constraint \eqref{eqn:KKTcomplementary} as well as \eqref{eqn:KKTgrad}.

\textbf{Case $(\nabla h(A))_{ij} > 0$}: Since $(W^+, W^-)$ is feasible, $h(A) = 0$ and Lemma~\ref{lem:Agradh} implies that 
\[
(\nabla h(A))_{ij} A_{ij} = (\nabla h(A))_{ij} \bigl(W^+_{ij} + W^-_{ij}\bigr) = 0.
\]
Hence $W^+_{ij} = W^-_{ij} = 0$, satisfying \eqref{eqn:KKTcomplementary}.

To satisfy \eqref{eqn:KKTgrad}, we take 
\[
\lambda \geq \max_{(i,j): (\nabla h(A))_{ij} > 0} \frac{\abs*{\bigl(\nabla F(W^+ - W^-) \bigr)_{ij}}}{(\nabla h(A))_{ij}}
\]
and define $M^+_{ij}$, $M^-_{ij}$ to be the resulting slack in component $(i,j)$ of \eqref{eqn:KKTgrad}. This completes the proof.

The above proof is related to the idea discussed in \cite{bertsekas1999} that the directions of first-order feasible variations around $(W^+, W^-)$ do not include a direction of descent. The latter idea is used to prove existence of Lagrange multipliers in \cite[Prop.~3.3.14]{bertsekas1999}.

\subsection{Proofs for Section~\ref{sec:theory:abs:zeros}}

\subsubsection{Proof of Lemma~\ref{lem:KKTzeros}}
The proof follows from that of Theorem~\ref{thm:necessary}. Case $(\nabla h(A))_{ij} = 0$ in Theorem~\ref{thm:necessary} corresponds to $(i,j) \notin \cP$ and was shown to imply $(\nabla F(W^+ - W^-))_{ij} = (\nabla F(W^*))_{ij} = 0$. Case $(\nabla h(A))_{ij} > 0$ corresponds to $(i,j) \in \cP$ and implies $W^+_{ij} = W^-_{ij} = W^*_{ij} = 0$. If $F$ is convex, then conditions \eqref{eqn:optCondZeros} are also sufficient for optimality in \eqref{eqn:probZeros}. 

\subsubsection{Proof of Theorem~\ref{thm:KKTsuff}}
Let $W$ be a feasible solution to \eqref{eqn:probOrig} with $\norm{W - W^*}_F < \epsilon$ (the Frobenius norm is used for concreteness), $A = \abs{W}$, and $A^* = \abs{W^*}$. Since the gradient $\nabla h(A) = \sum_{p=1}^d p c_p \bigl(A^{p-1}\bigr)^T$ is a continuous function of $A$ and therefore of $W$, there exists a sufficiently small $\epsilon > 0$ such that $(\nabla h(A))_{ij} > 0$ wherever $(\nabla h(A^*))_{ij} > 0$, in other words for $(i,j)$ in the set $\cP$. Then for feasible $W$ within such an $\epsilon$-ball around $W^*$, it follows from Lemma~\ref{lem:Agradh} that $A_{ij} = W_{ij} = 0$ for $(i,j) \in \cP$. $W$ is therefore a feasible solution to \eqref{eqn:probZeros} for $\cZ = \cP$. By Lemma~\ref{lem:KKTzeros} and the convexity of $F$, we then have $F(W^*) \leq F(W)$ for all feasible $W$ such that $\norm{W - W^*}_F < \epsilon$.

\subsubsection{Proof of Theorem~\ref{thm:zerosKKT}}

By assumption, $W^*(\cZ)$ is feasible. 
For $(i,j) \in \cZ$, the constraint $W_{ij} = 0$ satisfies \eqref{eqn:KKTcomplementary}. Since $\cZ$ is irreducible, $\bigl(\nabla h(A^*(\cZ)) \bigr)_{ij} > 0$. We may then choose $\lambda$ large enough as in the proof of Theorem~\ref{thm:necessary} to satisfy \eqref{eqn:KKTgrad}. 
    
For $(i,j) \notin \cZ$, the optimality conditions \eqref{eqn:optCondZeros} imply $\bigl(\nabla F(W^*(\cZ))\bigr)_{ij} = 0$. If $(W^*(\cZ))_{ij} \neq 0$, then we must have $\bigl(\nabla h(A^*(\cZ)) \bigr)_{ij} = 0$ by the feasibility of $W^*(\cZ)$ and Lemma~\ref{lem:Agradh}. If $(W^*(\cZ))_{ij} = 0$, then the convention in defining $\cZ$ also ensures that $\bigl(\nabla h(A^*(\cZ)) \bigr)_{ij} = 0$. Letting $M^+_{ij} = M^-_{ij} = 0$ then satisfies \eqref{eqn:KKTgrad} and \eqref{eqn:KKTcomplementary}.

\subsubsection{Proof of Lemma~\ref{lem:reduceZ}}

Since $F(W)$ is separable and the pairs in $\cZ_0(j)$ have $j$ in common, removing the constraints $W_{ij} = 0$ for $(i,j) \in \cZ_0(j)$ affects only the subproblem of \eqref{eqn:probZeros} for node $j$. This subproblem is now given by 
\begin{equation}\label{eqn:reduceZ}
\argmin_{W_{\cdot j}} \; F_{j}\bigl(W_{\cdot j}\bigr) \quad \text{s.t.} \quad W_{ij} = 0, \quad (i,j) \in \cZ \backslash \cZ_0(j).
\end{equation}
By the definitions of $\cZ$ and $\cZ_0(j)$, we have $\bigl(\nabla h(A^*(\cZ)) \bigr)_{ij} = 0$ for $(i,j) \notin \cZ\backslash \cZ_0(j)$, i.e.~there are no directed walks from $j$ to such $i$. From Lemma~\ref{lem:simplePath}, it follows that re-optimizing the values of $W_{ij}$, $(i,j) \notin \cZ\backslash \cZ_0(j)$ in \eqref{eqn:reduceZ} cannot create directed walks from $j$ to $i$. For $(i,j) \in \cZ\backslash \cZ_0(j)$, $W_{ij}$ is constrained to zero. We conclude that re-solving \eqref{eqn:reduceZ} does not introduce new cycles. 

\subsubsection{Proof of Theorem~\ref{thm:algZC}}
The first while-loop adds more and more elements to $\cZ$, i.e.~constrains more and more edges to be absent, and is hence guaranteed to eventually produce a feasible (acyclic) solution $W^*(\cZ)$. If the resulting set $\cZ$ is not irreducible, then repeated application of Lemma~\ref{lem:reduceZ} in the second while-loop will make it so while maintaining feasibility. The algorithm thus yields a solution satisfying the conditions of Theorem~\ref{thm:zerosKKT}.

\section{Modified LARS algorithms}
\label{sec:algsAdd}

\subsection{Adding zero-value constraints}
\label{sec:algsAdd:setZero}

This appendix describes a modification of the LARS algorithm \cite{efron2004} to efficiently re-solve problem \eqref{eqn:probZeros} under the following conditions: a) the score function $F(W)$ is given by \eqref{eqn:scoreL1}, b) the loss function $\ell(W; \bX)$ is the least-squares loss, $\ell(W; \bX) = \frac{1}{2n} \norm{\bX - \bX W}_F^2$, and c) we have an optimal solution $W^*(\cZ)$ for the existing set $\cZ$ of zero-value constraints and a new pair $(i_0, j)$ is being added to $\cZ$. 

Given conditions a) and b), $F(W)$ is separable column-wise and hence we only have to re-solve the subproblem of \eqref{eqn:probZeros} for column $j$. Define $\cZ^c(j) = \{i: (i,j) \notin \cZ\}$ to be the set of rows in column $j$ that are not constrained to zero by $\cZ$. Then the subproblem for column $j$ can be written as 
\[
\min_{W_{\cZ^c(j), j}} \quad \frac{1}{2n} \norm*{\bX_{\cdot j} - \bX_{\cdot \cZ^c(j)} W_{\cZ^c(j), j}}_2^2 + \tau \norm*{W_{\cZ^c(j), j}}_1,
\]
to which we wish to add the constraint $W_{i_0 j} = 0$. To simplify notation, let $y = \bX_{\cdot j}$, $\tilde{\bX} = \bX_{\cdot \cZ^c(j)}$, and $w = W_{\cZ^c(j), j}$. Our approach is to add a penalty $\alpha \abs{w_{i_0}}$ to the objective function, giving
\begin{equation}\label{eqn:probSetZero}
\min_{w} \quad \frac{1}{2n} \norm[\big]{y - \tilde{\bX} w}_2^2 + \tau \norm{w}_1 + \alpha \abs{w_{i_0}},
\end{equation}
and increase $\alpha$ from zero until we obtain $w_{i_0} = 0$.

LARS is an active-set algorithm, where the active set $\cA$ corresponds to the set of non-zero $w_i$, i.e.~$\cA = \{i: w_i \neq 0\}$. The initial active set is given by the existing optimal solution $W_{\cZ^c(j), j}^*(\cZ)$. We assume that it includes $i_0$, as otherwise $w_{i_0} = 0$ and we are done.

In each iteration of LARS, the active elements of $w$ are updated as 
\begin{equation}\label{eqn:setZero_w}
w_{\cA} \gets w_{\cA} - \gamma d,
\end{equation}
where $\gamma$ is the step size and $d$ is an $\abs{\cA}$-dimensional direction vector determined below. The step size $\gamma$ will be made equal to the increase in $\alpha$ and is chosen to be the largest possible before a change in the active set occurs. 

One set of conditions on $\gamma$ and $d$ comes from maintaining the optimality of $w$. Define
\begin{equation}\label{eqn:g}
g = \frac{1}{n} \tilde{\bX}^T \bigl(y - \tilde{\bX} w\bigr) = \frac{1}{n} \tilde{\bX}^T \bigl(y - \tilde{\bX}_{\cdot\cA} w_{\cA}\bigr)
\end{equation}
to be the negative gradient of the least-squares term in \eqref{eqn:probSetZero}, where the second equality is due to $w_i$ being zero for $i \notin \cA$. The update equation for $w$ \eqref{eqn:setZero_w} implies that the gradient changes as 
\begin{equation}\label{eqn:setZero_g}
g \gets g + \gamma c,
\end{equation}
where
\begin{equation}\label{eqn:setZero_c}
c = \frac{1}{n} \tilde{\bX}^T \tilde{\bX}_{\cdot\cA} d.
\end{equation}
The optimality conditions of \eqref{eqn:probSetZero} for $i \in \cA$ require
\begin{equation}\label{eqn:setZeroOptCondA}
g_i + \gamma c_i = \begin{cases}
\sign(w_i) \tau, & i \in \cA, \; i \neq i_0,\\
\sign(w_i) (\tau + \alpha + \gamma), & i = i_0,
\end{cases}
\end{equation}
where $\alpha$ is increased by $\gamma$ as mentioned. Defining $e^{i_0}$ to be the $\abs{\cA}$-dimensional standard basis vector with $e^{i_0}_{i_0} = 1$ and $e^{i_0}_i = 0$ otherwise, we must have $c_{\cA} = \sign(w_{i_0}) e^{i_0}$ from \eqref{eqn:setZeroOptCondA}. This in combination with \eqref{eqn:setZero_c} determines the direction $d$:
\begin{equation}\label{eqn:setZero_d}
d = \left( \frac{1}{n} \tilde{\bX}_{\cA}^T \tilde{\bX}_{\cdot\cA} \right)^{-1} \sign(w_{i_0}) e^{i_0}.
\end{equation}

To determine the step size $\gamma$, we consider the optimality conditions for $i \notin \cA$, namely $\abs{g_i + \gamma c_i} \leq \tau$. By expanding the absolute value function and disregarding one of the cases because it is always satisfied, we obtain 
\begin{equation}\label{eqn:setZero_gammaAc}
\gamma \leq \frac{\tau - \sign(c_i) g_i}{\abs{c_i}}, \quad i \notin \cA.
\end{equation}
We also have the constraints $w_i - \gamma d_i \neq 0$ to maintain the current active set, which imply 
\begin{equation}\label{eqn:setZero_gammaA}
\gamma \leq \frac{w_i}{d_i}, \quad i \in \cA: \frac{w_i}{d_i} > 0,
\end{equation}
where the constraint is never binding if $w_i / d_i < 0$. Combining \eqref{eqn:setZero_gammaAc} and \eqref{eqn:setZero_gammaA} yields 
\begin{equation}\label{eqn:setZero_gamma}
\gamma = \min\left\{ \min_{i\in\cA: w_i/d_i > 0} \frac{w_i}{d_i}, \;\; \min_{i\notin\cA} \frac{\tau - \sign(c_i) g_i}{\abs{c_i}} \right\}.
\end{equation}
Let $i^*$ denote the minimizing index in \eqref{eqn:setZero_gamma}. The active set is updated as 
\begin{equation}\label{eqn:setZero_A}
\cA \gets \begin{cases}
\cA \backslash \{i^*\}, & i^* \in \cA,\\
\cA \cup \{i^*\}, & i^* \notin \cA.
\end{cases}
\end{equation}

Equations \eqref{eqn:setZero_w}, \eqref{eqn:setZero_g}, \eqref{eqn:setZero_c}, \eqref{eqn:setZero_d}, \eqref{eqn:setZero_gamma}, and \eqref{eqn:setZero_A} define one iteration of the LARS algorithm. The algorithm terminates with $w_{i_0} = 0$ when $i_0$ leaves the active set.

\subsection{Relaxing zero-value constraints}
\label{sec:algsAdd:relaxZero}

We now discuss the use of the LARS algorithm to re-solve problem \eqref{eqn:probZeros} after a pair $(i_0, j)$ is removed from the set $\cZ$ of zero-value constraints. Other assumptions remain as in Appendix~\ref{sec:algsAdd:setZero}, and thus we again only have to re-solve the subproblem of \eqref{eqn:probZeros} for column $j$. Recalling the definition of $\cZ^c(j)$ from Appendix~\ref{sec:algsAdd:setZero} and defining $\tilde{\cZ}^c(j) = \cZ^c(j) \cup \{i_0\}$, the subproblem for column $j$ can be expressed as  
\begin{equation}\label{eqn:probRelaxZero}
\min_{W_{\tilde{\cZ}^c(j), j}} \quad \frac{1}{2n} \norm*{\bX_{\cdot j} - \bX_{\cdot \tilde{\cZ}^c(j)} W_{\tilde{\cZ}^c(j), j}}_2^2 + \tau \norm*{W_{\tilde{\cZ}^c(j) j}}_1 \quad \text{s.t.} \quad W_{i_0 j} = 0,
\end{equation}
where we wish to relax the constraint $W_{i_0 j} = 0$.

To simplify notation as before, let $y = \bX_{\cdot j}$, $\tilde{\bX} = \bX_{\cdot \tilde{\cZ}^c(j)}$, and $w = W_{\tilde{\cZ}^c(j), j}$. We show that problem \eqref{eqn:probRelaxZero} is equivalent to \eqref{eqn:probSetZero} for a sufficiently large penalty $\alpha$. Let $w^* = W^*_{\tilde{\cZ}^c(j), j}(\cZ)$ denote the existing optimal solution of subproblem $j$, and $g^*$ be the corresponding negative loss gradient from \eqref{eqn:g}. Then the optimality conditions for \eqref{eqn:probSetZero} imply that $w_{i_0} = 0$ if the loss gradient satisfies $\abs{g^*_{i_0}} < \tau + \alpha$. Therefore $\alpha = \abs{g^*_{i_0}} - \tau$ is the first value at which $w_{i_0}$ becomes active. If $\abs{g^*_{i_0}} - \tau$ is non-positive, i.e.~$\abs{g^*_{i_0}} \leq \tau$, then relaxing the constraint $w_{i_0} = 0$ does not change $w_{i_0}$ as $w^*$ is still optimal without the constraint. In this case, we are done. Assuming therefore that $\abs{g^*_{i_0}} - \tau > 0$, we initialize $\alpha = \abs{g^*_{i_0}} - \tau$ and seek to decrease $\alpha$ to zero.

Given this initial value for $\alpha$, the modified LARS algorithm proceeds in the reverse direction of that in Appendix~\ref{sec:algsAdd:setZero}. In each iteration, we update 
\begin{align}
    w_{\cA} &\gets w_{\cA} + \gamma d,\label{eqn:relaxZero_w}\\
    g &\gets g - \gamma c,\label{eqn:relaxZero_g}\\
    \alpha &\gets \alpha - \gamma,\label{eqn:relaxZero_alpha}
\end{align}
where $d$ and $c$ are still given by \eqref{eqn:setZero_d} and \eqref{eqn:setZero_c}, except that when $w_{i_0}$ is still zero, we use $\sign(g^*_{i_0})$ in place of $\sign(w_{i_0})$ in \eqref{eqn:setZero_d} (\cite{efron2004} shows that these two signs must agree). The determination of the step size $\gamma$ is slightly modified from that in \eqref{eqn:setZero_gamma} because of the change in signs in \eqref{eqn:relaxZero_w}, \eqref{eqn:relaxZero_g} relative to \eqref{eqn:setZero_w}, \eqref{eqn:setZero_g}:
\begin{equation}\label{eqn:relaxZero_gamma}
\gamma = \min\left\{ \min_{i\in\cA: w_i/d_i < 0} -\frac{w_i}{d_i}, \;\; \min_{i\notin\cA} \frac{\tau + \sign(c_i) g_i}{\abs{c_i}} \right\}.
\end{equation}
The update for the active set $\cA$ remains as in \eqref{eqn:setZero_A}. 

In summary, each LARS iteration is defined by \eqref{eqn:relaxZero_w}--\eqref{eqn:relaxZero_gamma}, \eqref{eqn:setZero_d}, and \eqref{eqn:setZero_c}. As mentioned, the algorithm terminates when $\alpha$ decreases to zero.

\subsection{Solution path of \eqref{eqn:selectRemove}}
\label{sec:algsAdd:path}

The LARS algorithm can also be adapted to compute the solution path of problem \eqref{eqn:selectRemove} as the penalty parameter $\alpha$ increases from zero. This adaptation differs from the one in Appendix~\ref{sec:algsAdd:setZero} in two respects: First, \eqref{eqn:selectRemove} involves updates to the entire matrix $W$, with a common step size $\gamma$, and not just to a single column. At the same time, assumptions a) and b) in Appendix~\ref{sec:algsAdd:setZero} remain in effect, allowing the computation of update directions to be done in a separable manner. Second, \eqref{eqn:selectRemove} includes a weighted $\ell_1$ penalty with weight matrix $\nabla h(A^*(\cZ))$ instead of an unweighted $\ell_1$ penalty plus an additional penalty on a single element $w_{i_0}$. To ease notation, let $P = \nabla h(A^*(\cZ))$.

As in Appendix~\ref{sec:algsAdd:setZero}, in each iteration, $\alpha$ is increased by $\gamma$, 
\begin{equation}\label{eqn:selectRemove_alpha}
\alpha \gets \alpha + \gamma,
\end{equation}
and other quantities are updated accordingly. Equations~\eqref{eqn:setZero_w} and \eqref{eqn:setZero_g} are generalized to matrices as follows:
\begin{align}
W_{\cA} &\gets W_{\cA} - \gamma D_{\cA},\label{eqn:selectRemove_W}\\
G &\gets G + \gamma C,\label{eqn:selectRemove_G}
\end{align}
where the active set $\cA = \{(i,j): W_{ij} \neq 0\}$ is now a set of pairs, $D_{ij} = 0$ for $(i,j) \notin \cA$, and 
\begin{equation}\label{eqn:G}
G = \frac{1}{n} \bX^T (\bX - \bX W)
\end{equation}
is the negative loss gradient matrix. From \eqref{eqn:selectRemove_W}--\eqref{eqn:G}, it can be seen that 
\begin{equation}\label{eqn:selectRemove_C}
C = \frac{1}{n} \bX^T \bX D.
\end{equation}

To determine $D_{ij}$ for $(i,j) \in \cA$, we use the corresponding optimality conditions for \eqref{eqn:selectRemove}:
\begin{equation}\label{eqn:selectRemoveOptCondA}
G_{ij} + \gamma C_{ij} = \sign(W_{ij}) \left(\tau + (\alpha + \gamma) P_{ij} \right), \quad (i,j) \in \cA.
\end{equation}
Define $\cA(j)$ to be the set of active elements in column $j$. By combining \eqref{eqn:selectRemove_C} and \eqref{eqn:selectRemoveOptCondA} and considering each column $j$ separately, we obtain
\[
\sign\bigl(W_{\cA(j),j}\bigr) \circ P_{\cA(j),j} = C_{\cA(j),j} = \frac{1}{n} \bX_{\cdot\cA(j)}^T \bX D_{\cdot j} = \frac{1}{n} \bX_{\cdot\cA(j)}^T \bX_{\cdot\cA(j)} D_{\cA(j),j},
\]
where the last inequality follows because $D_{ij} = 0$, $(i,j) \notin \cA(j)$. Hence 
\begin{equation}\label{eqn:selectRemove_D}
D_{\cA(j),j} = \left(\frac{1}{n} \bX_{\cdot\cA(j)}^T \bX_{\cdot\cA(j)} \right)^{-1} \left( \sign\bigl(W_{\cA(j),j}\bigr) \circ P_{\cA(j),j} \right), \quad j \in \cV.
\end{equation}

To determine the step size $\gamma$, we consider the optimality conditions for $(i,j) \in \cZ^c \backslash \cA$, i.e.
\[
\abs[\big]{G_{ij} + \gamma C_{ij}} \leq \tau + (\alpha + \gamma) P_{ij}.
\]
Similar to Appendix~\ref{sec:algsAdd:setZero}, this can be reduced to the following upper bound on $\gamma$:
\begin{equation}\label{eqn:selectRemove_gammaAc}
\gamma \leq \frac{\tau + \alpha P_{ij} - \sign(C_{ij}) G_{ij}}{\abs{C_{ij}} - P_{ij}}, \quad (i,j) \in \cZ^c \backslash \cA: \abs{C_{ij}} > P_{ij},
\end{equation}
whereas no bound is imposed if $\abs{C_{ij}} \leq P_{ij}$. We also have the conditions $W_{ij} - \gamma D_{ij} \neq 0$ for $(i,j) \in \cA$. Define $\Gamma$ as the resulting matrix of upper bounds, 
\begin{equation}\label{eqn:selectRemove_Gamma1}
\Gamma_{ij} = \begin{cases}
\dfrac{W_{ij}}{D_{ij}}, & (i,j) \in \cA: \dfrac{W_{ij}}{D_{ij}} > 0,\\
\dfrac{\tau + \alpha P_{ij} - \sign(C_{ij}) G_{ij}}{\abs{C_{ij}} - P_{ij}}, & (i,j) \in \cZ^c \backslash \cA: \abs{C_{ij}} > P_{ij},\\
+\infty & \text{otherwise.}
\end{cases}
\end{equation}
Then we have 
\begin{equation}\label{eqn:selectRemove_gamma}
\gamma = \min_{i,j} \Gamma_{i,j},
\end{equation}
and given the minimizing pair $(i^*,j^*)$ from \eqref{eqn:selectRemove_gamma}, we update the active set as 
\begin{equation}\label{eqn:selectRemove_A}
\cA \gets \begin{cases}
\cA \backslash \{(i^*, j^*)\}, & (i^*, j^*) \in \cA,\\
\cA \cup \{(i^*, j^*)\}, & (i^*, j^*) \in \cZ^c \backslash \cA.
\end{cases}
\end{equation}

The update to the active set \eqref{eqn:selectRemove_A} affects only $\cA(j^*)$ in column $j^*$. We may take advantage of this by updating only column $j^*$ of $D$ and $C$, i.e.~computing \eqref{eqn:selectRemove_D} for $j = j^*$ and \eqref{eqn:selectRemove_C} only for column $j^*$. The other columns are unchanged. Similarly, the upper bounds $\Gamma_{ij}$ are recomputed using \eqref{eqn:selectRemove_Gamma1} only for $j = j^*$. For columns other than $j^*$, it suffices to subtract the previous step size:
\begin{equation}\label{eqn:selectRemove_Gamma2}
\Gamma_{ij} \gets \Gamma_{ij} - \gamma, \quad j \neq j^*.
\end{equation}

In summary, each iteration of the modified LARS algorithm is given by \eqref{eqn:selectRemove_alpha}--\eqref{eqn:selectRemove_G}, \eqref{eqn:selectRemove_D}, \eqref{eqn:selectRemove_C}, \eqref{eqn:selectRemove_Gamma1}--\eqref{eqn:selectRemove_Gamma2}, together with the simplification noted in the previous paragraph. The algorithm terminates as soon as $(i^*, j^*)$ coincides with an edge belonging to a cycle in the existing optimal solution $W^*(\cZ)$, i.e.~$(i^*, j^*)$ such that $W^*_{i^*j^*}(\cZ) \neq 0$ and $P_{i^*j^*} > 0$.

\section{Additional experimental details and results}
\label{sec:exptAdd}

\subsection{Algorithm parameter settings}
\label{sec:exptAdd:params}

Parameter settings for all algorithms are shown in Table~\ref{tab:params}. We use the least-squares loss $\ell(W; \bX) = \frac{1}{2n} \norm{\bX - \bX W}_F^2$ regardless of the noise type. We found the polynomial acyclicity penalty $h(A) = \tr\bigl((I + A/d)^d\bigr) - d$ from \cite{yu2019dag} to take less time and perform slightly better than the exponential penalty $h(A) = \tr\bigl(e^A\bigr) - d$ from \cite{zheng2018dags} (polynomial is now also the default in the NOTEARS code). Similarly, we preferred a tolerance on $h$ of $\epsilon = 10^{-10}$ compared to $\epsilon = 10^{-8}$ in \cite{zheng2018dags}. We did not attempt to tune other parameters.

\begin{table}[ht]
\caption{Algorithm parameter settings}
\label{tab:params}
\centering
\begin{tabular}{llll}
\toprule
parameter & symbol & value & applicable to \\
\midrule
threshold on $W$ & $\omega$ \cite{zheng2018dags} & $0.3$ & NOTEARS, Abs, \\
& & & KKTS before and after \\
loss function & $\ell(W; \bX)$ & $\frac{1}{2n} \norm{\bX - \bX W}_F^2$ & NOTEARS, Abs, KKTS \\
$\ell_1$ penalty parameter & $\tau$ & $0.1$ & NOTEARS, Abs, KKTS \\
acyclity penalty & $h(A)$ & $\tr\bigl((I + A/d)^d\bigr) - d$ & NOTEARS, Abs, KKTS \\
$h$ tolerance & $\epsilon$ \cite{zheng2018dags} & $10^{-10}$ & NOTEARS, Abs, KKTS \\
$h$ progress rate & $c$ \cite{zheng2018dags} & $0.25$ & NOTEARS, Abs \\
initial solution & $W_0$ \cite{zheng2018dags} & $0$ & NOTEARS, Abs \\
initial Lagrange multiplier & $\alpha_0$ \cite{zheng2018dags} & $0$ & NOTEARS, Abs \\
$\rho$ increase factor & & $10$ & NOTEARS, Abs \\
$\rho$ maximum & & $10^{16}$ & NOTEARS, Abs \\
variablesel & & True & CAM \cite{buhlmann2014cam} \\
pruning & & True & CAM \cite{buhlmann2014cam} \\
\bottomrule
\end{tabular}
\end{table}

For baseline method causal additive models (CAM), we use Causal Discovery Toolbox (CDT) \cite{kalainathan2019causal} in Python and only tuned two input parameters,
``variablesel" and ``pruning". We found with both turned on, the results are the best. 

For baseline method fast greedy equivalent search (FGS), we use py-causal package\footnote{\url{https://github.com/bd2kccd/py-causal}} in Python from Carnegie Mellon University. We use the default parameter settings and did not tune any. 

For PC, we also used CDT, and for MMHC, we used the bnlearn package \cite{scutari2010bnlearn} in R by adapting CDT's interface for calling other bnlearn algorithms. The main parameter for both PC and MMHC is the significance level $\alpha$ for the conditional independence tests that they conduct. While we considered the same range of $\alpha$ values as in \cite{aragam2015concave}, we found $\alpha = 0.01$ or $\alpha = 0.05$ to be the best in all cases. The differences between $\alpha = 0.01$ and $\alpha = 0.05$ are not large, and in any case, PC and MMHC are not the most competitive algorithms in our experiments.

\subsection{Computing environment}

Solution times were obtained using a single $2.0$ GHz core of a server with $64$ GB of memory (only a small fraction of which was used) running Ubuntu 16.04 (64-bit). The limitation to a single core was done to control for different multi-threading behavior of different algorithms and for different dimensions $d$.

\subsection{Effect of mean subtraction}
\label{sec:exptAdd:meanSub}

\begin{figure}[ht]
  \centering
  \includegraphics[width=0.8\linewidth]{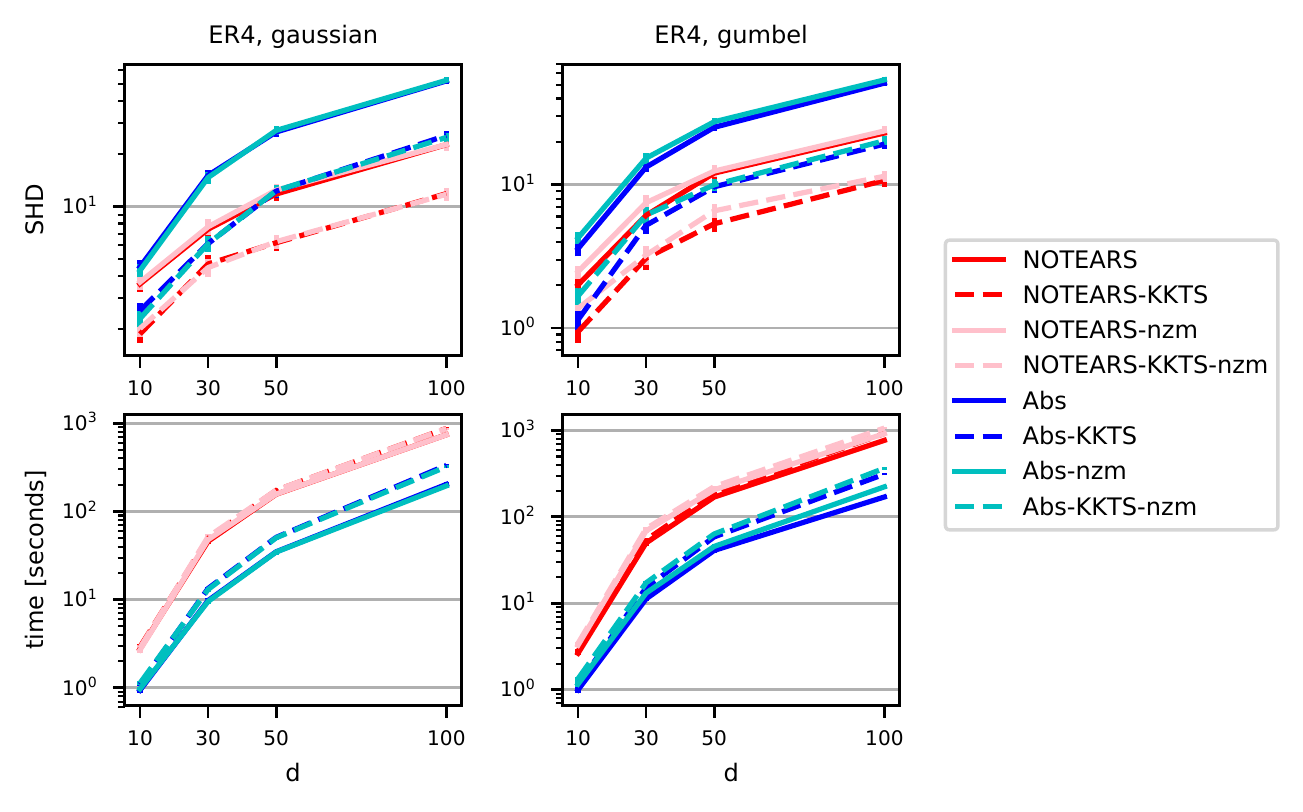}
  \caption{Effect of mean subtraction (`nzm' means nonzero mean) on SHD and solution times.}
  \label{fig:nzm}
\end{figure}

We show the effect of subtracting the mean from the data $\bX$ as a preprocessing step in Figure~\ref{fig:nzm}. Tables~\ref{tab:ER4_gaussian_nzm} and \ref{tab:ER4_gumbel_nzm} present the same results in tabular form. As one may see, subtracting the mean improves the SHD in the ER4 Gumbel case for all the methods shown and slightly decreases the running time. Mean subtraction has less effect in the Gaussian case. In our experience, subtracting the mean improves results or at least does not hurt in all the cases we studied, not just the ones shown in Figure~\ref{fig:nzm}.

\begin{table}[ht]
\caption{Effect of mean subtraction (`nzm' means nonzero mean) on ER4 graphs with Gaussian noise}
\label{tab:ER4_gaussian_nzm}
\centering
\small
\begin{tabular}{lrrrrrr}
\toprule
& \multicolumn{3}{c}{$d = 10$} & \multicolumn{3}{c}{$d = 30$} \\
\cmidrule(r){2-4} \cmidrule(l){5-7}
& SHD & nnz & time (sec) & SHD & nnz & time (sec) \\
\midrule
NOTEARS-nzm&3.70$\pm$0.36&18.05$\pm$0.29&2.7$\pm$0.2&7.66$\pm$0.81&58.11$\pm$0.76&47.9$\pm$3.5\\
NOTEARS&3.61$\pm$0.36&18.08$\pm$0.29&2.8$\pm$0.2&7.42$\pm$0.81&57.97$\pm$0.74&45.9$\pm$3.4\\
NOTEARS-KKTS-nzm&2.01$\pm$0.22&18.32$\pm$0.30&2.8$\pm$0.2&4.48$\pm$0.54&58.27$\pm$0.72&51.6$\pm$3.5\\
NOTEARS-KKTS&1.87$\pm$0.20&18.37$\pm$0.30&2.9$\pm$0.2&4.70$\pm$0.58&58.18$\pm$0.72&49.5$\pm$3.4\\
Abs-nzm&4.31$\pm$0.36&18.31$\pm$0.30&1.0$\pm$0.1&14.64$\pm$1.10&60.18$\pm$0.80&9.5$\pm$0.7\\
Abs&4.52$\pm$0.41&18.16$\pm$0.30&0.9$\pm$0.1&15.06$\pm$1.06&60.39$\pm$0.81&9.8$\pm$0.6\\
Abs-KKTS-nzm&2.29$\pm$0.25&18.29$\pm$0.31&1.1$\pm$0.1&6.13$\pm$0.64&58.31$\pm$0.71&13.0$\pm$0.7\\
Abs-KKTS&2.54$\pm$0.27&18.19$\pm$0.31&1.1$\pm$0.1&6.14$\pm$0.56&58.23$\pm$0.72&13.4$\pm$0.6\\
FGS-nzm&13.48$\pm$0.74&28.44$\pm$0.81&0.5$\pm$0.0&53.21$\pm$3.30&118.38$\pm$4.48&1.3$\pm$0.1\\
FGS&13.48$\pm$0.74&28.44$\pm$0.81&0.5$\pm$0.0&53.21$\pm$3.30&118.38$\pm$4.48&1.3$\pm$0.1\\
FGS-KKTS-nzm&5.26$\pm$0.55&17.57$\pm$0.30&0.7$\pm$0.0&15.50$\pm$1.34&59.73$\pm$0.86&4.4$\pm$0.1\\
FGS-KKTS&4.92$\pm$0.47&17.79$\pm$0.30&0.7$\pm$0.0&15.03$\pm$1.38&59.72$\pm$0.87&4.5$\pm$0.1\\
\midrule[\heavyrulewidth]
& \multicolumn{3}{c}{$d = 50$} & \multicolumn{3}{c}{$d = 100$} \\
\cmidrule(r){2-4} \cmidrule(l){5-7}
& SHD & nnz & time (sec) & SHD & nnz & time (sec) \\
\midrule
NOTEARS-nzm&12.41$\pm$1.13&99.59$\pm$1.06&157.5$\pm$7.7&22.61$\pm$1.73&199.49$\pm$1.53&739.9$\pm$23.2\\
NOTEARS&11.79$\pm$1.05&99.69$\pm$1.06&156.8$\pm$7.7&22.57$\pm$1.74&199.65$\pm$1.52&741.0$\pm$23.0\\
NOTEARS-KKTS-nzm&6.30$\pm$0.62&99.04$\pm$0.96&174.2$\pm$7.7&11.75$\pm$0.96&198.70$\pm$1.45&871.8$\pm$23.5\\
NOTEARS-KKTS&6.21$\pm$0.64&99.07$\pm$0.94&173.5$\pm$7.7&11.85$\pm$0.96&198.76$\pm$1.46&874.3$\pm$23.4\\
Abs-nzm&27.20$\pm$1.65&104.05$\pm$1.24&34.7$\pm$2.2&52.60$\pm$2.09&209.73$\pm$1.85&195.2$\pm$12.2\\
Abs&26.67$\pm$1.60&103.77$\pm$1.20&34.9$\pm$2.0&52.18$\pm$1.89&208.66$\pm$1.62&202.7$\pm$12.4\\
Abs-KKTS-nzm&12.33$\pm$1.03&99.78$\pm$0.96&50.3$\pm$2.2&24.82$\pm$1.53&201.44$\pm$1.57&321.5$\pm$12.5\\
Abs-KKTS&12.25$\pm$1.04&99.46$\pm$0.95&50.9$\pm$2.0&25.45$\pm$1.40&201.45$\pm$1.54&334.0$\pm$12.7\\
FGS-nzm&83.28$\pm$5.61&196.78$\pm$8.01&2.6$\pm$0.2&114.07$\pm$8.35&321.52$\pm$11.14&5.1$\pm$0.5\\
FGS&83.28$\pm$5.61&196.78$\pm$8.01&2.5$\pm$0.2&114.07$\pm$8.35&321.52$\pm$11.14&5.1$\pm$0.5\\
FGS-KKTS-nzm&19.58$\pm$1.78&102.32$\pm$1.23&16.6$\pm$0.3&36.74$\pm$3.62&208.68$\pm$2.22&124.7$\pm$2.4\\
FGS-KKTS&20.31$\pm$1.73&102.31$\pm$1.15&15.7$\pm$0.3&36.89$\pm$3.74&208.55$\pm$2.18&122.1$\pm$2.5\\
\bottomrule
\end{tabular}
\end{table}

\begin{table}[ht]
\caption{Effect of mean subtraction (`nzm' means nonzero mean) on ER4 graphs with Gumbel noise}
\label{tab:ER4_gumbel_nzm}
\centering
\small
\begin{tabular}{lrrrrrr}
\toprule
& \multicolumn{3}{c}{$d = 10$} & \multicolumn{3}{c}{$d = 30$} \\
\cmidrule(r){2-4} \cmidrule(l){5-7}
& SHD & nnz & time (sec) & SHD & nnz & time (sec) \\
\midrule
NOTEARS-nzm&2.47$\pm$0.26&19.11$\pm$0.31&3.3$\pm$0.1&7.49$\pm$0.99&60.47$\pm$0.79&68.9$\pm$3.2\\
NOTEARS&2.00$\pm$0.26&19.24$\pm$0.32&2.6$\pm$0.1&6.11$\pm$0.89&60.59$\pm$0.76&49.9$\pm$3.6\\
NOTEARS-KKTS-nzm&1.37$\pm$0.16&19.33$\pm$0.32&3.4$\pm$0.1&3.23$\pm$0.49&59.97$\pm$0.73&73.0$\pm$3.2\\
NOTEARS-KKTS&0.94$\pm$0.15&19.42$\pm$0.30&2.8$\pm$0.1&3.07$\pm$0.54&60.47$\pm$0.72&54.1$\pm$3.6\\
Abs-nzm&4.25$\pm$0.42&19.52$\pm$0.35&1.2$\pm$0.1&15.33$\pm$1.28&64.34$\pm$0.96&13.1$\pm$0.8\\
Abs&3.58$\pm$0.42&19.55$\pm$0.35&1.0$\pm$0.1&13.27$\pm$1.07&63.52$\pm$0.87&11.2$\pm$0.7\\
Abs-KKTS-nzm&1.67$\pm$0.22&19.30$\pm$0.32&1.3$\pm$0.1&6.15$\pm$0.83&60.90$\pm$0.75&17.2$\pm$0.8\\
Abs-KKTS&1.14$\pm$0.18&19.36$\pm$0.31&1.1$\pm$0.1&5.21$\pm$0.68&60.87$\pm$0.76&15.2$\pm$0.7\\
FGS-nzm&12.29$\pm$0.66&27.85$\pm$0.83&0.5$\pm$0.0&53.42$\pm$3.56&119.62$\pm$4.82&1.3$\pm$0.1\\
FGS&12.29$\pm$0.66&27.85$\pm$0.83&0.5$\pm$0.0&53.42$\pm$3.56&119.62$\pm$4.82&1.3$\pm$0.1\\
FGS-KKTS-nzm&3.97$\pm$0.46&18.75$\pm$0.33&0.7$\pm$0.0&14.75$\pm$1.48&62.64$\pm$0.92&4.9$\pm$0.1\\
FGS-KKTS&3.58$\pm$0.45&19.12$\pm$0.31&0.7$\pm$0.0&12.36$\pm$1.38&62.05$\pm$0.85&4.8$\pm$0.1\\
\midrule[\heavyrulewidth]
& \multicolumn{3}{c}{$d = 50$} & \multicolumn{3}{c}{$d = 100$} \\
\cmidrule(r){2-4} \cmidrule(l){5-7}
& SHD & nnz & time (sec) & SHD & nnz & time (sec) \\
\midrule
NOTEARS-nzm&12.45$\pm$1.19&100.58$\pm$1.16&205.7$\pm$6.4&23.73$\pm$1.90&201.65$\pm$1.63&918.6$\pm$18.2\\
NOTEARS&12.05$\pm$1.28&101.03$\pm$1.20&169.4$\pm$7.6&22.97$\pm$1.92&202.58$\pm$1.61&768.4$\pm$19.5\\
NOTEARS-KKTS-nzm&6.57$\pm$0.76&100.50$\pm$1.13&223.8$\pm$6.5&11.46$\pm$1.09&200.99$\pm$1.45&1066.1$\pm$18.5\\
NOTEARS-KKTS&5.34$\pm$0.69&100.16$\pm$1.09&187.8$\pm$7.7&10.71$\pm$1.08&201.68$\pm$1.41&922.2$\pm$19.6\\
Abs-nzm&27.49$\pm$1.67&108.09$\pm$1.43&45.5$\pm$3.1&53.92$\pm$2.23&217.77$\pm$1.86&222.6$\pm$12.9\\
Abs&25.16$\pm$1.62&107.75$\pm$1.41&40.9$\pm$2.7&51.18$\pm$2.19&216.54$\pm$1.88&169.8$\pm$9.7\\
Abs-KKTS-nzm&9.98$\pm$0.91&101.67$\pm$1.15&63.4$\pm$3.1&20.28$\pm$1.47&204.09$\pm$1.70&365.7$\pm$12.8\\
Abs-KKTS&9.67$\pm$0.92&101.62$\pm$1.15&58.6$\pm$2.7&19.20$\pm$1.44&204.36$\pm$1.61&312.2$\pm$9.9\\
FGS-nzm&76.40$\pm$4.77&184.04$\pm$6.73&2.2$\pm$0.1&110.21$\pm$6.39&312.84$\pm$8.48&4.2$\pm$0.2\\
FGS&76.40$\pm$4.77&184.04$\pm$6.73&2.2$\pm$0.1&110.21$\pm$6.39&312.84$\pm$8.48&4.2$\pm$0.2\\
FGS-KKTS-nzm&22.77$\pm$1.82&104.63$\pm$1.37&17.0$\pm$0.4&34.17$\pm$2.71&209.42$\pm$2.06&135.0$\pm$3.7\\
FGS-KKTS&19.48$\pm$1.62&104.23$\pm$1.26&18.5$\pm$0.4&31.32$\pm$2.64&209.63$\pm$1.98&136.3$\pm$3.9\\
\bottomrule
\end{tabular}
\end{table}

\clearpage
\subsection{Ablation study of KKT-informed local search}
\label{sec:exptAdd:ablation}

We also conduct an ablation study on KKT-informed local search by controlling which local search operations are performed. We test local search without reducing unnecessary constraints (`-noReduce'), without reversing edges (`-noReverse'), and full local search on  the ER4-Gumbel case. As shown in Figure~\ref{fig:minZRev} (with numerical values shown in Table~\ref{tab:ER4_gumbel_MinZRev}), NOTEARS-KKTS-noReverse outperforms NOTEARS-KKTS-noReduce in terms of SHD, while the opposite is true for Abs-KKTS-noReverse and Abs-KKTS-noReduce. Moreover, they are both worse than the full local search, showing that they are necessary and complement each other. In line with the discussion in Section~\ref{sec:expt}, we hypothesize that Abs benefits more from reversing edges than NOTEARS because Abs by itself suffers more from poorer local minima. Time-wise, the full local search takes only slightly longer than the other methods depicted. 

\begin{figure}[ht]
  \centering
  \includegraphics[width=0.7\linewidth]{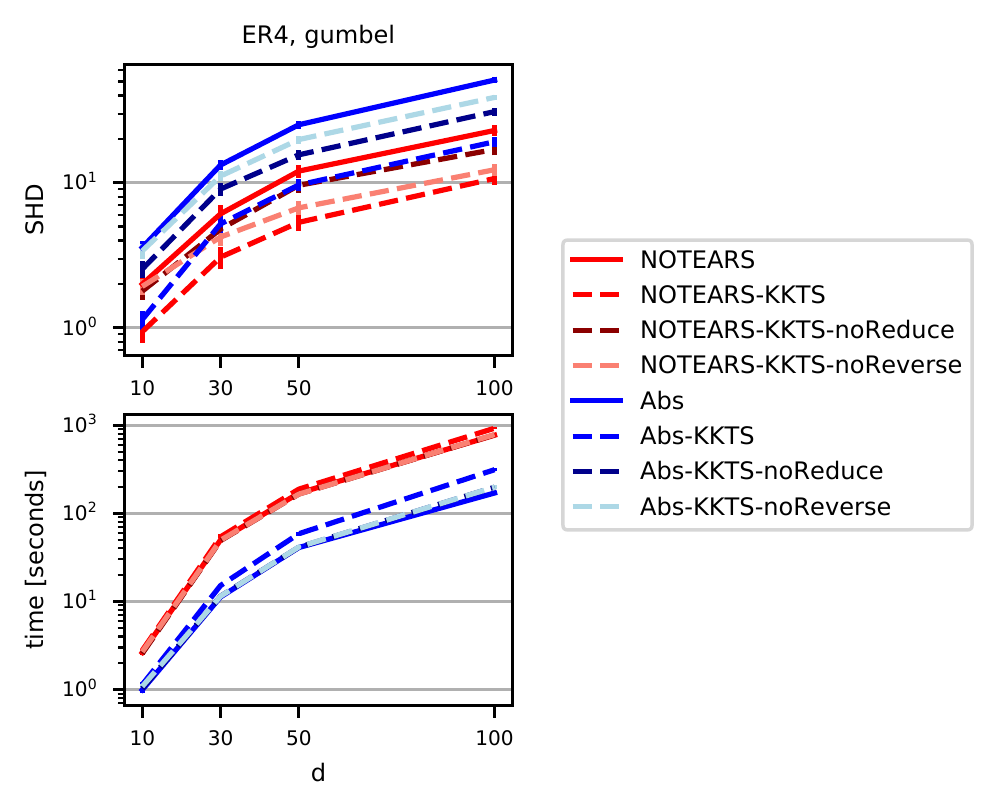}
  \caption{SHD and solution time of KKTS combinations without reducing unnecessary constraints (`-noReduce') and without reversing edges (`-noReverse').}
  \label{fig:minZRev}
\end{figure}

\begin{table}[ht]
\caption{Results of KKTS combinations without reducing unnecessary constraints (`-noReduce') and without reversing edges (`-noReverse') on ER4 graphs with Gumbel noise.}
\label{tab:ER4_gumbel_MinZRev}
\centering
\small
\begin{tabular}{lrrrrrr}
\toprule
& \multicolumn{3}{c}{$d = 10$} & \multicolumn{3}{c}{$d = 30$} \\
\cmidrule(r){2-4} \cmidrule(l){5-7}
& SHD & nnz & time (sec) & SHD & nnz & time (sec) \\
\midrule
NOTEARS&2.00$\pm$0.26&19.24$\pm$0.32&2.6$\pm$0.1&6.11$\pm$0.89&60.59$\pm$0.76&49.9$\pm$3.6\\
NOTEARS-KKTS&0.94$\pm$0.15&19.42$\pm$0.30&2.8$\pm$0.1&3.07$\pm$0.54&60.47$\pm$0.72&54.1$\pm$3.6\\
NOTEARS-KKTS-noReduce&1.79$\pm$0.23&19.02$\pm$0.30&2.6$\pm$0.1&4.80$\pm$0.77&59.26$\pm$0.69&49.0$\pm$3.5\\
NOTEARS-KKTS-noReverse&1.93$\pm$0.24&19.24$\pm$0.30&2.7$\pm$0.1&4.23$\pm$0.58&60.70$\pm$0.73&49.5$\pm$3.5\\
Abs&3.58$\pm$0.42&19.55$\pm$0.35&1.0$\pm$0.1&13.27$\pm$1.07&63.52$\pm$0.87&11.2$\pm$0.7\\
Abs-KKTS&1.14$\pm$0.18&19.36$\pm$0.31&1.1$\pm$0.1&5.21$\pm$0.68&60.87$\pm$0.76&15.2$\pm$0.7\\
Abs-KKTS-noReduce&2.54$\pm$0.35&18.82$\pm$0.31&1.0$\pm$0.1&9.06$\pm$0.97&59.59$\pm$0.75&11.5$\pm$0.7\\
Abs-KKTS-noReverse&3.39$\pm$0.38&19.51$\pm$0.34&1.1$\pm$0.1&11.13$\pm$0.92&63.74$\pm$0.83&11.4$\pm$0.7\\
\midrule[\heavyrulewidth]
& \multicolumn{3}{c}{$d = 50$} & \multicolumn{3}{c}{$d = 100$} \\
\cmidrule(r){2-4} \cmidrule(l){5-7}
& SHD & nnz & time (sec) & SHD & nnz & time (sec) \\
\midrule
NOTEARS&12.05$\pm$1.28&101.03$\pm$1.20&169.4$\pm$7.6&22.97$\pm$1.92&202.58$\pm$1.61&768.4$\pm$19.5\\
NOTEARS-KKTS&5.34$\pm$0.69&100.16$\pm$1.09&187.8$\pm$7.7&10.71$\pm$1.08&201.68$\pm$1.41&922.2$\pm$19.6\\
NOTEARS-KKTS-noReduce&9.62$\pm$1.08&98.05$\pm$1.03&165.3$\pm$7.4&16.99$\pm$1.52&196.47$\pm$1.39&776.2$\pm$19.9\\
NOTEARS-KKTS-noReverse&6.72$\pm$0.76&100.69$\pm$1.07&165.1$\pm$7.4&12.31$\pm$1.15&202.03$\pm$1.41&781.3$\pm$19.9\\
Abs&25.16$\pm$1.62&107.75$\pm$1.41&40.9$\pm$2.7&51.18$\pm$2.19&216.54$\pm$1.88&169.8$\pm$9.7\\
Abs-KKTS&9.67$\pm$0.92&101.62$\pm$1.15&58.6$\pm$2.7&19.20$\pm$1.44&204.36$\pm$1.61&312.2$\pm$9.9\\
Abs-KKTS-noReduce&15.60$\pm$1.27&98.73$\pm$1.12&41.2$\pm$2.7&30.99$\pm$1.88&198.78$\pm$1.60&197.6$\pm$11.2\\
Abs-KKTS-noReverse&19.88$\pm$1.22&107.66$\pm$1.25&41.4$\pm$2.7&38.84$\pm$1.65&216.96$\pm$1.73&197.9$\pm$11.1\\
\bottomrule
\end{tabular}
\end{table}

\clearpage
\subsection{Additional results}
\label{sec:exptAdd:results}

Figures~\ref{fig:gauss}--\ref{fig:exp} show SHD and running time results in the same manner as Figure~\ref{fig:main} for all tested combinations of SEM noise type (Gaussian, Gumbel, exponential), graph type (ER2, ER4, SF4), and $n = 1000$. The patterns discussed in Section~\ref{sec:expt} are quite similar across the three noise types.

\begin{figure}[ht]
  \centering
  \includegraphics[width=\linewidth]{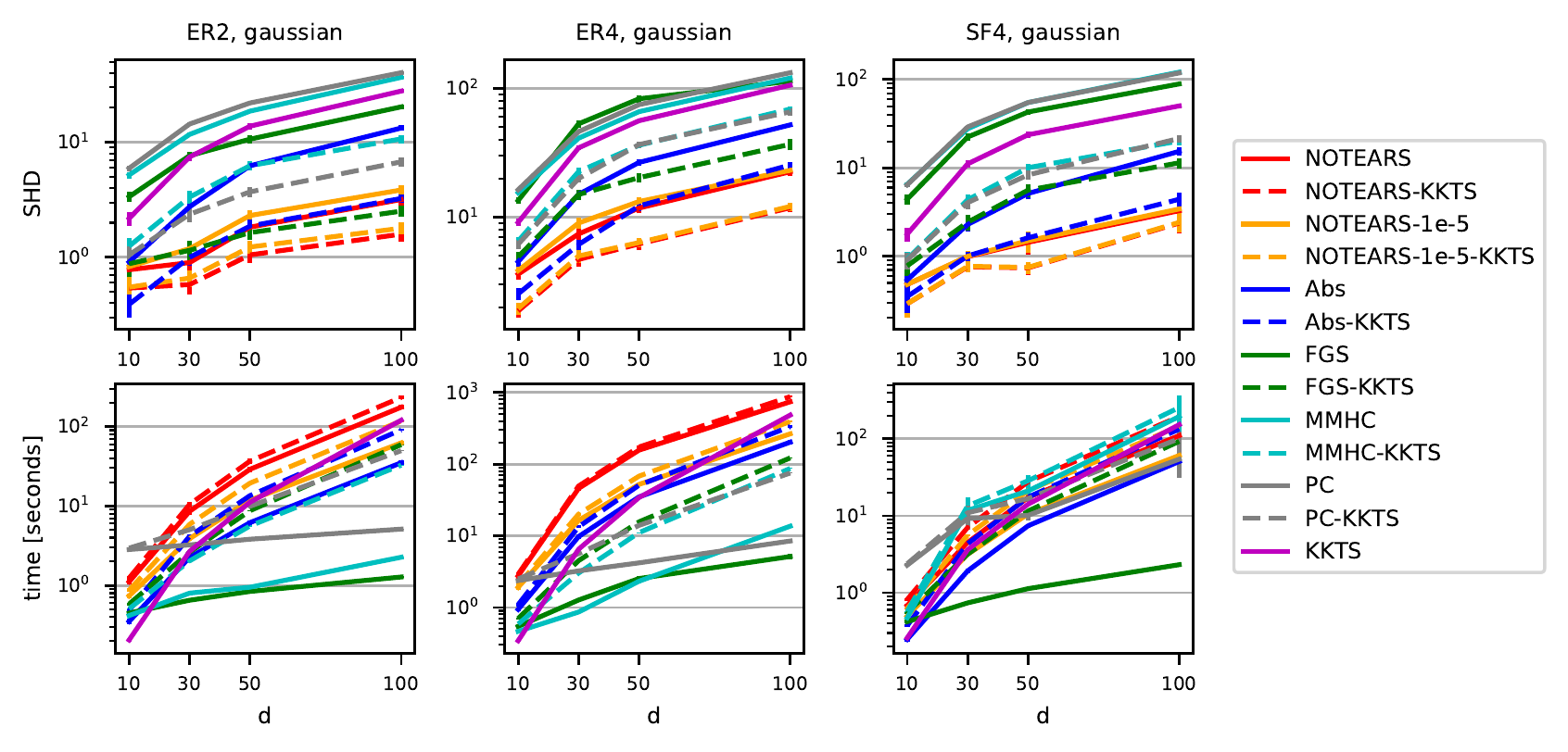}
  \caption{Structural Hamming distances (SHD) and solution times for SEMs with Gaussian noise and $n = 1000$. Red lines overlap with orange in the SF4 SHD plot.}
  \label{fig:gauss}
\end{figure}

\begin{figure}[ht]
  \centering
  \includegraphics[width=\linewidth]{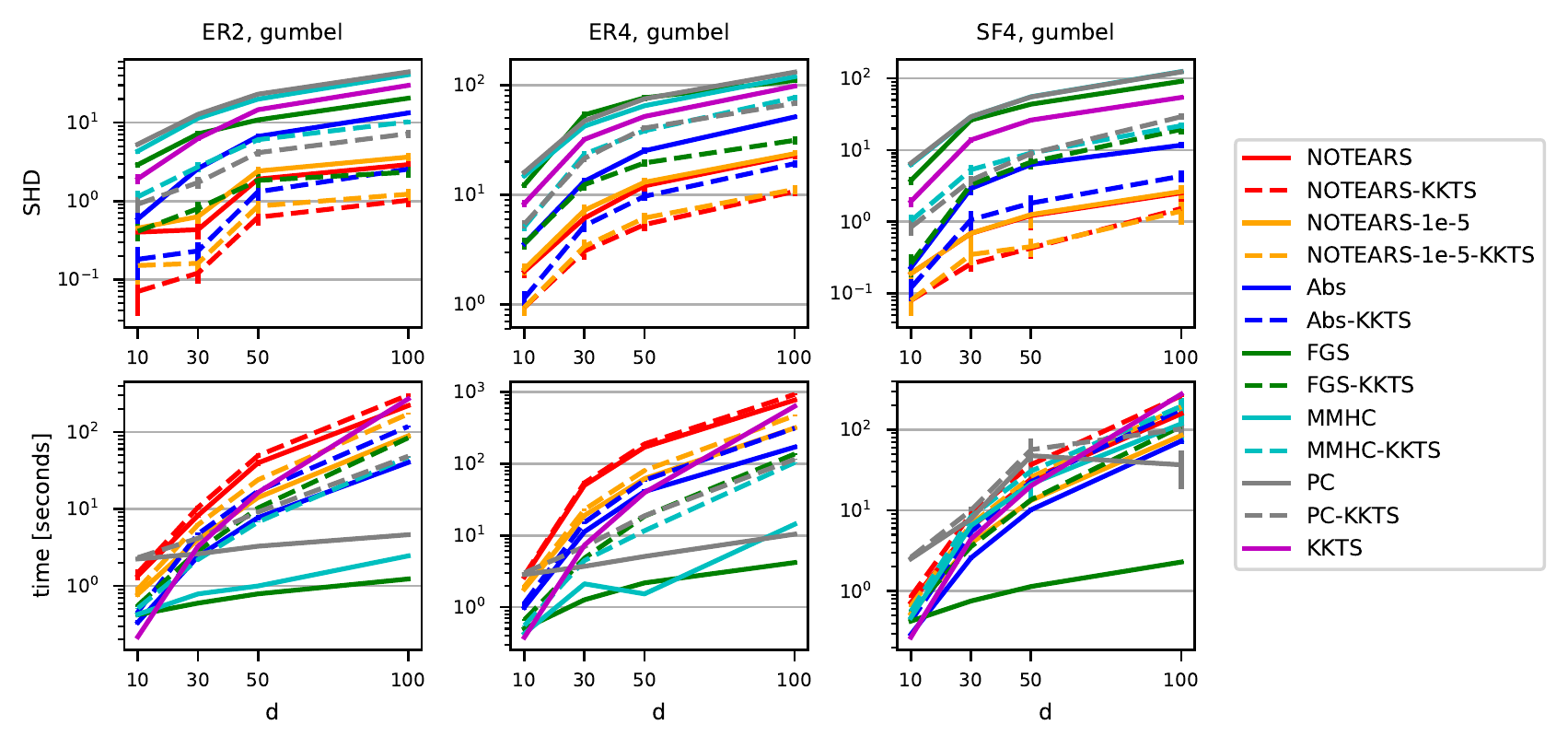}
  \caption{Structural Hamming distances (SHD) and solution times for SEMs with Gumbel noise and $n = 1000$. Red lines overlap with orange in the SF4 SHD plot.}
  \label{fig:gumbel}
\end{figure}

\begin{figure}[ht]
  \centering
  \includegraphics[width=\linewidth]{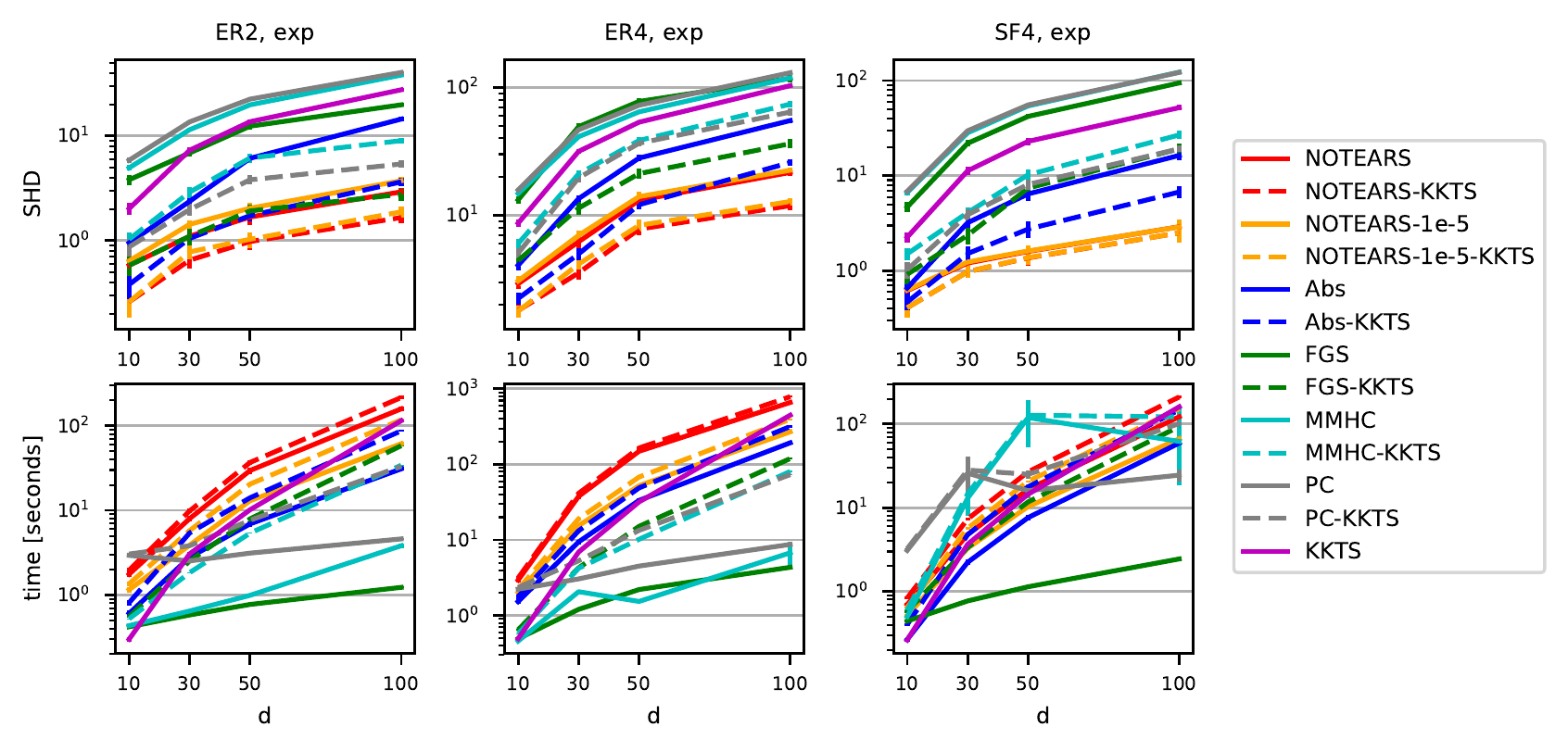}
  \caption{Structural Hamming distances (SHD) and solution times for SEMs with exponential noise and $n = 1000$. Red lines overlap with orange in the SF4 SHD plot.}
  \label{fig:exp}
\end{figure}

In response to a reviewer comment, we performed a quick comparison between the original GES algorithm \cite{chickering02} and its FGS implementation \cite{ramsey2017million}. As seen in Figure~\ref{fig:GES}, not only is FGS faster than GES as expected, but its SHD is also much better. After applying KKTS however, FGS-KKTS and GES-KKTS are similar.

\begin{figure}[ht]
  \centering
  \includegraphics[width=0.7\linewidth]{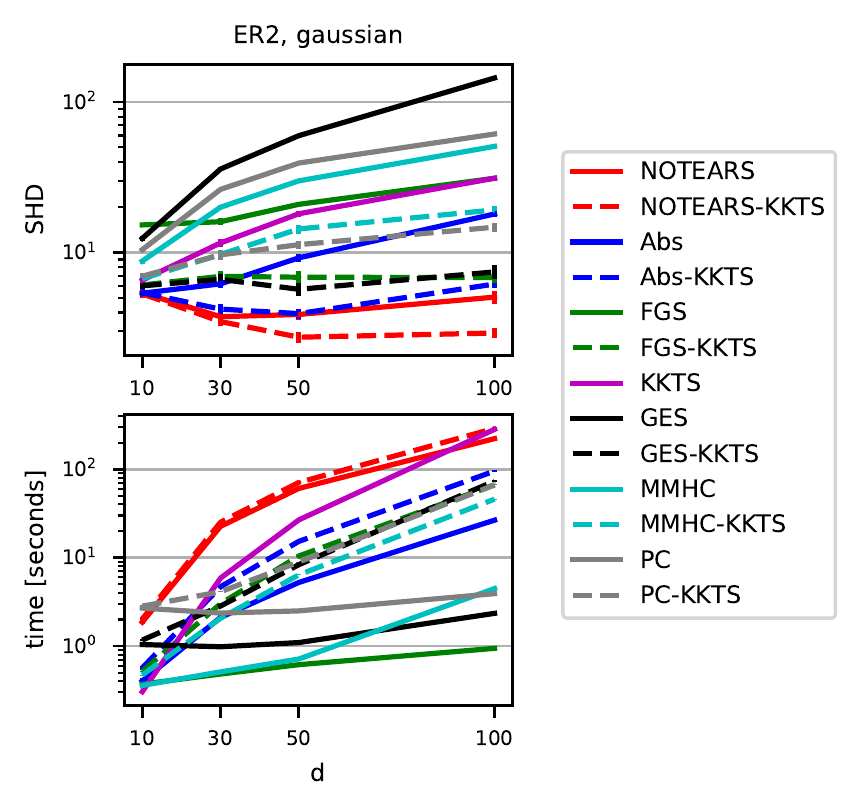}
  \caption{SHD and solution time for the same $n = 2d$ setting as in Figure~\ref{fig:main_n_2d}, showing that FGS is both faster and more accurate than GES.}
  \label{fig:GES}
\end{figure}

Tables~\ref{tab:ER2_gaussian}--\ref{tab:SF4_exp} show the same results as Figures~\ref{fig:gauss}--\ref{fig:exp} in tabular form. In addition, results for CAM \cite{buhlmann2014cam} are also shown and are seen to be less competitive in the linear SEM setting tested here. Nevertheless, like the other -KKTS combinations, CAM-KKTS succeeds in improving the SHDs of CAM, by large factors in some cases.

\begin{table}[ht]
\caption{Results (mean $\pm$ standard error over 100 trials) on ER2 graphs with Gaussian noise, $n = 1000$.}
\label{tab:ER2_gaussian}
\centering
\small
\begin{tabular}{lrrrrrr}
\toprule
& \multicolumn{3}{c}{$d = 10$} & \multicolumn{3}{c}{$d = 30$} \\
\cmidrule(r){2-4} \cmidrule(l){5-7}
& SHD & nnz & time (sec) & SHD & nnz & time (sec) \\
\midrule
NOTEARS&0.78$\pm$0.15&10.04$\pm$0.31&1.1$\pm$0.1&0.90$\pm$0.14&29.41$\pm$0.52&8.6$\pm$0.8\\
NOTEARS-KKTS&0.54$\pm$0.13&10.13$\pm$0.32&1.2$\pm$0.1&0.58$\pm$0.10&29.50$\pm$0.53&10.5$\pm$0.8\\
NOTEARS&0.83$\pm$0.15&10.00$\pm$0.31&0.7$\pm$0.0&1.19$\pm$0.16&29.18$\pm$0.52&3.9$\pm$0.2\\
NOTEARS-KKTS&0.55$\pm$0.13&10.12$\pm$0.32&0.9$\pm$0.0&0.66$\pm$0.11&29.47$\pm$0.53&5.9$\pm$0.2\\
Abs&0.91$\pm$0.17&10.12$\pm$0.32&0.4$\pm$0.0&2.75$\pm$0.34&29.54$\pm$0.55&2.3$\pm$0.2\\
Abs-KKTS&0.39$\pm$0.09&10.12$\pm$0.31&0.5$\pm$0.0&1.00$\pm$0.16&29.37$\pm$0.53&4.2$\pm$0.2\\
FGS&3.36$\pm$0.34&12.04$\pm$0.59&0.5$\pm$0.0&7.65$\pm$0.56&32.87$\pm$0.88&0.7$\pm$0.0\\
FGS-KKTS&0.88$\pm$0.21&10.15$\pm$0.31&0.6$\pm$0.0&1.15$\pm$0.24&29.49$\pm$0.53&2.6$\pm$0.1\\
MMHC&5.20$\pm$0.34&9.38$\pm$0.23&0.4$\pm$0.0&11.71$\pm$0.46&28.13$\pm$0.42&0.8$\pm$0.0\\
MMHC-KKTS&1.25$\pm$0.21&9.92$\pm$0.31&0.5$\pm$0.0&3.34$\pm$0.42&29.56$\pm$0.57&2.0$\pm$0.0\\
PC&5.94$\pm$0.36&12.21$\pm$0.27&2.8$\pm$0.1&14.42$\pm$0.53&36.83$\pm$0.47&3.2$\pm$0.0\\
PC-KKTS&1.05$\pm$0.20&10.05$\pm$0.31&3.0$\pm$0.1&2.37$\pm$0.36&29.46$\pm$0.54&5.0$\pm$0.1\\
Search&2.18$\pm$0.31&9.89$\pm$0.29&0.2$\pm$0.0&7.45$\pm$0.58&28.76$\pm$0.52&2.7$\pm$0.1\\
CAM&7.79$\pm$0.54&12.75$\pm$0.49&13.8$\pm$0.4&20.83$\pm$0.90&37.78$\pm$0.85&68.6$\pm$1.2\\
CAM-KKTS&1.41$\pm$0.22&10.19$\pm$0.33&14.0$\pm$0.4&3.81$\pm$0.49&29.49$\pm$0.54&71.0$\pm$1.2\\
\midrule[\heavyrulewidth]
& \multicolumn{3}{c}{$d = 50$} & \multicolumn{3}{c}{$d = 100$} \\
\cmidrule(r){2-4} \cmidrule(l){5-7}
& SHD & nnz & time (sec) & SHD & nnz & time (sec) \\
\midrule
NOTEARS&1.83$\pm$0.28&50.07$\pm$0.68&29.0$\pm$2.5&3.18$\pm$0.40&97.21$\pm$0.91&175.2$\pm$10.9\\
NOTEARS-KKTS&1.05$\pm$0.16&50.18$\pm$0.68&36.8$\pm$2.5&1.59$\pm$0.22&97.51$\pm$0.89&232.6$\pm$11.1\\
NOTEARS&2.31$\pm$0.26&49.79$\pm$0.69&11.8$\pm$0.7&3.85$\pm$0.35&96.64$\pm$0.90&62.3$\pm$3.1\\
NOTEARS-KKTS&1.23$\pm$0.16&50.21$\pm$0.69&19.4$\pm$0.8&1.79$\pm$0.21&97.48$\pm$0.90&117.6$\pm$3.4\\
Abs&6.25$\pm$0.53&50.90$\pm$0.77&6.2$\pm$0.4&13.31$\pm$0.71&98.53$\pm$1.07&35.0$\pm$2.2\\
Abs-KKTS&1.87$\pm$0.25&50.03$\pm$0.70&13.5$\pm$0.5&3.25$\pm$0.30&97.12$\pm$0.93&91.1$\pm$2.5\\
FGS&10.65$\pm$0.88&54.85$\pm$1.32&0.8$\pm$0.0&20.33$\pm$0.84&104.42$\pm$1.38&1.3$\pm$0.0\\
FGS-KKTS&1.64$\pm$0.23&50.34$\pm$0.70&8.7$\pm$0.1&2.52$\pm$0.28&97.88$\pm$0.91&59.2$\pm$0.9\\
MMHC&18.66$\pm$0.60&46.42$\pm$0.52&1.0$\pm$0.0&36.71$\pm$0.81&93.25$\pm$0.77&2.3$\pm$0.0\\
MMHC-KKTS&6.26$\pm$0.59&50.18$\pm$0.74&5.6$\pm$0.1&10.71$\pm$0.91&98.14$\pm$0.99&32.7$\pm$0.5\\
PC&21.99$\pm$0.65&61.79$\pm$0.64&3.8$\pm$0.1&40.27$\pm$0.86&123.45$\pm$0.92&5.1$\pm$0.1\\
PC-KKTS&3.71$\pm$0.37&50.38$\pm$0.70&9.9$\pm$0.1&6.76$\pm$0.58&97.74$\pm$0.94&49.7$\pm$0.8\\
Search&13.81$\pm$0.86&48.70$\pm$0.68&11.2$\pm$0.3&27.89$\pm$1.08&95.02$\pm$0.94&120.2$\pm$3.0\\
CAM&34.90$\pm$0.98&64.16$\pm$1.03&127.9$\pm$2.1&65.40$\pm$1.31&123.84$\pm$1.29&253.8$\pm$2.7\\
CAM-KKTS&7.24$\pm$0.66&50.71$\pm$0.75&142.5$\pm$2.0&10.80$\pm$0.84&98.61$\pm$0.97&360.8$\pm$4.1\\
\bottomrule
\end{tabular}
\end{table}

\begin{table}[ht]
\caption{Results (mean $\pm$ standard error over 100 trials) on ER4 graphs with Gaussian noise, $n = 1000$.}
\label{tab:ER4_gaussian}
\centering
\small
\begin{tabular}{lrrrrrr}
\toprule
& \multicolumn{3}{c}{$d = 10$} & \multicolumn{3}{c}{$d = 30$} \\
\cmidrule(r){2-4} \cmidrule(l){5-7}
& SHD & nnz & time (sec) & SHD & nnz & time (sec) \\
\midrule
NOTEARS&3.61$\pm$0.36&18.08$\pm$0.29&2.8$\pm$0.2&7.42$\pm$0.81&57.97$\pm$0.74&45.9$\pm$3.4\\
NOTEARS-KKTS&1.87$\pm$0.20&18.37$\pm$0.30&2.9$\pm$0.2&4.70$\pm$0.58&58.18$\pm$0.72&49.5$\pm$3.4\\
NOTEARS&3.85$\pm$0.37&17.89$\pm$0.30&1.9$\pm$0.1&9.02$\pm$0.87&57.52$\pm$0.76&16.4$\pm$1.0\\
NOTEARS-KKTS&1.95$\pm$0.22&18.35$\pm$0.30&2.1$\pm$0.1&5.00$\pm$0.57&58.08$\pm$0.70&20.1$\pm$1.0\\
Abs&4.52$\pm$0.41&18.16$\pm$0.30&0.9$\pm$0.1&15.06$\pm$1.06&60.39$\pm$0.81&9.8$\pm$0.6\\
Abs-KKTS&2.54$\pm$0.27&18.19$\pm$0.31&1.1$\pm$0.1&6.14$\pm$0.56&58.23$\pm$0.72&13.4$\pm$0.6\\
FGS&13.48$\pm$0.74&28.44$\pm$0.81&0.5$\pm$0.0&53.21$\pm$3.30&118.38$\pm$4.48&1.3$\pm$0.1\\
FGS-KKTS&4.92$\pm$0.47&17.79$\pm$0.30&0.7$\pm$0.0&15.03$\pm$1.38&59.72$\pm$0.87&4.5$\pm$0.1\\
MMHC&15.51$\pm$0.50&11.90$\pm$0.17&0.5$\pm$0.0&41.15$\pm$1.18&35.85$\pm$0.43&0.9$\pm$0.0\\
MMHC-KKTS&6.53$\pm$0.53&17.50$\pm$0.32&0.6$\pm$0.0&22.65$\pm$1.57&59.48$\pm$0.92&3.0$\pm$0.0\\
PC&16.35$\pm$0.50&15.26$\pm$0.23&2.4$\pm$0.0&46.28$\pm$1.23&45.46$\pm$0.46&3.2$\pm$0.0\\
PC-KKTS&6.17$\pm$0.52&17.26$\pm$0.32&2.5$\pm$0.0&20.00$\pm$1.44&59.15$\pm$0.85&5.6$\pm$0.1\\
Search&9.12$\pm$0.59&15.75$\pm$0.31&0.3$\pm$0.0&34.61$\pm$1.37&51.10$\pm$0.70&6.5$\pm$0.1\\
CAM&19.06$\pm$0.64&22.84$\pm$0.37&12.9$\pm$0.3&56.80$\pm$1.70&77.53$\pm$1.06&65.7$\pm$1.1\\
CAM-KKTS&6.64$\pm$0.57&17.42$\pm$0.30&13.1$\pm$0.3&19.55$\pm$1.45&59.48$\pm$0.84&70.8$\pm$1.2\\
\midrule[\heavyrulewidth]
& \multicolumn{3}{c}{$d = 50$} & \multicolumn{3}{c}{$d = 100$} \\
\cmidrule(r){2-4} \cmidrule(l){5-7}
& SHD & nnz & time (sec) & SHD & nnz & time (sec) \\
\midrule
NOTEARS&11.79$\pm$1.05&99.69$\pm$1.06&156.8$\pm$7.7&22.57$\pm$1.74&199.65$\pm$1.52&741.0$\pm$23.0\\
NOTEARS-KKTS&6.21$\pm$0.64&99.07$\pm$0.94&173.5$\pm$7.7&11.85$\pm$0.96&198.76$\pm$1.46&874.3$\pm$23.4\\
NOTEARS&13.25$\pm$0.99&98.51$\pm$1.06&52.2$\pm$2.2&23.16$\pm$1.57&197.89$\pm$1.48&264.9$\pm$10.1\\
NOTEARS-KKTS&6.36$\pm$0.55&98.66$\pm$0.95&68.1$\pm$2.2&12.13$\pm$0.92&198.56$\pm$1.46&391.8$\pm$10.5\\
Abs&26.67$\pm$1.60&103.77$\pm$1.20&34.9$\pm$2.0&52.18$\pm$1.89&208.66$\pm$1.62&202.7$\pm$12.4\\
Abs-KKTS&12.25$\pm$1.04&99.46$\pm$0.95&50.9$\pm$2.0&25.45$\pm$1.40&201.45$\pm$1.54&334.0$\pm$12.7\\
FGS&83.28$\pm$5.61&196.78$\pm$8.01&2.5$\pm$0.2&114.07$\pm$8.35&321.52$\pm$11.14&5.1$\pm$0.5\\
FGS-KKTS&20.31$\pm$1.73&102.31$\pm$1.15&15.7$\pm$0.3&36.89$\pm$3.74&208.55$\pm$2.18&122.1$\pm$2.5\\
MMHC&66.13$\pm$1.50&61.29$\pm$0.55&2.3$\pm$0.1&120.66$\pm$2.06&128.66$\pm$1.10&13.6$\pm$0.3\\
MMHC-KKTS&36.27$\pm$2.05&104.06$\pm$1.32&11.1$\pm$0.2&69.24$\pm$3.35&213.52$\pm$2.14&87.0$\pm$1.2\\
PC&74.99$\pm$1.62&77.16$\pm$0.56&4.2$\pm$0.1&132.77$\pm$2.52&152.49$\pm$0.93&8.5$\pm$0.2\\
PC-KKTS&36.73$\pm$2.14&103.28$\pm$1.23&14.0$\pm$0.2&65.88$\pm$3.99&212.44$\pm$2.27&75.2$\pm$1.1\\
Search&56.23$\pm$1.66&89.33$\pm$0.98&34.3$\pm$0.6&106.24$\pm$2.58&181.97$\pm$1.41&484.0$\pm$7.1\\
CAM&91.13$\pm$2.02&129.64$\pm$1.43&130.2$\pm$1.9&159.91$\pm$3.11&247.67$\pm$1.86&271.4$\pm$3.2\\
CAM-KKTS&34.76$\pm$1.98&104.17$\pm$1.23&146.1$\pm$2.1&68.13$\pm$3.71&212.93$\pm$2.19&388.3$\pm$5.7\\
\bottomrule
\end{tabular}
\end{table}

\begin{table}[ht]
\caption{Results (mean $\pm$ standard error over 100 trials) on SF4 graphs with Gaussian noise, $n = 1000$.}
\label{tab:SF4_gaussian}
\centering
\small
\begin{tabular}{lrrrrrr}
\toprule
& \multicolumn{3}{c}{$d = 10$} & \multicolumn{3}{c}{$d = 30$} \\
\cmidrule(r){2-4} \cmidrule(l){5-7}
& SHD & nnz & time (sec) & SHD & nnz & time (sec) \\
\midrule
NOTEARS&0.48$\pm$0.12&13.67$\pm$0.18&0.7$\pm$0.0&0.99$\pm$0.16&49.91$\pm$0.33&4.5$\pm$0.2\\
NOTEARS-KKTS&0.29$\pm$0.09&13.64$\pm$0.16&0.8$\pm$0.0&0.76$\pm$0.10&49.95$\pm$0.33&7.0$\pm$0.2\\
NOTEARS&0.48$\pm$0.12&13.67$\pm$0.18&0.5$\pm$0.0&1.00$\pm$0.16&49.90$\pm$0.33&3.2$\pm$0.1\\
NOTEARS-KKTS&0.29$\pm$0.09&13.64$\pm$0.16&0.6$\pm$0.0&0.77$\pm$0.10&49.94$\pm$0.33&5.6$\pm$0.1\\
Abs&0.54$\pm$0.15&13.62$\pm$0.16&0.2$\pm$0.0&2.29$\pm$0.39&50.03$\pm$0.35&1.9$\pm$0.1\\
Abs-KKTS&0.35$\pm$0.12&13.63$\pm$0.16&0.4$\pm$0.0&1.01$\pm$0.12&49.93$\pm$0.33&4.3$\pm$0.1\\
FGS&4.42$\pm$0.55&17.25$\pm$0.71&0.4$\pm$0.0&22.37$\pm$1.96&65.48$\pm$2.39&0.7$\pm$0.0\\
FGS-KKTS&0.78$\pm$0.17&13.58$\pm$0.16&0.6$\pm$0.0&2.43$\pm$0.49&50.05$\pm$0.36&3.2$\pm$0.0\\
MMHC&6.53$\pm$0.37&11.59$\pm$0.14&0.5$\pm$0.0&27.73$\pm$0.68&34.48$\pm$0.41&11.8$\pm$3.9\\
MMHC-KKTS&0.93$\pm$0.20&13.58$\pm$0.17&0.6$\pm$0.0&4.36$\pm$0.66&49.78$\pm$0.37&13.4$\pm$3.9\\
PC&6.44$\pm$0.33&12.35$\pm$0.17&2.3$\pm$0.0&29.09$\pm$0.58&36.11$\pm$0.39&9.2$\pm$2.5\\
PC-KKTS&0.91$\pm$0.20&13.54$\pm$0.16&2.4$\pm$0.0&4.05$\pm$0.65&49.83$\pm$0.36&10.9$\pm$2.5\\
Search&1.78$\pm$0.31&13.28$\pm$0.17&0.3$\pm$0.0&11.11$\pm$0.98&49.18$\pm$0.43&3.6$\pm$0.1\\
CAM&13.70$\pm$0.76&19.90$\pm$0.53&11.5$\pm$0.2&46.72$\pm$1.35&57.86$\pm$1.03&51.5$\pm$0.7\\
CAM-KKTS&1.36$\pm$0.27&13.46$\pm$0.18&11.6$\pm$0.2&4.09$\pm$0.63&49.92$\pm$0.36&53.1$\pm$0.7\\
\midrule[\heavyrulewidth]
& \multicolumn{3}{c}{$d = 50$} & \multicolumn{3}{c}{$d = 100$} \\
\cmidrule(r){2-4} \cmidrule(l){5-7}
& SHD & nnz & time (sec) & SHD & nnz & time (sec) \\
\midrule
NOTEARS&1.44$\pm$0.39&87.81$\pm$0.37&17.3$\pm$0.9&3.28$\pm$0.88&183.95$\pm$0.60&110.3$\pm$5.6\\
NOTEARS-KKTS&0.74$\pm$0.12&87.74$\pm$0.37&27.7$\pm$0.9&2.38$\pm$0.57&183.80$\pm$0.55&195.0$\pm$5.9\\
NOTEARS&1.50$\pm$0.39&87.80$\pm$0.37&10.6$\pm$0.4&3.43$\pm$0.89&183.93$\pm$0.60&60.8$\pm$3.6\\
NOTEARS-KKTS&0.75$\pm$0.12&87.74$\pm$0.37&21.2$\pm$0.4&2.42$\pm$0.58&183.78$\pm$0.55&146.2$\pm$3.9\\
Abs&5.14$\pm$0.73&88.45$\pm$0.47&7.4$\pm$0.4&15.40$\pm$1.74&186.68$\pm$0.74&50.7$\pm$4.1\\
Abs-KKTS&1.62$\pm$0.25&87.67$\pm$0.39&17.4$\pm$0.5&4.42$\pm$0.79&183.09$\pm$0.54&133.7$\pm$4.3\\
FGS&42.94$\pm$2.79&107.87$\pm$3.31&1.1$\pm$0.0&89.18$\pm$4.25&193.06$\pm$4.79&2.3$\pm$0.1\\
FGS-KKTS&5.61$\pm$0.95&87.94$\pm$0.50&11.5$\pm$0.1&11.42$\pm$1.51&181.64$\pm$0.79&92.7$\pm$1.1\\
MMHC&54.89$\pm$1.00&54.53$\pm$0.66&21.0$\pm$4.8&121.35$\pm$1.44&114.06$\pm$1.02&194.7$\pm$106.3\\
MMHC-KKTS&10.07$\pm$1.13&87.06$\pm$0.53&28.6$\pm$4.8&20.14$\pm$2.03&181.63$\pm$0.90&255.1$\pm$106.4\\
PC&54.98$\pm$0.80&58.14$\pm$0.68&10.1$\pm$1.4&118.74$\pm$1.12&120.68$\pm$0.98&54.4$\pm$23.1\\
PC-KKTS&8.34$\pm$0.93&86.42$\pm$0.56&16.7$\pm$1.4&21.56$\pm$2.08&180.15$\pm$0.93&100.7$\pm$23.1\\
Search&23.74$\pm$1.84&85.59$\pm$0.72&14.1$\pm$0.2&50.30$\pm$2.96&178.68$\pm$0.96&152.7$\pm$2.7\\
CAM&82.51$\pm$2.01&89.62$\pm$1.43&91.0$\pm$1.0&157.53$\pm$2.44&160.91$\pm$1.95&223.3$\pm$1.9\\
CAM-KKTS&11.91$\pm$1.36&88.47$\pm$0.65&98.8$\pm$1.0&24.74$\pm$2.16&184.40$\pm$0.97&288.3$\pm$2.4\\
\bottomrule
\end{tabular}
\end{table}

\begin{table}[ht]
\caption{Results (mean $\pm$ standard error over 100 trials) on ER2 graphs with Gumbel noise, $n = 1000$.}
\label{tab:ER2_gumbel}
\centering
\small
\begin{tabular}{lrrrrrr}
\toprule
& \multicolumn{3}{c}{$d = 10$} & \multicolumn{3}{c}{$d = 30$} \\
\cmidrule(r){2-4} \cmidrule(l){5-7}
& SHD & nnz & time (sec) & SHD & nnz & time (sec) \\
\midrule
NOTEARS&0.40$\pm$0.10&9.49$\pm$0.27&1.4$\pm$0.2&0.43$\pm$0.10&29.71$\pm$0.56&8.2$\pm$0.5\\
NOTEARS-KKTS&0.07$\pm$0.04&9.56$\pm$0.27&1.5$\pm$0.2&0.12$\pm$0.03&29.69$\pm$0.56&10.5$\pm$0.5\\
NOTEARS&0.45$\pm$0.10&9.45$\pm$0.27&0.8$\pm$0.1&0.63$\pm$0.12&29.52$\pm$0.55&4.0$\pm$0.2\\
NOTEARS-KKTS&0.15$\pm$0.06&9.55$\pm$0.27&0.9$\pm$0.1&0.16$\pm$0.04&29.69$\pm$0.56&6.2$\pm$0.2\\
Abs&0.59$\pm$0.13&9.58$\pm$0.26&0.3$\pm$0.0&2.59$\pm$0.28&30.72$\pm$0.60&2.4$\pm$0.2\\
Abs-KKTS&0.18$\pm$0.08&9.52$\pm$0.27&0.5$\pm$0.0&0.23$\pm$0.06&29.68$\pm$0.56&4.7$\pm$0.2\\
FGS&2.90$\pm$0.27&10.63$\pm$0.45&0.4$\pm$0.0&7.18$\pm$0.64&33.09$\pm$0.98&0.6$\pm$0.0\\
FGS-KKTS&0.41$\pm$0.10&9.58$\pm$0.28&0.6$\pm$0.0&0.80$\pm$0.18&29.72$\pm$0.57&2.8$\pm$0.1\\
MMHC&4.33$\pm$0.29&8.82$\pm$0.20&0.4$\pm$0.0&11.36$\pm$0.51&28.05$\pm$0.41&0.8$\pm$0.0\\
MMHC-KKTS&1.13$\pm$0.21&9.54$\pm$0.27&0.5$\pm$0.0&2.72$\pm$0.38&30.27$\pm$0.61&2.2$\pm$0.0\\
PC&5.29$\pm$0.30&12.18$\pm$0.26&2.2$\pm$0.0&12.75$\pm$0.49&35.72$\pm$0.41&2.6$\pm$0.0\\
PC-KKTS&0.91$\pm$0.20&9.51$\pm$0.27&2.3$\pm$0.0&1.72$\pm$0.29&30.09$\pm$0.61&4.2$\pm$0.1\\
Search&1.92$\pm$0.30&9.50$\pm$0.27&0.2$\pm$0.0&6.25$\pm$0.48&29.71$\pm$0.57&3.3$\pm$0.1\\
CAM&9.84$\pm$0.43&12.93$\pm$0.42&10.4$\pm$0.2&30.88$\pm$0.97&41.95$\pm$1.01&48.7$\pm$0.7\\
CAM-KKTS&1.23$\pm$0.21&9.70$\pm$0.29&10.5$\pm$0.2&4.66$\pm$0.58&30.57$\pm$0.65&50.1$\pm$0.7\\
\midrule[\heavyrulewidth]
& \multicolumn{3}{c}{$d = 50$} & \multicolumn{3}{c}{$d = 100$} \\
\cmidrule(r){2-4} \cmidrule(l){5-7}
& SHD & nnz & time (sec) & SHD & nnz & time (sec) \\
\midrule
NOTEARS&1.90$\pm$0.31&50.89$\pm$0.71&39.7$\pm$3.6&2.91$\pm$0.40&100.61$\pm$0.98&220.2$\pm$12.5\\
NOTEARS-KKTS&0.62$\pm$0.14&51.09$\pm$0.70&49.3$\pm$3.7&1.02$\pm$0.19&100.68$\pm$1.00&301.6$\pm$12.8\\
NOTEARS&2.41$\pm$0.33&50.72$\pm$0.73&14.1$\pm$0.8&3.63$\pm$0.43&100.32$\pm$0.98&89.0$\pm$4.7\\
NOTEARS-KKTS&0.86$\pm$0.18&51.11$\pm$0.71&23.9$\pm$0.9&1.22$\pm$0.21&100.69$\pm$0.99&171.6$\pm$5.1\\
Abs&6.67$\pm$0.57&52.60$\pm$0.78&7.7$\pm$0.6&13.27$\pm$0.98&103.49$\pm$1.30&40.2$\pm$2.5\\
Abs-KKTS&1.31$\pm$0.30&51.23$\pm$0.72&17.1$\pm$0.7&2.55$\pm$0.33&101.25$\pm$1.02&118.1$\pm$3.0\\
FGS&10.83$\pm$0.67&54.93$\pm$1.14&0.8$\pm$0.0&20.47$\pm$0.76&106.67$\pm$1.37&1.2$\pm$0.0\\
FGS-KKTS&1.83$\pm$0.35&51.27$\pm$0.74&10.3$\pm$0.2&2.33$\pm$0.37&101.12$\pm$1.03&84.0$\pm$1.1\\
MMHC&19.93$\pm$0.59&48.54$\pm$0.52&1.0$\pm$0.0&40.88$\pm$0.86&100.67$\pm$0.72&2.5$\pm$0.0\\
MMHC-KKTS&6.06$\pm$0.60&51.92$\pm$0.76&6.8$\pm$0.1&10.22$\pm$0.77&102.97$\pm$1.09&47.2$\pm$0.6\\
PC&23.07$\pm$0.68&62.33$\pm$0.60&3.3$\pm$0.1&44.31$\pm$0.88&125.32$\pm$0.87&4.6$\pm$0.1\\
PC-KKTS&4.13$\pm$0.49&51.58$\pm$0.78&9.1$\pm$0.2&7.22$\pm$0.85&102.45$\pm$1.18&48.4$\pm$0.7\\
Search&14.61$\pm$0.81&51.13$\pm$0.74&16.4$\pm$0.4&29.84$\pm$1.26&100.76$\pm$1.05&269.1$\pm$4.2\\
CAM&51.11$\pm$1.13&71.25$\pm$1.19&90.3$\pm$0.8&100.67$\pm$1.61&140.15$\pm$1.55&210.9$\pm$1.4\\
CAM-KKTS&8.51$\pm$0.77&52.80$\pm$0.82&96.0$\pm$0.8&22.19$\pm$1.62&106.30$\pm$1.25&258.2$\pm$1.5\\
\bottomrule
\end{tabular}
\end{table}

\begin{table}[ht]
\caption{Results (mean $\pm$ standard error over 100 trials) on ER4 graphs with Gumbel noise, $n = 1000$.}
\label{tab:ER4_gumbel}
\centering
\small
\begin{tabular}{lrrrrrr}
\toprule
& \multicolumn{3}{c}{$d = 10$} & \multicolumn{3}{c}{$d = 30$} \\
\cmidrule(r){2-4} \cmidrule(l){5-7}
& SHD & nnz & time (sec) & SHD & nnz & time (sec) \\
\midrule
NOTEARS&2.00$\pm$0.26&19.24$\pm$0.32&2.6$\pm$0.1&6.11$\pm$0.89&60.59$\pm$0.76&49.9$\pm$3.6\\
NOTEARS-KKTS&0.94$\pm$0.15&19.42$\pm$0.30&2.8$\pm$0.1&3.07$\pm$0.54&60.47$\pm$0.72&54.1$\pm$3.6\\
NOTEARS&2.10$\pm$0.26&19.15$\pm$0.32&1.8$\pm$0.1&7.21$\pm$0.88&59.97$\pm$0.76&18.9$\pm$1.2\\
NOTEARS-KKTS&0.94$\pm$0.15&19.42$\pm$0.30&1.9$\pm$0.1&3.38$\pm$0.54&60.42$\pm$0.72&22.9$\pm$1.1\\
Abs&3.58$\pm$0.42&19.55$\pm$0.35&1.0$\pm$0.1&13.27$\pm$1.07&63.52$\pm$0.87&11.2$\pm$0.7\\
Abs-KKTS&1.14$\pm$0.18&19.36$\pm$0.31&1.1$\pm$0.1&5.21$\pm$0.68&60.87$\pm$0.76&15.2$\pm$0.7\\
FGS&12.29$\pm$0.66&27.85$\pm$0.83&0.5$\pm$0.0&53.42$\pm$3.56&119.62$\pm$4.82&1.3$\pm$0.1\\
FGS-KKTS&3.58$\pm$0.45&19.12$\pm$0.31&0.7$\pm$0.0&12.36$\pm$1.38&62.05$\pm$0.85&4.8$\pm$0.1\\
MMHC&14.88$\pm$0.52&12.33$\pm$0.17&0.4$\pm$0.0&42.18$\pm$1.04&35.87$\pm$0.38&2.1$\pm$0.0\\
MMHC-KKTS&5.11$\pm$0.49&18.85$\pm$0.29&0.6$\pm$0.0&23.45$\pm$1.55&64.07$\pm$0.90&4.5$\pm$0.1\\
PC&16.01$\pm$0.52&15.25$\pm$0.24&2.8$\pm$0.1&47.03$\pm$1.16&45.27$\pm$0.47&3.7$\pm$0.0\\
PC-KKTS&5.41$\pm$0.50&18.62$\pm$0.29&3.0$\pm$0.1&21.45$\pm$1.56&64.64$\pm$0.98&7.0$\pm$0.1\\
Search&8.31$\pm$0.58&16.44$\pm$0.33&0.4$\pm$0.0&31.97$\pm$1.15&54.64$\pm$0.69&7.3$\pm$0.1\\
CAM&20.66$\pm$0.47&23.68$\pm$0.36&9.9$\pm$0.1&65.42$\pm$1.36&81.94$\pm$0.96&46.3$\pm$0.5\\
CAM-KKTS&5.62$\pm$0.49&18.66$\pm$0.31&10.0$\pm$0.1&22.34$\pm$1.42&63.18$\pm$0.92&48.5$\pm$0.5\\
\midrule[\heavyrulewidth]
& \multicolumn{3}{c}{$d = 50$} & \multicolumn{3}{c}{$d = 100$} \\
\cmidrule(r){2-4} \cmidrule(l){5-7}
& SHD & nnz & time (sec) & SHD & nnz & time (sec) \\
\midrule
NOTEARS&12.05$\pm$1.28&101.03$\pm$1.20&169.4$\pm$7.6&22.97$\pm$1.92&202.58$\pm$1.61&768.4$\pm$19.5\\
NOTEARS-KKTS&5.34$\pm$0.69&100.16$\pm$1.09&187.8$\pm$7.7&10.71$\pm$1.08&201.68$\pm$1.41&922.2$\pm$19.6\\
NOTEARS&12.99$\pm$1.19&100.38$\pm$1.18&61.8$\pm$2.9&23.66$\pm$1.81&201.62$\pm$1.60&313.9$\pm$11.2\\
NOTEARS-KKTS&6.14$\pm$0.72&100.39$\pm$1.10&80.1$\pm$2.9&11.19$\pm$1.06&201.45$\pm$1.37&466.3$\pm$11.3\\
Abs&25.16$\pm$1.62&107.75$\pm$1.41&40.9$\pm$2.7&51.18$\pm$2.19&216.54$\pm$1.88&169.8$\pm$9.7\\
Abs-KKTS&9.67$\pm$0.92&101.62$\pm$1.15&58.6$\pm$2.7&19.20$\pm$1.44&204.36$\pm$1.61&312.2$\pm$9.9\\
FGS&76.40$\pm$4.77&184.04$\pm$6.73&2.2$\pm$0.1&110.21$\pm$6.39&312.84$\pm$8.48&4.2$\pm$0.2\\
FGS-KKTS&19.48$\pm$1.62&104.23$\pm$1.26&18.5$\pm$0.4&31.32$\pm$2.64&209.63$\pm$1.98&136.3$\pm$3.9\\
MMHC&64.74$\pm$1.59&60.97$\pm$0.68&1.5$\pm$0.0&119.00$\pm$2.10&129.00$\pm$0.96&14.3$\pm$0.2\\
MMHC-KKTS&38.28$\pm$2.48&108.72$\pm$1.65&11.6$\pm$0.2&76.75$\pm$3.87&224.75$\pm$2.57&103.7$\pm$1.2\\
PC&74.85$\pm$1.64&75.99$\pm$0.61&5.1$\pm$0.1&131.01$\pm$2.41&153.06$\pm$0.89&10.4$\pm$0.1\\
PC-KKTS&40.53$\pm$2.58&108.84$\pm$1.56&18.7$\pm$0.3&67.96$\pm$3.91&221.17$\pm$2.49&115.1$\pm$1.7\\
Search&51.67$\pm$1.96&92.10$\pm$0.98&39.4$\pm$0.7&97.72$\pm$2.46&189.95$\pm$1.61&630.7$\pm$7.9\\
CAM&105.83$\pm$2.02&134.77$\pm$1.41&91.7$\pm$1.0&187.69$\pm$2.68&256.63$\pm$1.77&216.3$\pm$1.9\\
CAM-KKTS&47.10$\pm$2.35&109.91$\pm$1.54&100.6$\pm$1.0&86.06$\pm$3.72&224.16$\pm$2.58&306.3$\pm$2.9\\
\bottomrule
\end{tabular}
\end{table}

\begin{table}[ht]
\caption{Results (mean $\pm$ standard error over 100 trials) on SF4 graphs with Gumbel noise, $n = 1000$.}
\label{tab:SF4_gumbel}
\centering
\small
\begin{tabular}{lrrrrrr}
\toprule
& \multicolumn{3}{c}{$d = 10$} & \multicolumn{3}{c}{$d = 30$} \\
\cmidrule(r){2-4} \cmidrule(l){5-7}
& SHD & nnz & time (sec) & SHD & nnz & time (sec) \\
\midrule
NOTEARS&0.19$\pm$0.07&13.68$\pm$0.16&0.7$\pm$0.0&0.69$\pm$0.25&50.12$\pm$0.31&6.2$\pm$0.3\\
NOTEARS-KKTS&0.08$\pm$0.03&13.72$\pm$0.16&0.9$\pm$0.0&0.26$\pm$0.06&50.08$\pm$0.30&9.0$\pm$0.3\\
NOTEARS&0.19$\pm$0.07&13.68$\pm$0.16&0.5$\pm$0.0&0.69$\pm$0.25&50.11$\pm$0.31&3.9$\pm$0.2\\
NOTEARS-KKTS&0.08$\pm$0.03&13.72$\pm$0.16&0.7$\pm$0.0&0.35$\pm$0.10&50.12$\pm$0.31&6.7$\pm$0.2\\
Abs&0.23$\pm$0.06&13.74$\pm$0.16&0.3$\pm$0.0&2.92$\pm$0.60&50.96$\pm$0.46&2.6$\pm$0.2\\
Abs-KKTS&0.12$\pm$0.04&13.72$\pm$0.16&0.4$\pm$0.0&1.07$\pm$0.33&50.34$\pm$0.36&5.4$\pm$0.2\\
FGS&3.76$\pm$0.52&16.17$\pm$0.59&0.4$\pm$0.0&26.25$\pm$2.10&67.17$\pm$2.42&0.8$\pm$0.0\\
FGS-KKTS&0.26$\pm$0.09&13.70$\pm$0.15&0.6$\pm$0.0&3.31$\pm$0.67&50.51$\pm$0.39&3.6$\pm$0.0\\
MMHC&6.28$\pm$0.39&11.66$\pm$0.14&0.5$\pm$0.0&29.08$\pm$0.80&33.04$\pm$0.47&6.5$\pm$2.5\\
MMHC-KKTS&1.03$\pm$0.23&13.90$\pm$0.18&0.6$\pm$0.0&5.32$\pm$0.86&51.29$\pm$0.48&8.4$\pm$2.5\\
PC&6.55$\pm$0.28&12.56$\pm$0.16&2.5$\pm$0.1&29.18$\pm$0.56&35.36$\pm$0.39&7.8$\pm$1.2\\
PC-KKTS&0.84$\pm$0.21&14.05$\pm$0.16&2.6$\pm$0.1&3.79$\pm$0.63&50.45$\pm$0.42&10.0$\pm$1.2\\
Search&1.92$\pm$0.30&13.53$\pm$0.17&0.3$\pm$0.0&13.86$\pm$1.37&50.33$\pm$0.47&4.4$\pm$0.1\\
CAM&19.95$\pm$0.63&22.73$\pm$0.42&11.3$\pm$0.2&64.41$\pm$1.28&65.37$\pm$1.02&50.8$\pm$0.7\\
CAM-KKTS&1.32$\pm$0.32&13.81$\pm$0.18&11.4$\pm$0.2&7.74$\pm$1.04&51.65$\pm$0.59&52.7$\pm$0.7\\
\midrule[\heavyrulewidth]
& \multicolumn{3}{c}{$d = 50$} & \multicolumn{3}{c}{$d = 100$} \\
\cmidrule(r){2-4} \cmidrule(l){5-7}
& SHD & nnz & time (sec) & SHD & nnz & time (sec) \\
\midrule
NOTEARS&1.22$\pm$0.45&88.35$\pm$0.40&24.4$\pm$1.3&2.53$\pm$0.57&184.08$\pm$0.65&157.4$\pm$7.4\\
NOTEARS-KKTS&0.43$\pm$0.12&88.34$\pm$0.38&36.9$\pm$1.4&1.53$\pm$0.50&183.66$\pm$0.59&262.6$\pm$7.8\\
NOTEARS&1.26$\pm$0.46&88.35$\pm$0.40&13.3$\pm$0.7&2.64$\pm$0.59&184.04$\pm$0.65&84.6$\pm$5.2\\
NOTEARS-KKTS&0.45$\pm$0.13&88.35$\pm$0.39&25.8$\pm$0.7&1.41$\pm$0.49&183.60$\pm$0.59&191.6$\pm$5.7\\
Abs&6.29$\pm$0.84&90.68$\pm$0.61&10.2$\pm$0.7&11.69$\pm$1.22&187.94$\pm$0.94&72.3$\pm$5.8\\
Abs-KKTS&1.82$\pm$0.52&88.73$\pm$0.45&22.7$\pm$0.8&4.33$\pm$0.82&184.23$\pm$0.64&179.3$\pm$6.4\\
FGS&43.64$\pm$2.76&107.99$\pm$3.43&1.1$\pm$0.0&90.96$\pm$3.91&188.82$\pm$4.27&2.3$\pm$0.1\\
FGS-KKTS&6.79$\pm$1.36&89.74$\pm$0.72&13.5$\pm$0.2&19.44$\pm$2.90&187.95$\pm$1.55&109.9$\pm$1.7\\
MMHC&55.35$\pm$0.95&55.43$\pm$0.63&21.3$\pm$7.8&124.12$\pm$1.43&112.37$\pm$0.96&118.1$\pm$49.3\\
MMHC-KKTS&9.09$\pm$1.25&89.84$\pm$0.65&30.7$\pm$7.8&21.86$\pm$2.21&186.37$\pm$1.13&194.8$\pm$49.4\\
PC&55.07$\pm$0.98&58.65$\pm$0.67&47.6$\pm$21.6&123.02$\pm$1.30&118.71$\pm$0.93&36.7$\pm$18.3\\
PC-KKTS&8.96$\pm$1.28&88.93$\pm$0.66&56.4$\pm$21.6&29.16$\pm$2.99&186.16$\pm$1.18&102.1$\pm$18.4\\
Search&26.20$\pm$1.82&88.97$\pm$0.73&20.2$\pm$0.4&54.18$\pm$3.33&183.98$\pm$1.10&274.7$\pm$3.5\\
CAM&104.84$\pm$1.34&99.24$\pm$1.14&89.6$\pm$1.0&212.09$\pm$2.31&182.76$\pm$2.08&211.1$\pm$1.4\\
CAM-KKTS&16.81$\pm$1.92&92.20$\pm$0.87&98.4$\pm$1.0&35.73$\pm$3.32&190.62$\pm$1.37&284.3$\pm$2.0\\
\bottomrule
\end{tabular}
\end{table}

\begin{table}[ht]
\caption{Results (mean $\pm$ standard error over 100 trials) on ER2 graphs with exponential noise, $n = 1000$.}
\label{tab:ER2_exp}
\centering
\small
\begin{tabular}{lrrrrrr}
\toprule
& \multicolumn{3}{c}{$d = 10$} & \multicolumn{3}{c}{$d = 30$} \\
\cmidrule(r){2-4} \cmidrule(l){5-7}
& SHD & nnz & time (sec) & SHD & nnz & time (sec) \\
\midrule
NOTEARS&0.58$\pm$0.12&9.87$\pm$0.30&1.8$\pm$0.1&1.10$\pm$0.12&28.84$\pm$0.49&8.0$\pm$0.6\\
NOTEARS-KKTS&0.26$\pm$0.07&9.89$\pm$0.29&2.0$\pm$0.1&0.65$\pm$0.11&28.94$\pm$0.49&9.8$\pm$0.7\\
NOTEARS&0.64$\pm$0.12&9.80$\pm$0.29&1.2$\pm$0.1&1.43$\pm$0.14&28.51$\pm$0.49&3.9$\pm$0.2\\
NOTEARS-KKTS&0.26$\pm$0.07&9.89$\pm$0.29&1.4$\pm$0.1&0.78$\pm$0.12&28.87$\pm$0.49&5.8$\pm$0.2\\
Abs&0.95$\pm$0.18&9.86$\pm$0.29&0.6$\pm$0.1&2.37$\pm$0.26&28.97$\pm$0.50&2.8$\pm$0.2\\
Abs-KKTS&0.38$\pm$0.10&9.83$\pm$0.29&0.8$\pm$0.1&1.05$\pm$0.16&28.88$\pm$0.50&5.4$\pm$0.3\\
FGS&3.81$\pm$0.41&11.61$\pm$0.59&0.4$\pm$0.0&6.87$\pm$0.45&31.59$\pm$0.75&0.6$\pm$0.0\\
FGS-KKTS&0.58$\pm$0.12&9.81$\pm$0.29&0.6$\pm$0.0&1.10$\pm$0.20&29.00$\pm$0.50&2.5$\pm$0.0\\
MMHC&4.95$\pm$0.30&9.06$\pm$0.21&0.4$\pm$0.0&11.46$\pm$0.46&27.73$\pm$0.41&0.6$\pm$0.0\\
MMHC-KKTS&1.03$\pm$0.18&9.82$\pm$0.30&0.5$\pm$0.0&2.90$\pm$0.35&29.00$\pm$0.50&1.8$\pm$0.0\\
PC&5.86$\pm$0.31&12.34$\pm$0.29&2.9$\pm$0.1&13.61$\pm$0.43&35.86$\pm$0.46&2.5$\pm$0.0\\
PC-KKTS&0.87$\pm$0.17&9.69$\pm$0.29&3.0$\pm$0.1&1.98$\pm$0.25&28.92$\pm$0.49&3.8$\pm$0.0\\
Search&2.02$\pm$0.27&9.48$\pm$0.28&0.3$\pm$0.0&7.29$\pm$0.55&28.18$\pm$0.50&3.1$\pm$0.2\\
CAM&11.48$\pm$0.43&14.25$\pm$0.46&9.5$\pm$0.1&32.43$\pm$0.79&42.58$\pm$0.78&45.4$\pm$0.5\\
CAM-KKTS&1.69$\pm$0.25&9.76$\pm$0.29&9.6$\pm$0.1&5.50$\pm$0.53&29.16$\pm$0.52&46.5$\pm$0.5\\
\midrule[\heavyrulewidth]
& \multicolumn{3}{c}{$d = 50$} & \multicolumn{3}{c}{$d = 100$} \\
\cmidrule(r){2-4} \cmidrule(l){5-7}
& SHD & nnz & time (sec) & SHD & nnz & time (sec) \\
\midrule
NOTEARS&1.68$\pm$0.27&49.26$\pm$0.66&29.6$\pm$2.7&2.92$\pm$0.28&98.12$\pm$0.82&158.4$\pm$10.1\\
NOTEARS-KKTS&0.98$\pm$0.17&49.45$\pm$0.66&36.8$\pm$2.7&1.66$\pm$0.20&98.39$\pm$0.82&215.7$\pm$10.2\\
NOTEARS&2.04$\pm$0.25&49.01$\pm$0.65&13.0$\pm$0.9&3.73$\pm$0.32&97.56$\pm$0.81&61.3$\pm$2.8\\
NOTEARS-KKTS&1.03$\pm$0.18&49.40$\pm$0.65&20.5$\pm$0.9&1.88$\pm$0.22&98.29$\pm$0.82&117.8$\pm$3.0\\
Abs&6.11$\pm$0.47&50.24$\pm$0.74&6.9$\pm$0.5&14.47$\pm$0.66&100.20$\pm$1.02&30.9$\pm$1.5\\
Abs-KKTS&1.73$\pm$0.21&49.15$\pm$0.69&14.1$\pm$0.5&3.64$\pm$0.34&98.02$\pm$0.84&87.1$\pm$1.8\\
FGS&12.38$\pm$0.87&55.58$\pm$1.42&0.8$\pm$0.0&19.81$\pm$0.75&105.59$\pm$1.17&1.2$\pm$0.0\\
FGS-KKTS&1.92$\pm$0.29&49.48$\pm$0.65&7.9$\pm$0.1&2.77$\pm$0.37&98.76$\pm$0.83&58.1$\pm$0.8\\
MMHC&19.81$\pm$0.55&47.86$\pm$0.52&1.0$\pm$0.0&38.07$\pm$0.75&101.11$\pm$0.67&3.8$\pm$0.2\\
MMHC-KKTS&6.15$\pm$0.61&49.92$\pm$0.70&5.4$\pm$0.1&9.01$\pm$0.59&99.21$\pm$0.87&34.2$\pm$0.5\\
PC&22.46$\pm$0.67&61.68$\pm$0.63&3.1$\pm$0.1&40.36$\pm$0.73&124.45$\pm$0.82&4.6$\pm$0.1\\
PC-KKTS&3.81$\pm$0.39&49.57$\pm$0.69&7.6$\pm$0.1&5.39$\pm$0.46&98.62$\pm$0.84&33.1$\pm$0.5\\
Search&13.67$\pm$0.69&47.63$\pm$0.63&10.2$\pm$0.3&27.59$\pm$1.05&95.95$\pm$0.93&114.8$\pm$2.4\\
CAM&54.15$\pm$1.08&72.17$\pm$1.05&84.1$\pm$0.7&114.26$\pm$1.28&148.78$\pm$1.25&197.6$\pm$1.2\\
CAM-KKTS&9.71$\pm$0.85&50.35$\pm$0.72&88.4$\pm$0.7&20.48$\pm$1.21&101.37$\pm$0.94&229.1$\pm$1.2\\
\bottomrule
\end{tabular}
\end{table}

\begin{table}[ht]
\caption{Results (mean $\pm$ standard error over 100 trials) on ER4 graphs with exponential noise, $n = 1000$.}
\label{tab:ER4_exp}
\centering
\small
\begin{tabular}{lrrrrrr}
\toprule
& \multicolumn{3}{c}{$d = 10$} & \multicolumn{3}{c}{$d = 30$} \\
\cmidrule(r){2-4} \cmidrule(l){5-7}
& SHD & nnz & time (sec) & SHD & nnz & time (sec) \\
\midrule
NOTEARS&2.92$\pm$0.27&18.05$\pm$0.35&2.9$\pm$0.2&6.21$\pm$0.64&58.41$\pm$0.77&37.8$\pm$2.6\\
NOTEARS-KKTS&1.80$\pm$0.19&18.28$\pm$0.34&3.1$\pm$0.2&3.55$\pm$0.43&58.11$\pm$0.76&41.5$\pm$2.6\\
NOTEARS&3.07$\pm$0.28&17.82$\pm$0.35&2.0$\pm$0.1&6.97$\pm$0.61&57.67$\pm$0.77&15.4$\pm$0.8\\
NOTEARS-KKTS&1.78$\pm$0.18&18.26$\pm$0.34&2.2$\pm$0.1&4.16$\pm$0.41&58.16$\pm$0.76&19.1$\pm$0.8\\
Abs&4.07$\pm$0.34&18.17$\pm$0.38&1.5$\pm$0.1&13.49$\pm$0.93&60.12$\pm$0.89&9.4$\pm$0.6\\
Abs-KKTS&2.24$\pm$0.25&18.29$\pm$0.34&1.8$\pm$0.1&4.99$\pm$0.53&57.96$\pm$0.77&13.2$\pm$0.7\\
FGS&13.20$\pm$0.81&29.51$\pm$0.98&0.5$\pm$0.0&49.28$\pm$3.29&113.40$\pm$4.46&1.2$\pm$0.1\\
FGS-KKTS&4.43$\pm$0.44&17.85$\pm$0.34&0.7$\pm$0.0&11.38$\pm$1.27&58.88$\pm$0.82&4.3$\pm$0.1\\
MMHC&14.81$\pm$0.53&12.04$\pm$0.19&0.5$\pm$0.0&41.11$\pm$1.13&36.41$\pm$0.42&2.1$\pm$0.0\\
MMHC-KKTS&6.01$\pm$0.52&17.35$\pm$0.34&0.6$\pm$0.0&21.05$\pm$1.39&59.09$\pm$0.92&4.2$\pm$0.0\\
PC&15.77$\pm$0.53&15.21$\pm$0.23&2.2$\pm$0.0&46.57$\pm$1.19&45.90$\pm$0.41&3.1$\pm$0.0\\
PC-KKTS&5.06$\pm$0.43&17.91$\pm$0.34&2.4$\pm$0.0&19.61$\pm$1.42&59.16$\pm$0.88&5.3$\pm$0.1\\
Search&8.68$\pm$0.60&15.94$\pm$0.35&0.5$\pm$0.0&31.58$\pm$1.39&52.62$\pm$0.70&6.9$\pm$0.2\\
CAM&20.12$\pm$0.52&23.58$\pm$0.41&10.0$\pm$0.1&67.48$\pm$1.65&82.05$\pm$1.14&45.0$\pm$0.6\\
CAM-KKTS&7.08$\pm$0.59&17.08$\pm$0.32&10.1$\pm$0.1&24.71$\pm$1.60&58.63$\pm$0.82&46.9$\pm$0.6\\
\midrule[\heavyrulewidth]
& \multicolumn{3}{c}{$d = 50$} & \multicolumn{3}{c}{$d = 100$} \\
\cmidrule(r){2-4} \cmidrule(l){5-7}
& SHD & nnz & time (sec) & SHD & nnz & time (sec) \\
\midrule
NOTEARS&13.10$\pm$1.30&97.14$\pm$0.94&149.3$\pm$7.7&21.62$\pm$1.31&196.95$\pm$1.37&660.6$\pm$20.2\\
NOTEARS-KKTS&7.80$\pm$0.81&97.17$\pm$0.88&165.0$\pm$7.8&11.89$\pm$0.89&197.80$\pm$1.32&787.8$\pm$20.5\\
NOTEARS&14.00$\pm$1.25&96.64$\pm$0.95&52.4$\pm$2.7&22.49$\pm$1.25&195.75$\pm$1.40&270.1$\pm$9.3\\
NOTEARS-KKTS&8.35$\pm$0.85&97.24$\pm$0.88&68.1$\pm$2.8&12.81$\pm$0.88&197.48$\pm$1.33&396.6$\pm$9.6\\
Abs&28.00$\pm$1.61&101.50$\pm$1.19&33.4$\pm$2.1&54.99$\pm$2.39&208.17$\pm$1.99&193.2$\pm$9.6\\
Abs-KKTS&12.10$\pm$1.03&97.58$\pm$0.92&48.9$\pm$2.1&25.86$\pm$1.67&199.68$\pm$1.59&317.3$\pm$9.7\\
FGS&77.55$\pm$5.13&184.44$\pm$6.80&2.2$\pm$0.1&117.42$\pm$7.33&321.85$\pm$9.72&4.4$\pm$0.2\\
FGS-KKTS&21.17$\pm$1.90&99.95$\pm$1.12&15.1$\pm$0.3&36.27$\pm$3.03&206.12$\pm$1.99&119.4$\pm$2.5\\
MMHC&64.34$\pm$1.47&60.67$\pm$0.60&1.5$\pm$0.0&118.32$\pm$2.12&128.96$\pm$1.10&6.8$\pm$2.7\\
MMHC-KKTS&38.44$\pm$2.22&101.47$\pm$1.21&10.2$\pm$0.1&73.86$\pm$4.18&213.01$\pm$2.28&80.6$\pm$2.9\\
PC&72.39$\pm$1.54&75.24$\pm$0.59&4.5$\pm$0.1&130.04$\pm$2.49&154.14$\pm$0.92&8.7$\pm$0.1\\
PC-KKTS&36.55$\pm$2.16&102.24$\pm$1.26&13.4$\pm$0.2&64.06$\pm$3.88&210.39$\pm$2.26&73.3$\pm$1.0\\
Search&53.39$\pm$1.73&86.99$\pm$1.03&31.9$\pm$0.6&103.06$\pm$2.46&179.18$\pm$1.36&448.1$\pm$5.9\\
CAM&104.86$\pm$1.89&134.03$\pm$1.41&86.2$\pm$1.0&199.41$\pm$2.53&261.46$\pm$1.67&203.4$\pm$1.4\\
CAM-KKTS&41.92$\pm$2.29&102.43$\pm$1.15&94.2$\pm$1.0&85.55$\pm$3.39&213.69$\pm$1.91&273.9$\pm$2.0\\
\bottomrule
\end{tabular}
\end{table}

\begin{table}[ht]
\caption{Results (mean $\pm$ standard error over 100 trials) on SF4 graphs with exponential noise, $n = 1000$.}
\label{tab:SF4_exp}
\centering
\small
\begin{tabular}{lrrrrrr}
\toprule
& \multicolumn{3}{c}{$d = 10$} & \multicolumn{3}{c}{$d = 30$} \\
\cmidrule(r){2-4} \cmidrule(l){5-7}
& SHD & nnz & time (sec) & SHD & nnz & time (sec) \\
\midrule
NOTEARS&0.62$\pm$0.12&13.68$\pm$0.17&0.7$\pm$0.0&1.21$\pm$0.21&49.70$\pm$0.38&4.8$\pm$0.2\\
NOTEARS-KKTS&0.41$\pm$0.08&13.66$\pm$0.16&0.8$\pm$0.0&0.98$\pm$0.14&49.61$\pm$0.35&7.3$\pm$0.2\\
NOTEARS&0.63$\pm$0.12&13.67$\pm$0.17&0.5$\pm$0.0&1.24$\pm$0.21&49.67$\pm$0.38&3.3$\pm$0.2\\
NOTEARS-KKTS&0.41$\pm$0.08&13.66$\pm$0.16&0.6$\pm$0.0&0.98$\pm$0.14&49.61$\pm$0.35&5.7$\pm$0.2\\
Abs&0.66$\pm$0.11&13.66$\pm$0.17&0.3$\pm$0.0&3.21$\pm$0.44&49.90$\pm$0.43&2.2$\pm$0.2\\
Abs-KKTS&0.47$\pm$0.08&13.63$\pm$0.16&0.4$\pm$0.0&1.52$\pm$0.28&49.52$\pm$0.36&4.8$\pm$0.2\\
FGS&4.69$\pm$0.57&17.21$\pm$0.66&0.4$\pm$0.0&21.93$\pm$1.75&64.47$\pm$2.22&0.8$\pm$0.0\\
FGS-KKTS&0.92$\pm$0.20&13.63$\pm$0.17&0.6$\pm$0.0&2.37$\pm$0.46&49.72$\pm$0.37&3.3$\pm$0.0\\
MMHC&6.66$\pm$0.38&11.28$\pm$0.12&0.5$\pm$0.0&28.25$\pm$0.67&33.41$\pm$0.43&13.0$\pm$5.0\\
MMHC-KKTS&1.51$\pm$0.23&13.47$\pm$0.17&0.6$\pm$0.0&4.10$\pm$0.56&49.46$\pm$0.40&14.7$\pm$5.0\\
PC&6.84$\pm$0.33&12.52$\pm$0.16&3.1$\pm$0.1&29.71$\pm$0.73&35.75$\pm$0.43&25.8$\pm$12.2\\
PC-KKTS&1.03$\pm$0.20&13.88$\pm$0.17&3.2$\pm$0.1&3.90$\pm$0.57&48.88$\pm$0.38&28.2$\pm$12.2\\
Search&2.26$\pm$0.28&13.13$\pm$0.20&0.3$\pm$0.0&11.28$\pm$1.06&48.86$\pm$0.41&3.7$\pm$0.1\\
CAM&20.58$\pm$0.62&23.29$\pm$0.42&10.5$\pm$0.2&66.45$\pm$1.33&65.58$\pm$1.03&47.3$\pm$0.7\\
CAM-KKTS&2.02$\pm$0.28&13.26$\pm$0.18&10.6$\pm$0.2&7.03$\pm$0.81&49.63$\pm$0.43&49.0$\pm$0.7\\
\midrule[\heavyrulewidth]
& \multicolumn{3}{c}{$d = 50$} & \multicolumn{3}{c}{$d = 100$} \\
\cmidrule(r){2-4} \cmidrule(l){5-7}
& SHD & nnz & time (sec) & SHD & nnz & time (sec) \\
\midrule
NOTEARS&1.60$\pm$0.23&87.37$\pm$0.42&16.6$\pm$0.8&2.91$\pm$0.57&183.84$\pm$0.58&122.3$\pm$6.1\\
NOTEARS-KKTS&1.37$\pm$0.23&87.44$\pm$0.41&26.8$\pm$0.9&2.51$\pm$0.52&183.83$\pm$0.58&210.7$\pm$6.5\\
NOTEARS&1.62$\pm$0.23&87.33$\pm$0.41&10.3$\pm$0.5&2.90$\pm$0.56&183.78$\pm$0.57&66.8$\pm$4.0\\
NOTEARS-KKTS&1.38$\pm$0.23&87.43$\pm$0.41&20.7$\pm$0.6&2.51$\pm$0.52&183.83$\pm$0.58&152.7$\pm$4.5\\
Abs&6.44$\pm$1.06&88.63$\pm$0.71&7.6$\pm$0.6&16.37$\pm$1.68&187.11$\pm$0.98&59.2$\pm$5.4\\
Abs-KKTS&2.75$\pm$0.54&87.66$\pm$0.51&17.8$\pm$0.7&6.75$\pm$0.98&183.86$\pm$0.74&147.5$\pm$5.7\\
FGS&42.04$\pm$2.52&105.22$\pm$2.87&1.1$\pm$0.0&94.46$\pm$4.56&197.61$\pm$5.14&2.4$\pm$0.1\\
FGS-KKTS&7.41$\pm$1.28&87.88$\pm$0.53&11.7$\pm$0.1&19.20$\pm$2.68&186.21$\pm$1.43&93.5$\pm$1.0\\
MMHC&54.17$\pm$1.01&54.60$\pm$0.71&119.4$\pm$66.9&122.83$\pm$1.35&99.62$\pm$0.97&61.9$\pm$43.5\\
MMHC-KKTS&10.23$\pm$1.45&86.82$\pm$0.61&127.2$\pm$66.9&26.85$\pm$2.69&181.25$\pm$1.08&121.9$\pm$43.5\\
PC&55.54$\pm$0.81&59.18$\pm$0.67&16.0$\pm$2.9&121.91$\pm$1.22&120.34$\pm$0.85&24.5$\pm$3.8\\
PC-KKTS&8.11$\pm$0.98&86.61$\pm$0.54&25.1$\pm$2.8&19.18$\pm$1.51&179.10$\pm$0.75&101.3$\pm$3.9\\
Search&22.98$\pm$2.00&85.41$\pm$0.73&14.5$\pm$0.3&51.99$\pm$3.33&181.09$\pm$1.14&160.7$\pm$2.9\\
CAM&112.37$\pm$1.77&104.55$\pm$1.53&89.8$\pm$1.0&230.42$\pm$2.11&195.68$\pm$1.83&212.5$\pm$2.1\\
CAM-KKTS&16.02$\pm$1.66&88.90$\pm$0.69&97.2$\pm$1.0&33.75$\pm$2.47&185.03$\pm$1.30&275.2$\pm$2.5\\
\bottomrule
\end{tabular}
\end{table}

Tables~\ref{tab:ER2_gaussian_n_2d}--\ref{tab:SF4_exp_n_2d} show the same results as Figure~\ref{fig:main_n_2d} in tabular form.

\begin{table}[ht]
\caption{Results (mean $\pm$ standard error over 100 trials) on ER2 graphs with Gaussian noise, $n = 2d$.}
\label{tab:ER2_gaussian_n_2d}
\centering
\small
\begin{tabular}{lrrrrrr}
\toprule
& \multicolumn{3}{c}{$d = 10$} & \multicolumn{3}{c}{$d = 30$} \\
\cmidrule(r){2-4} \cmidrule(l){5-7}
& SHD & nnz & time (sec) & SHD & nnz & time (sec) \\
\midrule
NOTEARS&5.40$\pm$0.31&12.40$\pm$0.31&1.9$\pm$0.1&3.75$\pm$0.24&29.65$\pm$0.53&22.6$\pm$0.8\\
NOTEARS-KKTS&5.32$\pm$0.29&12.86$\pm$0.32&2.0$\pm$0.1&3.48$\pm$0.25&29.96$\pm$0.54&25.2$\pm$0.8\\
NOTEARS&5.40$\pm$0.32&12.33$\pm$0.31&0.9$\pm$0.0&3.94$\pm$0.29&29.45$\pm$0.53&4.8$\pm$0.3\\
NOTEARS-KKTS&5.29$\pm$0.29&12.85$\pm$0.32&1.0$\pm$0.0&3.52$\pm$0.24&29.77$\pm$0.52&7.4$\pm$0.3\\
Abs&5.37$\pm$0.30&12.47$\pm$0.31&0.4$\pm$0.0&6.20$\pm$0.39&29.84$\pm$0.63&2.1$\pm$0.1\\
Abs-KKTS&5.42$\pm$0.28&12.85$\pm$0.31&0.6$\pm$0.0&4.21$\pm$0.31&29.76$\pm$0.56&4.7$\pm$0.1\\
FGS&15.27$\pm$0.44&25.78$\pm$0.60&0.4$\pm$0.0&16.04$\pm$0.89&30.28$\pm$0.82&0.5$\pm$0.0\\
FGS-KKTS&6.09$\pm$0.32&12.37$\pm$0.29&0.5$\pm$0.0&6.95$\pm$0.54&30.28$\pm$0.58&3.0$\pm$0.0\\
MMHC&8.78$\pm$0.33&5.01$\pm$0.12&0.4$\pm$0.0&20.03$\pm$0.54&21.90$\pm$0.28&0.5$\pm$0.0\\
MMHC-KKTS&6.71$\pm$0.37&11.98$\pm$0.35&0.5$\pm$0.0&9.72$\pm$0.70&30.35$\pm$0.62&2.1$\pm$0.0\\
PC&10.45$\pm$0.33&6.62$\pm$0.21&2.7$\pm$0.1&26.29$\pm$0.58&30.87$\pm$0.42&2.4$\pm$0.0\\
PC-KKTS&6.97$\pm$0.38&11.73$\pm$0.31&2.8$\pm$0.1&9.66$\pm$0.71&29.66$\pm$0.60&4.1$\pm$0.1\\
Search&6.56$\pm$0.37&11.77$\pm$0.28&0.3$\pm$0.0&11.59$\pm$0.70&29.04$\pm$0.55&5.9$\pm$0.1\\
\midrule[\heavyrulewidth]
& \multicolumn{3}{c}{$d = 50$} & \multicolumn{3}{c}{$d = 100$} \\
\cmidrule(r){2-4} \cmidrule(l){5-7}
& SHD & nnz & time (sec) & SHD & nnz & time (sec) \\
\midrule
NOTEARS&3.88$\pm$0.32&49.32$\pm$0.69&60.7$\pm$1.9&5.06$\pm$0.49&96.05$\pm$0.90&222.0$\pm$7.2\\
NOTEARS-KKTS&2.74$\pm$0.22&49.25$\pm$0.66&70.9$\pm$1.9&2.92$\pm$0.22&96.41$\pm$0.90&289.8$\pm$7.4\\
NOTEARS&4.42$\pm$0.33&48.94$\pm$0.67&12.4$\pm$0.6&5.28$\pm$0.50&95.84$\pm$0.92&66.3$\pm$2.8\\
NOTEARS-KKTS&2.95$\pm$0.22&49.15$\pm$0.67&22.1$\pm$0.6&2.88$\pm$0.20&96.36$\pm$0.89&131.8$\pm$2.9\\
Abs&9.23$\pm$0.56&49.88$\pm$0.77&5.2$\pm$0.3&18.05$\pm$0.76&95.52$\pm$1.14&26.5$\pm$1.4\\
Abs-KKTS&3.93$\pm$0.30&49.00$\pm$0.65&15.3$\pm$0.3&6.18$\pm$0.40&95.84$\pm$0.93&95.4$\pm$1.6\\
FGS&20.92$\pm$0.70&52.16$\pm$0.89&0.6$\pm$0.0&31.17$\pm$0.82&103.36$\pm$1.16&0.9$\pm$0.0\\
FGS-KKTS&6.86$\pm$0.65&49.64$\pm$0.76&10.4$\pm$0.1&6.82$\pm$0.56&96.95$\pm$0.93&66.7$\pm$0.7\\
MMHC&29.96$\pm$0.70&34.56$\pm$0.37&0.7$\pm$0.0&50.80$\pm$0.87&79.48$\pm$0.63&4.5$\pm$0.2\\
MMHC-KKTS&14.35$\pm$0.96&50.08$\pm$0.78&6.4$\pm$0.1&19.22$\pm$1.16&97.61$\pm$1.05&45.9$\pm$0.5\\
PC&39.35$\pm$0.79&55.57$\pm$0.62&2.5$\pm$0.0&61.35$\pm$1.08&117.41$\pm$0.88&3.9$\pm$0.1\\
PC-KKTS&11.29$\pm$0.71&49.58$\pm$0.71&8.9$\pm$0.1&14.75$\pm$1.00&96.78$\pm$1.03&66.6$\pm$1.0\\
Search&18.11$\pm$0.82&48.58$\pm$0.67&26.8$\pm$0.2&31.22$\pm$1.22&94.18$\pm$0.99&282.7$\pm$3.0\\
\bottomrule
\end{tabular}
\end{table}

\begin{table}[ht]
\caption{Results (mean $\pm$ standard error over 100 trials) on ER4 graphs with Gumbel noise, $n = 2d$.}
\label{tab:ER4_gumbel_n_2d}
\centering
\small
\begin{tabular}{lrrrrrr}
\toprule
& \multicolumn{3}{c}{$d = 10$} & \multicolumn{3}{c}{$d = 30$} \\
\cmidrule(r){2-4} \cmidrule(l){5-7}
& SHD & nnz & time (sec) & SHD & nnz & time (sec) \\
\midrule
NOTEARS&9.49$\pm$0.40&21.11$\pm$0.35&2.9$\pm$0.2&17.19$\pm$1.01&64.34$\pm$0.90&50.3$\pm$2.4\\
NOTEARS-KKTS&9.16$\pm$0.37&21.84$\pm$0.32&3.1$\pm$0.2&13.63$\pm$0.63&63.82$\pm$0.77&53.9$\pm$2.5\\
NOTEARS&9.43$\pm$0.40&20.95$\pm$0.35&2.1$\pm$0.1&18.56$\pm$1.05&64.10$\pm$0.90&18.5$\pm$0.8\\
NOTEARS-KKTS&9.12$\pm$0.38&21.84$\pm$0.31&2.2$\pm$0.1&14.00$\pm$0.70&64.12$\pm$0.86&21.9$\pm$0.8\\
Abs&9.37$\pm$0.41&21.11$\pm$0.33&0.9$\pm$0.1&23.71$\pm$1.20&67.86$\pm$0.98&8.2$\pm$0.5\\
Abs-KKTS&8.68$\pm$0.32&21.82$\pm$0.32&1.1$\pm$0.1&16.51$\pm$0.90&64.77$\pm$0.84&11.4$\pm$0.5\\
FGS&20.34$\pm$0.47&32.84$\pm$0.56&0.4$\pm$0.0&50.57$\pm$1.54&64.90$\pm$1.33&0.6$\pm$0.0\\
FGS-KKTS&11.25$\pm$0.43&20.44$\pm$0.33&0.5$\pm$0.0&32.61$\pm$1.66&67.68$\pm$1.05&3.7$\pm$0.1\\
MMHC&17.81$\pm$0.37&5.74$\pm$0.12&0.4$\pm$0.0&50.29$\pm$0.86&23.78$\pm$0.28&0.5$\pm$0.0\\
MMHC-KKTS&12.20$\pm$0.51&19.90$\pm$0.35&0.5$\pm$0.0&43.91$\pm$1.73&69.74$\pm$1.12&2.4$\pm$0.0\\
PC&20.10$\pm$0.34&7.22$\pm$0.26&2.2$\pm$0.0&58.42$\pm$0.86&35.84$\pm$0.45&2.3$\pm$0.0\\
PC-KKTS&13.24$\pm$0.47&19.56$\pm$0.40&2.4$\pm$0.0&42.51$\pm$1.65&68.63$\pm$1.15&4.2$\pm$0.0\\
Search&13.37$\pm$0.54&17.81$\pm$0.40&0.4$\pm$0.0&42.55$\pm$1.16&55.10$\pm$0.73&8.4$\pm$0.1\\
\midrule[\heavyrulewidth]
& \multicolumn{3}{c}{$d = 50$} & \multicolumn{3}{c}{$d = 100$} \\
\cmidrule(r){2-4} \cmidrule(l){5-7}
& SHD & nnz & time (sec) & SHD & nnz & time (sec) \\
\midrule
NOTEARS&20.49$\pm$1.39&102.65$\pm$1.26&156.3$\pm$3.8&30.60$\pm$1.89&202.11$\pm$1.60&660.5$\pm$10.3\\
NOTEARS-KKTS&11.38$\pm$0.76&100.56$\pm$1.09&171.1$\pm$3.8&16.17$\pm$1.29&200.32$\pm$1.52&771.9$\pm$10.4\\
NOTEARS&21.26$\pm$1.29&101.84$\pm$1.19&52.3$\pm$1.7&30.92$\pm$1.89&201.33$\pm$1.63&265.0$\pm$9.0\\
NOTEARS-KKTS&13.37$\pm$0.95&101.33$\pm$1.15&66.2$\pm$1.7&16.47$\pm$1.27&200.17$\pm$1.50&375.2$\pm$9.2\\
Abs&33.02$\pm$1.93&109.21$\pm$1.56&28.9$\pm$1.8&55.70$\pm$2.25&214.18$\pm$1.93&142.7$\pm$7.7\\
Abs-KKTS&17.00$\pm$1.14&102.61$\pm$1.20&43.2$\pm$1.9&25.22$\pm$1.55&203.79$\pm$1.71&253.4$\pm$7.7\\
FGS&76.30$\pm$2.29&121.86$\pm$2.14&1.0$\pm$0.0&108.31$\pm$3.42&246.38$\pm$3.90&2.0$\pm$0.0\\
FGS-KKTS&46.87$\pm$2.65&113.11$\pm$1.74&13.4$\pm$0.3&57.28$\pm$3.40&214.57$\pm$2.30&104.5$\pm$1.9\\
MMHC&78.41$\pm$1.34&44.90$\pm$0.49&1.6$\pm$0.0&141.90$\pm$1.91&106.56$\pm$0.73&8.4$\pm$0.1\\
MMHC-KKTS&69.17$\pm$3.13&118.96$\pm$1.79&9.0$\pm$0.1&118.61$\pm$4.77&238.61$\pm$3.20&84.0$\pm$1.4\\
PC&93.50$\pm$1.33&66.67$\pm$0.64&2.5$\pm$0.0&163.85$\pm$2.18&140.41$\pm$1.03&4.3$\pm$0.1\\
PC-KKTS&74.18$\pm$3.03&120.62$\pm$1.94&9.8$\pm$0.1&124.97$\pm$5.04&239.79$\pm$3.10&63.0$\pm$0.7\\
Search&65.70$\pm$2.02&94.82$\pm$1.07&47.1$\pm$0.5&111.35$\pm$2.53&190.97$\pm$1.42&700.1$\pm$4.3\\
\bottomrule
\end{tabular}
\end{table}

\begin{table}[ht]
\caption{Results (mean $\pm$ standard error over 100 trials) on SF4 graphs with exponential noise, $n = 2d$.}
\label{tab:SF4_exp_n_2d}
\centering
\small
\begin{tabular}{lrrrrrr}
\toprule
& \multicolumn{3}{c}{$d = 10$} & \multicolumn{3}{c}{$d = 30$} \\
\cmidrule(r){2-4} \cmidrule(l){5-7}
& SHD & nnz & time (sec) & SHD & nnz & time (sec) \\
\midrule
NOTEARS&7.02$\pm$0.35&15.39$\pm$0.27&1.3$\pm$0.1&7.63$\pm$0.37&50.74$\pm$0.49&17.5$\pm$0.3\\
NOTEARS-KKTS&6.85$\pm$0.34&15.54$\pm$0.26&1.5$\pm$0.1&7.27$\pm$0.34&50.44$\pm$0.45&20.1$\pm$0.4\\
NOTEARS&6.95$\pm$0.35&15.26$\pm$0.27&0.8$\pm$0.0&7.50$\pm$0.36&50.48$\pm$0.49&4.8$\pm$0.1\\
NOTEARS-KKTS&6.82$\pm$0.34&15.53$\pm$0.26&0.9$\pm$0.0&7.25$\pm$0.34&50.30$\pm$0.44&7.4$\pm$0.2\\
Abs&6.30$\pm$0.32&15.18$\pm$0.26&0.3$\pm$0.0&9.37$\pm$0.52&50.30$\pm$0.57&2.3$\pm$0.1\\
Abs-KKTS&6.70$\pm$0.34&15.49$\pm$0.26&0.5$\pm$0.0&7.83$\pm$0.41&50.14$\pm$0.45&4.9$\pm$0.1\\
FGS&17.57$\pm$0.55&29.69$\pm$0.70&0.3$\pm$0.0&37.53$\pm$0.55&28.26$\pm$0.56&0.5$\pm$0.0\\
FGS-KKTS&7.11$\pm$0.34&15.13$\pm$0.27&0.4$\pm$0.0&14.93$\pm$0.89&50.26$\pm$0.58&3.2$\pm$0.0\\
MMHC&12.49$\pm$0.22&4.16$\pm$0.13&0.4$\pm$0.0&43.18$\pm$0.42&15.28$\pm$0.25&0.5$\pm$0.0\\
MMHC-KKTS&7.67$\pm$0.34&14.64$\pm$0.29&0.5$\pm$0.0&18.58$\pm$1.19&50.29$\pm$0.71&2.1$\pm$0.0\\
PC&13.96$\pm$0.16&5.82$\pm$0.21&2.2$\pm$0.0&47.17$\pm$0.44&24.56$\pm$0.41&2.2$\pm$0.0\\
PC-KKTS&7.19$\pm$0.30&14.74$\pm$0.25&2.3$\pm$0.0&17.10$\pm$0.90&51.20$\pm$0.54&3.8$\pm$0.0\\
Search&7.95$\pm$0.35&13.95$\pm$0.27&0.3$\pm$0.0&19.55$\pm$1.09&47.54$\pm$0.53&4.8$\pm$0.1\\
\midrule[\heavyrulewidth]
& \multicolumn{3}{c}{$d = 50$} & \multicolumn{3}{c}{$d = 100$} \\
\cmidrule(r){2-4} \cmidrule(l){5-7}
& SHD & nnz & time (sec) & SHD & nnz & time (sec) \\
\midrule
NOTEARS&5.94$\pm$0.29&86.62$\pm$0.48&49.7$\pm$0.9&6.91$\pm$0.65&182.18$\pm$0.65&203.0$\pm$3.6\\
NOTEARS-KKTS&5.98$\pm$0.33&86.69$\pm$0.49&60.1$\pm$0.9&6.12$\pm$0.61&182.02$\pm$0.64&281.6$\pm$4.0\\
NOTEARS&6.04$\pm$0.29&86.52$\pm$0.48&12.5$\pm$0.5&6.97$\pm$0.65&182.15$\pm$0.65&73.7$\pm$2.7\\
NOTEARS-KKTS&5.94$\pm$0.33&86.56$\pm$0.49&22.7$\pm$0.6&6.15$\pm$0.61&181.99$\pm$0.64&150.5$\pm$3.1\\
Abs&13.46$\pm$1.33&87.19$\pm$0.81&7.0$\pm$0.5&19.81$\pm$1.44&182.82$\pm$1.03&48.4$\pm$3.4\\
Abs-KKTS&9.41$\pm$0.86&86.80$\pm$0.60&17.2$\pm$0.5&11.04$\pm$1.12&181.49$\pm$0.69&123.3$\pm$3.7\\
FGS&63.93$\pm$0.80&50.98$\pm$0.90&0.6$\pm$0.0&121.68$\pm$1.82&119.36$\pm$1.80&1.0$\pm$0.0\\
FGS-KKTS&22.54$\pm$1.70&86.05$\pm$0.82&10.9$\pm$0.1&50.50$\pm$3.58&180.72$\pm$1.70&75.4$\pm$1.0\\
MMHC&73.67$\pm$0.56&29.46$\pm$0.34&1.5$\pm$0.0&145.51$\pm$1.12&74.83$\pm$0.62&5.4$\pm$0.0\\
MMHC-KKTS&31.46$\pm$2.11&86.52$\pm$0.89&7.3$\pm$0.1&75.14$\pm$4.37&181.41$\pm$1.49&55.4$\pm$0.6\\
PC&77.16$\pm$0.57&45.53$\pm$0.53&2.3$\pm$0.0&146.92$\pm$1.05&100.22$\pm$0.71&3.4$\pm$0.1\\
PC-KKTS&35.87$\pm$2.20&88.69$\pm$0.85&8.2$\pm$0.1&66.27$\pm$4.40&177.25$\pm$1.60&43.5$\pm$0.5\\
Search&28.94$\pm$1.77&81.59$\pm$0.67&23.5$\pm$0.3&57.61$\pm$3.44&173.72$\pm$1.00&253.1$\pm$2.6\\
\bottomrule
\end{tabular}
\end{table}

\end{document}